\newtheorem{theorem}{Theorem}
\newtheorem{lemma}{Lemma}
\def\input@path{{./}}
\renewcommand{\d}[1]{\mbox{\boldmath$#1$}}
\newcommand{\trans}[0]{^{\sf T}}
\newcommand{\q}[1]{{{\bf #1}}}
 \newcommand{\m}[1]{{\mbox{{\fontencoding{T1}\sffamily{\itshape #1}}}}}
\newcommand{\mq}[1]{{\q #1}}
\renewcommand{\vec}[0]{\mbox{vec}}
\newcommand{\mC}{\mathchoice {\setbox0=\hbox{$\displaystyle\rm
C$}\hbox{\hbox to0pt{\kern0.4\wd0\vrule height0.9\ht0\hss}\box0}}
{\setbox0=\hbox{$\textstyle\rm C$}\hbox{\hbox
to0pt{\kern0.4\wd0\vrule height0.9\ht0\hss}\box0}}
{\setbox0=\hbox{$\scriptstyle\rm C$}\hbox{\hbox
to0pt{\kern0.4\wd0\vrule height0.9\ht0\hss}\box0}}
{\setbox0=\hbox{$\scriptscriptstyle\rm C$}\hbox{\hbox
to0pt{\kern0.4\wd0\vrule height0.9\ht0\hss}\box0}}}
\newcommand{\mG}{\mathchoice {\setbox0=\hbox{$\displaystyle\rm
G$}\hbox{\hbox to0pt{\kern0.4\wd0\vrule height0.9\ht0\hss}\box0}}
{\setbox0=\hbox{$\textstyle\rm G$}\hbox{\hbox
to0pt{\kern0.4\wd0\vrule height0.9\ht0\hss}\box0}}
{\setbox0=\hbox{$\scriptstyle\rm G$}\hbox{\hbox
to0pt{\kern0.4\wd0\vrule height0.9\ht0\hss}\box0}}
{\setbox0=\hbox{$\scriptscriptstyle\rm G$}\hbox{\hbox
to0pt{\kern0.4\wd0\vrule height0.9\ht0\hss}\box0}}}
\newcommand{\mJ}{\mathchoice {\setbox0=\hbox{$\displaystyle\rm
J$}\hbox{\hbox to0pt{\kern0.4\wd0\vrule height0.9\ht0\hss}\box0}}
{\setbox0=\hbox{$\textstyle\rm J$}\hbox{\hbox
to0pt{\kern0.4\wd0\vrule height0.9\ht0\hss}\box0}}
{\setbox0=\hbox{$\scriptstyle\rm J$}\hbox{\hbox
to0pt{\kern0.4\wd0\vrule height0.9\ht0\hss}\box0}}
{\setbox0=\hbox{$\scriptscriptstyle\rm J$}\hbox{\hbox
to0pt{\kern0.4\wd0\vrule height0.9\ht0\hss}\box0}}}
\newcommand{\mO}{\mathchoice {\setbox0=\hbox{$\displaystyle\rm
O$}\hbox{\hbox to0pt{\kern0.4\wd0\vrule height0.9\ht0\hss}\box0}}
{\setbox0=\hbox{$\textstyle\rm O$}\hbox{\hbox
to0pt{\kern0.4\wd0\vrule height0.9\ht0\hss}\box0}}
{\setbox0=\hbox{$\scriptstyle\rm O$}\hbox{\hbox
to0pt{\kern0.4\wd0\vrule height0.9\ht0\hss}\box0}}
{\setbox0=\hbox{$\scriptscriptstyle\rm O$}\hbox{\hbox
to0pt{\kern0.4\wd0\vrule height0.9\ht0\hss}\box0}}}
\newcommand{\mQ}{\mathchoice {\setbox0=\hbox{$\displaystyle\rm
Q$}\hbox{\raise 0.15\ht0\hbox to0pt{\kern0.4\wd0\vrule
height0.8\ht0\hss}\box0}}{\setbox0=\hbox{$\textstyle\rm Q$}\hbox{\raise
0.15\ht0\hbox to0pt{\kern0.4\wd0\vrule height0.8\ht0\hss}\box0}}
{\setbox0=\hbox{$\scriptstyle\rm Q$}\hbox{\raise 0.15\ht0\hbox
to0pt{\kern0.4\wd0\vrule height0.7\ht0\hss}\box0}}{\setbox0=
\hbox{$\scriptscriptstyle\rm Q$}\hbox{\raise 0.15\ht0\hbox
to0pt{\kern0.4\wd0\vrule height0.7\ht0\hss}\box0}}}
\newcommand{\mS}{\mathchoice
{\setbox0=\hbox{$\displaystyle     \rm S$}\hbox{\raise0.5\ht0\hbox
to0pt{\kern0.35\wd0\vrule height0.45\ht0\hss}\hbox
to0pt{\kern0.55\wd0\vrule height0.5\ht0\hss}\box0}}
{\setbox0=\hbox{$\textstyle        \rm S$}\hbox{\raise0.5\ht0\hbox
to0pt{\kern0.35\wd0\vrule height0.45\ht0\hss}\hbox
to0pt{\kern0.55\wd0\vrule height0.5\ht0\hss}\box0}}
{\setbox0=\hbox{$\scriptstyle      \rm S$}\hbox{\raise0.5\ht0\hbox
to0pt{\kern0.35\wd0\vrule height0.45\ht0\hss}\raise0.05\ht0\hbox
to0pt{\kern0.5\wd0\vrule height0.45\ht0\hss}\box0}}
{\setbox0=\hbox{$\scriptscriptstyle\rm S$}\hbox{\raise0.5\ht0\hbox
to0pt{\kern0.4\wd0\vrule height0.45\ht0\hss}\raise0.05\ht0\hbox
to0pt{\kern0.55\wd0\vrule height0.45\ht0\hss}\box0}}}
\newcommand{\mT}{\mathchoice {\setbox0=\hbox{$\displaystyle\rm
T$}\hbox{\hbox to0pt{\kern0.3\wd0\vrule height0.9\ht0\hss}\box0}}
{\setbox0=\hbox{$\textstyle\rm T$}\hbox{\hbox
to0pt{\kern0.3\wd0\vrule height0.9\ht0\hss}\box0}}
{\setbox0=\hbox{$\scriptstyle\rm T$}\hbox{\hbox
to0pt{\kern0.3\wd0\vrule height0.9\ht0\hss}\box0}}
{\setbox0=\hbox{$\scriptscriptstyle\rm T$}\hbox{\hbox
to0pt{\kern0.3\wd0\vrule height0.9\ht0\hss}\box0}}}
\newcommand{\mU}{\mathchoice {\setbox0=\hbox{$\displaystyle\rm
U$}\hbox{\hbox to0pt{\kern0.4\wd0\vrule height0.9\ht0\hss}\box0}}
{\setbox0=\hbox{$\textstyle\rm U$}\hbox{\hbox
to0pt{\kern0.4\wd0\vrule height0.9\ht0\hss}\box0}}
{\setbox0=\hbox{$\scriptstyle\rm U$}\hbox{\hbox
to0pt{\kern0.4\wd0\vrule height0.9\ht0\hss}\box0}}
{\setbox0=\hbox{$\scriptscriptstyle\rm U$}\hbox{\hbox
to0pt{\kern0.4\wd0\vrule height0.9\ht0\hss}\box0}}}
\renewcommand\expandafter\subsection\expandafter
	\newcommand\@fb@secFB{\FloatBarrier
		\gdef\@fb@afterHHook{\@fb@topbarrier \gdef\@fb@afterHHook{}}}%
	\g@addto@macro\@afterheading{\@fb@afterHHook}%
	\gdef\@fb@afterHHook{}%
\journal{ArXiv submission}
\begin{document}

\begin{frontmatter}

\title{Refractive Geometry for Underwater Domes}
%\tnotetext[mytitlenote]{Fully documented templates are available in the elsarticle package on \href{http://www.ctan.org/tex-archive/macros/latex/contrib/elsarticle}{CTAN}.}

%% Group authors per affiliation:
%\author{Elsevier\fnref{myfootnote}}
%\address{Radarweg 29, Amsterdam}
%\fntext[myfootnote]{Since 1880.}

%% or include affiliations in footnotes:
\author[mymainaddress]{Mengkun She\corref{mycorrespondingauthor}}
\cortext[mycorrespondingauthor]{Corresponding author}
\ead{mshe@geomar.de}

\author[mymainaddress]{David Nakath}

\author[mymainaddress]{Yifan Song}

\author[mymainaddress]{Kevin K{\"o}ser}

%\author[mysecondaryaddress]{Global Customer Service\corref{mycorrespondingauthor}}
%\cortext[mycorrespondingauthor]{Corresponding author}
%\ead{support@elsevier.com}

\address[mymainaddress]{Oceanic Machine Vision,\\ GEOMAR Helmholtz Centre for Ocean Research Kiel, Kiel, Germany\\
	Tel.: ++49 431 600 2595\\}
%\address[mysecondaryaddress]{360 Park Avenue South, New York}

\begin{abstract}
Underwater cameras are typically placed behind glass windows to protect them from the water. Spherical glass, a dome port, is well suited for high water pressures at great depth, allows for a large field of view, and avoids refraction if a pinhole camera is positioned exactly at the sphere's center. Adjusting a real lens perfectly to the dome center is a challenging task, both in terms of how to actually guide the centering process (e.g. visual servoing) and how to measure the alignment quality, but also, how to mechanically perform the alignment. Consequently, such systems are prone to being decentered by some offset, leading to challenging refraction patterns at the sphere that invalidate the pinhole camera model.	
We show that the overall camera system becomes an axial camera, even for thick domes as used for deep sea exploration and provide a non-iterative way to compute the center of refraction without requiring knowledge of exact air, glass or water properties. We also analyze the refractive geometry at the sphere, looking at effects such as forward- vs. backward decentering, iso-refraction curves and obtain a 6th-degree polynomial equation for forward projection of 3D points in thin domes.
We then propose a pure underwater calibration procedure to estimate the decentering from multiple images. This estimate can either be used during adjustment to guide the mechanical position of the lens, or can be considered in photogrammetric underwater applications.
\end{abstract}

\begin{keyword}
Underwater camera \sep Dome port \sep Refraction \sep Spherical refractive geometry \sep Calibration \sep Decentering
\end{keyword}

\end{frontmatter}

\nolinenumbers

\section{Introduction}
%\the\textwidth
More than two-thirds of Earth's surface is covered by water -- or more specifically -- by the oceans.
Underwater imaging, vision, photogrammetry and robotic applications include recovering sunken cultural heritage, offshore installations, habitat mapping, resource estimation, deposited munition monitoring, optical quantification of processes in the ocean, human impact on ecosystems and last but not least exploration of the last uncharted terrain on Earth: the deep sea.

Underwater cameras are usually placed inside pressure housings and observe the outer world through some kind of \textit{window}.
Incident light rays travel through water, glass, and air before being sensed by the camera and each time they traverse media with different optical densities their direction might be changed. In particular for flat glass interfaces this requires more complex geometric reasoning \cite{Treibitz_12_flatrefractivegeometry, Agrawal_2012-UnwCalib,maas2015accuracy,jordt_2016_refractive}, such that alternative configurations are being explored.
\begin{figure*}[t]
	\begin{center}
		\includegraphics[height=2.0cm]{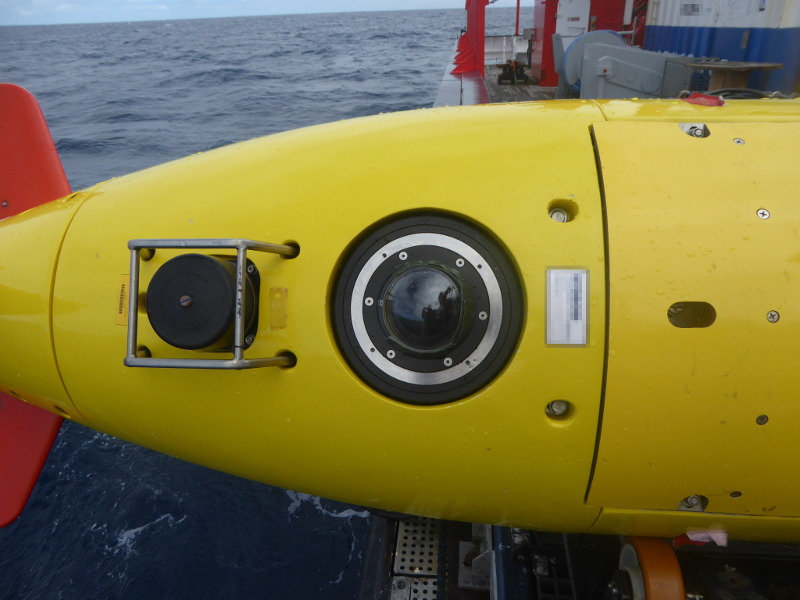}
		\includegraphics[height=2.0cm]{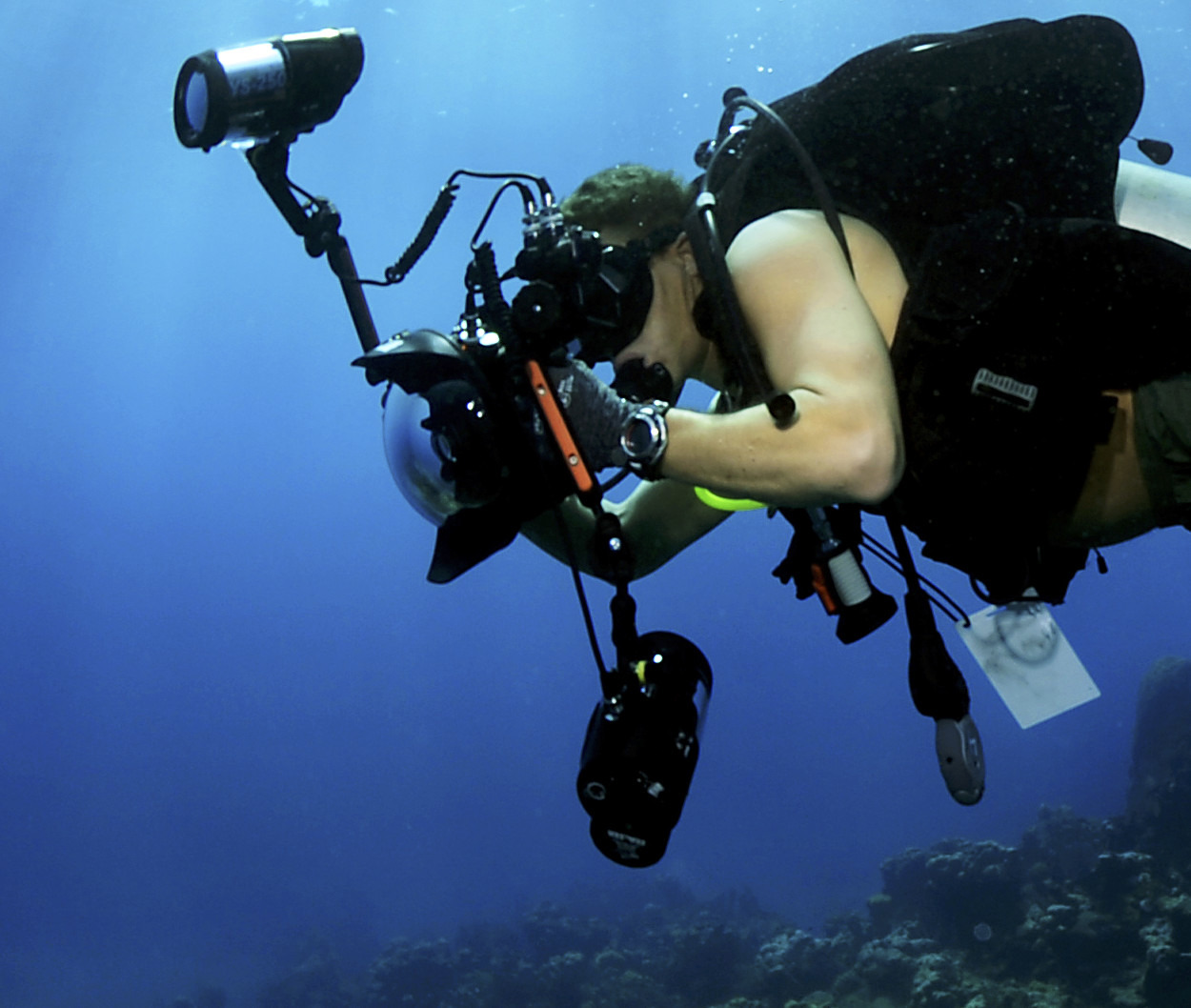}
		\includegraphics[height=2.0cm]{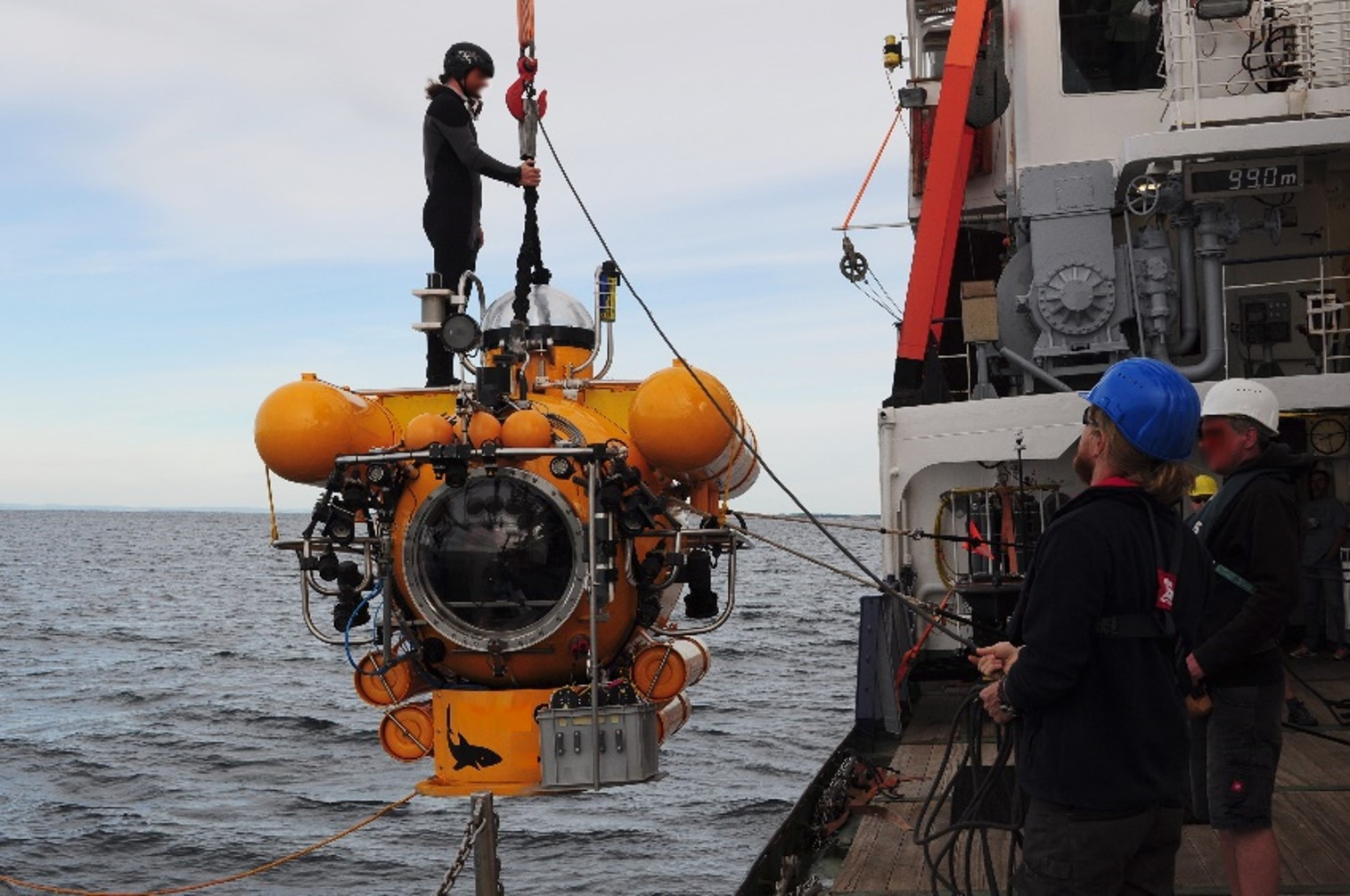}						
		\includegraphics[height=2.0cm]{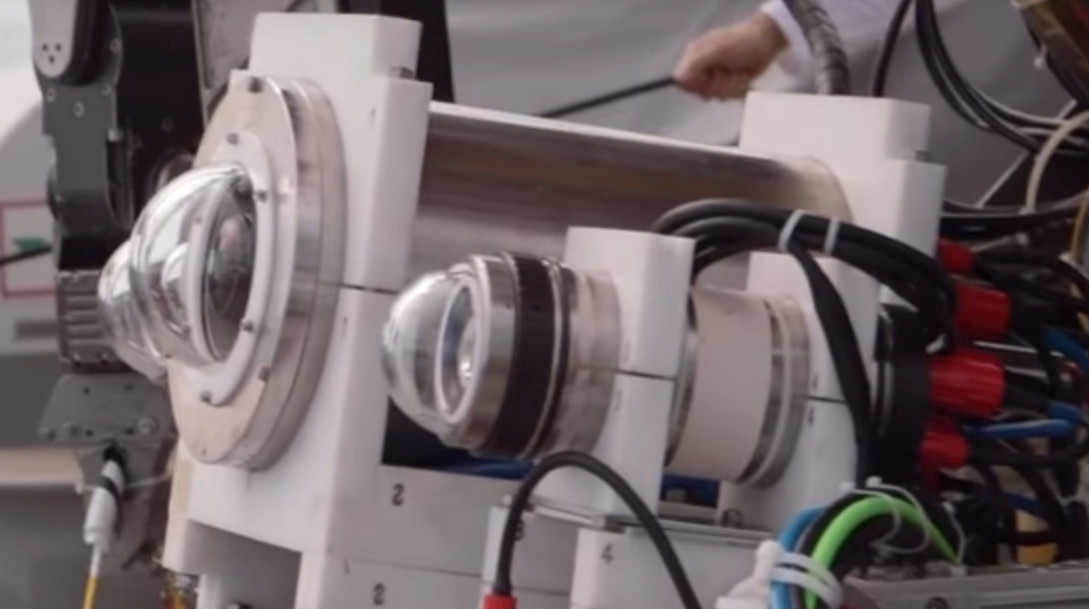}
		\caption{From left to right: Autonomous underwater vehicle with dome port camera, an underwater photographer with dome for DSLR, submersible with huge spherical window, multi dome port camera system mounted on remotely operated deep sea robot. These domes can avoid refraction effects when all principal rays of the lenses pass the glass spheres in direction of the local normal. Perfectly centering the lenses in the dome is however a hard task, since it is neither obvious where the center of the dome is, nor where the nodal point of the camera is, nor how good an alignment has been achieved.} \label{fig:domeport}
	\end{center}
\end{figure*}
In oceanographic applications, but also for professional photographers, so-called {\em dome port systems} have become popular that rely on a spherical window to avoid view limitations.
They are also mechanically more stable and relatively thin spherical glass can resist extremely high pressure at several kilometers of depth. In principle, a well centered lens behind the dome port is able to avoid refraction, but it still remains challenging as both optical centers are physically invisible, especially for large lenses and rugged equipment as shown for example in Fig. \ref{fig:domeport}. 
If the lens is not well centered, again refraction will occur as with flat ports, but now the decentered dome geometry produces even more complex refraction effects in the 2D image, a threat that might discourage people from using dome systems. 
Current practical solutions adopt standard pinhole calibration parameters to compensate the remaining refraction for an approximately centered lens, which allows to achieve high accuracy and has been widely applied in shallow water survey tasks \cite{drap2012underwater,menna2013photogrammetric,figueira2015accuracy,nocerino2020coral}. 
However, this solution is usually performed at an ideal working distance with a well designed control network, which is difficult to achieve in less controllable scenarios such as  robotic mapping applications in the deep ocean. 

In this contribution, we geometrically analyze refraction at the sphere for the common case that the lens is not exactly centered with the dome and show that the system is actually an axial camera. 
We derive a chessboard-based direct solver for the refraction axis, distinguish positive and negative decentering, and propose a complementary decentering calibration procedure that can support mechanical adjustment of the lens to avoid refraction in the first place.
In case entirely avoiding refraction is not possible, the analyses and the methods for estimating the decentering presented in this paper are intended to facilitate further research, whether and when it could be benefitial to explicitely consider such physical parameters in photogrammetric surveys.

%This is important, as we exemplify in Fig. \ref{fig:toy_example_reconstruction_error}, since already a mild decentering of a few millimeters can cause bent models and more than ten centimeters of bias in 3D reconstruction in practical underwater reconstruction settings of a few meters distance.
\begin{figure*}[!h]
	\begin{center}
%		\subfloat[Forward-backward decentering]{
%			%\includegraphics[width=0.48\textwidth]{img/dome_vs_unw_persp/plot_forward_-2mm_to_2mm.pdf}
%			%\includegraphics[width=0.48\textwidth]{img/dome_vs_unw_persp/plot_forward_-3cm_to_3cm.pdf}			
%		}
	\def\svgwidth{0.4\textwidth}
	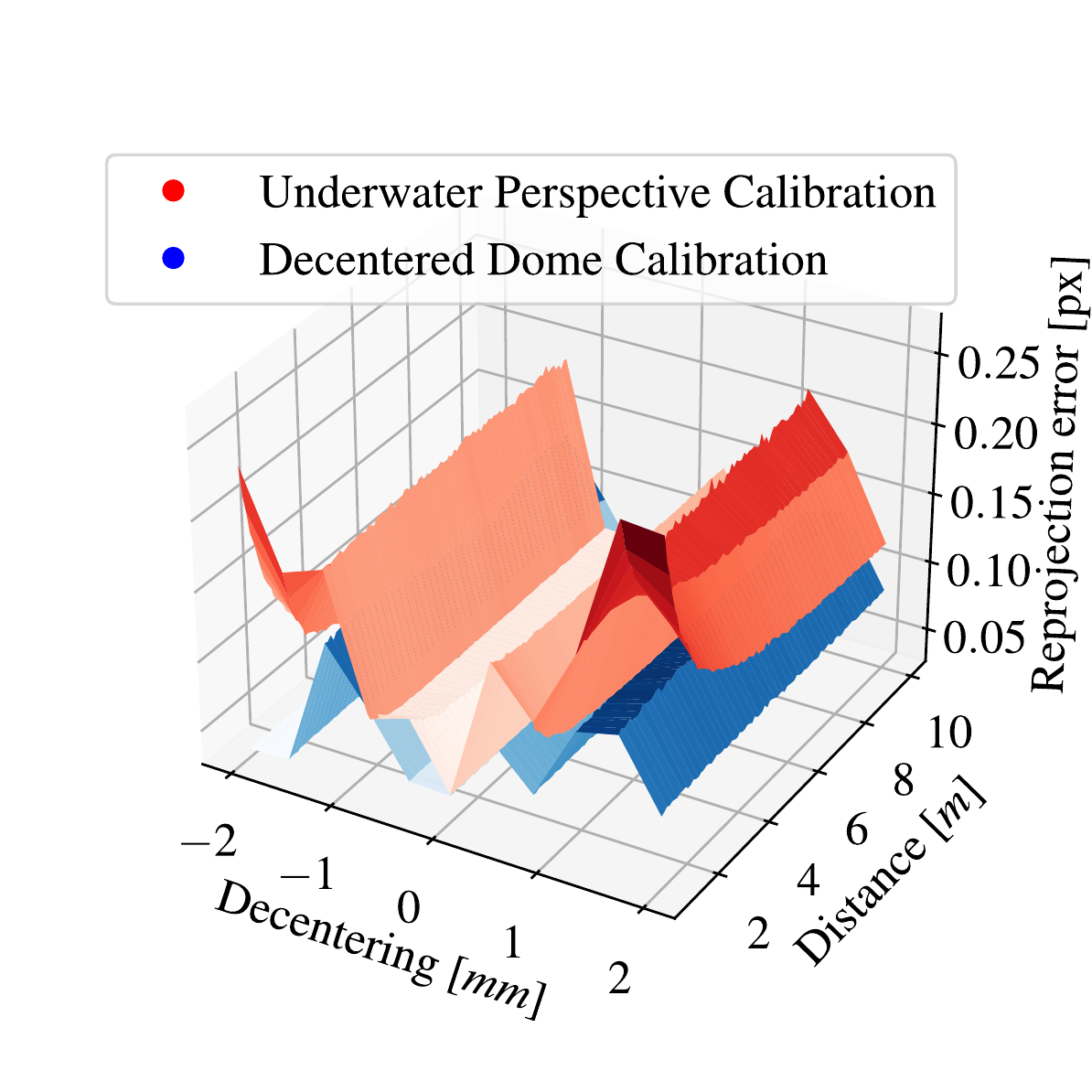
	\def\svgwidth{0.4\textwidth}
	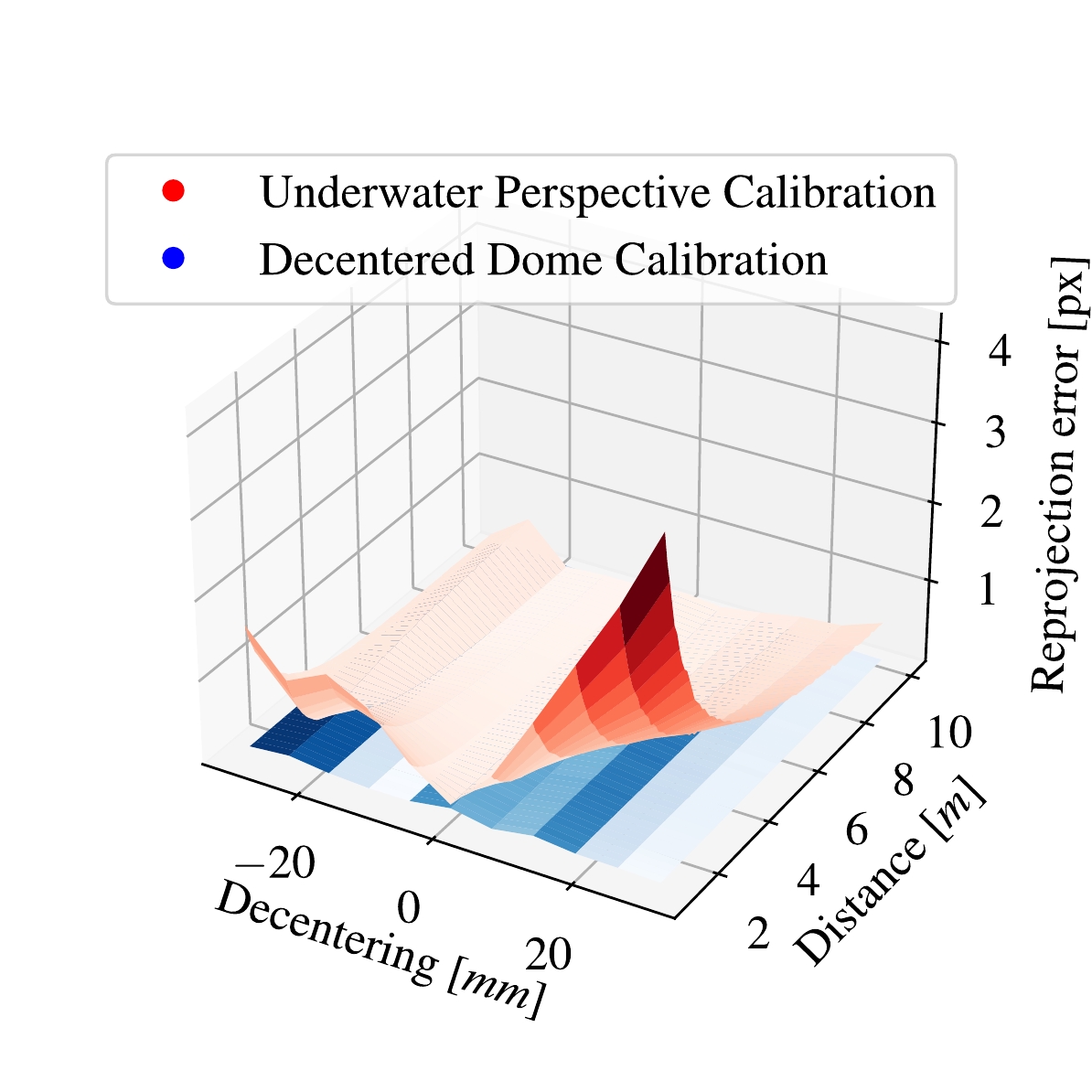
	\end{center}
	\begin{center}
%		\subfloat[Left-right decentering]{
%		}
	\def\svgwidth{0.4\textwidth}
	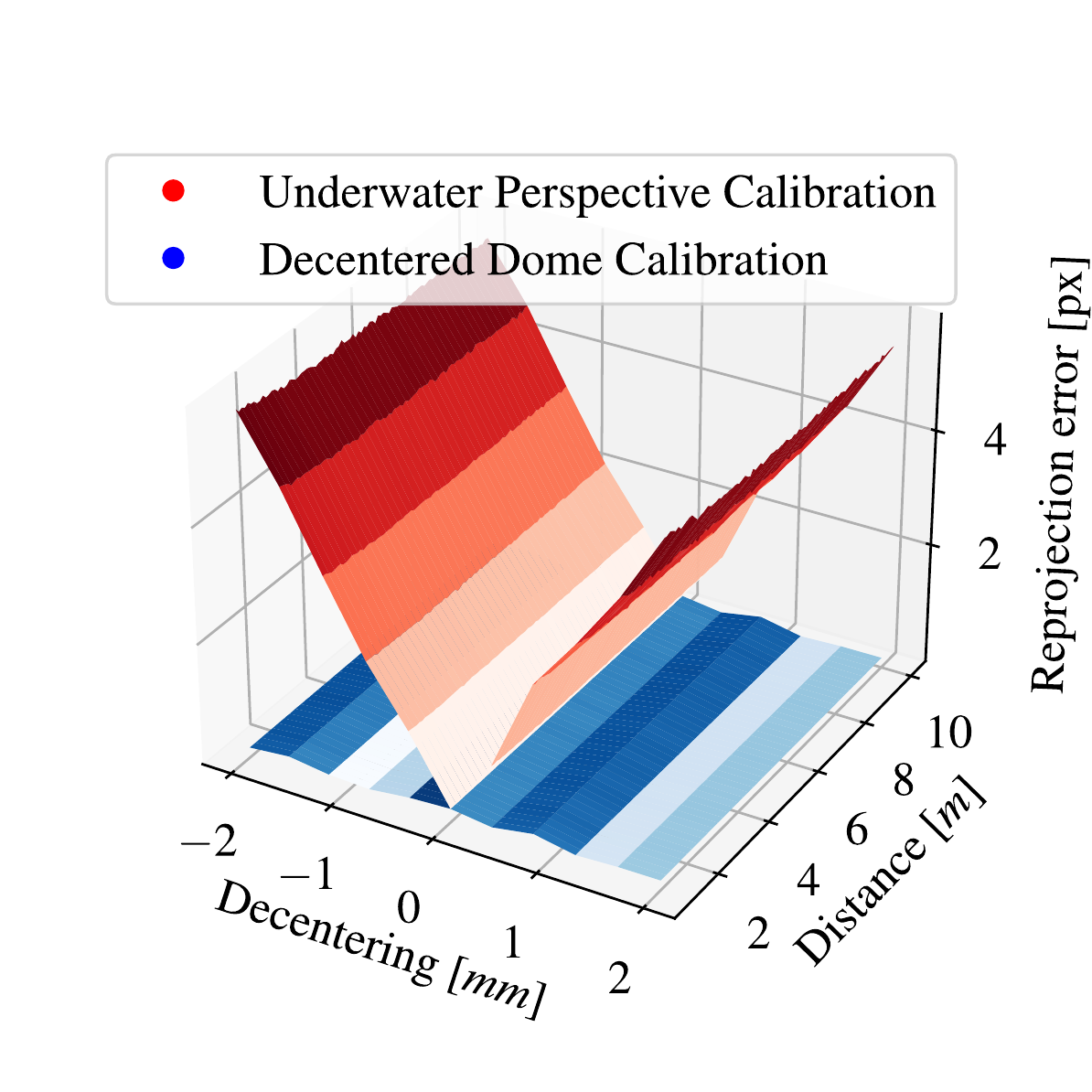
	\def\svgwidth{0.4\textwidth}
	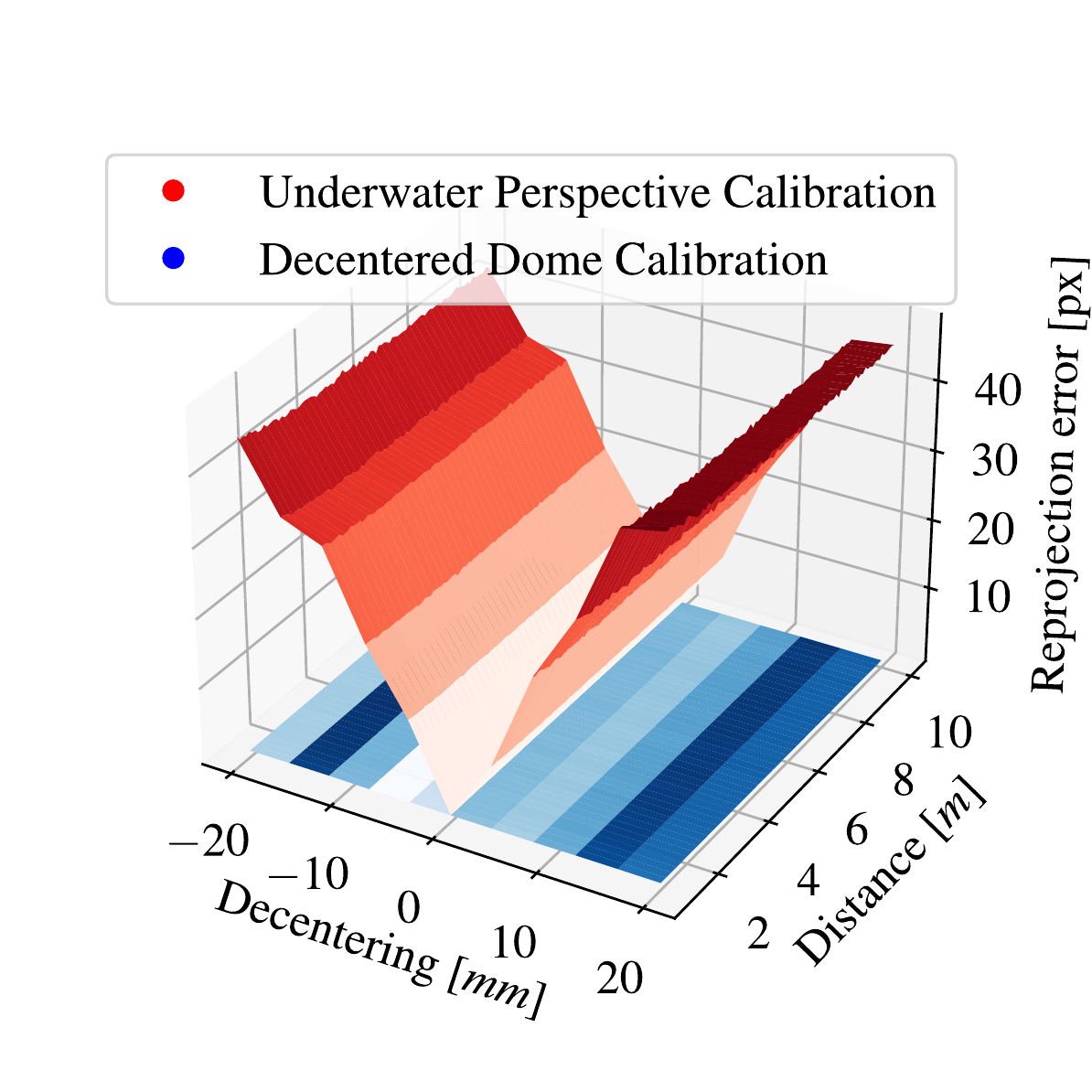	
	\end{center}
	\caption{
		Motivation experiment, we numerically compare the validation accuracy between the physically-based decentered dome port model and the underwater perspective approximation. 
		To this end, we first model a dome port camera system synthetically, and the camera projection center is decentered in forward-backward direction and left-right direction with different magnitudes. 
		Next, for each decentering, we simulate 30 planar chessboard calibration images with $0.2$ pixel noise as if the target was photographed underwater.
		Then, the underwater dome port calibration and the perspective camera model calibration are both performed with the same set of virtual images.
		After calibration, we evaluate the calibration quality by computing the reprojection error of a set of ground truth 3D-2D point pairs.
		The ground truth 3D-2D point pairs are created by randomly and uniformly sampled image points in the image space, and back-projected to 3D space with known scene distances (vary from $1m-10m$, 2000 samples in each distance). 
		Afterwards, the 3D points are reprojected onto the image space using the calibrated decentered dome port and the calibrated perspective model respectively, and the reprojection errors are measured and visualized.
		Top row, decentering in forward-backward direction.
		Bottom row, decentering in left-right direction.
	}
	\label{fig:motivation_experiment}
\end{figure*}
%\begin{figure*}[t]
%	\centering
%	\subfloat[1m distance]{
%		\includegraphics[width=0.48\textwidth]{} }
%	\subfloat[6m distance]{
%		\includegraphics[width=0.48\textwidth]{} }
%	
%	\caption{We created a toy example in Blender, to exemplify the huge impact of -- even small decenterings -- on subsequent photogrammetric applications. To this end, we modeled the camera dome system, we will use throughout this paper and mimicked a typical reconstruction task to be carried out by such a system. Specifically, we took 5 images of a pebble-stone textured plane with a 1/3rd overlap each from 1 and 6 meters distance. We attempted to reconstruct the known plane geometry based on the resultant toy image set with COLMAP \cite{schonberger2016structure}. Finally, we varied the dome-camera decenterings to observe the emerging reconstruction errors. We obtained pairwise errors between the reconstructed and the reference plane using CloudCompare. As the absolute numbers of thus-obtained errors vary, we present them in a binned fashion, spitted per decentering-condition, above. We can clearly see, that a sideways displacement has worse effects than an decentering towards or away from the dome, while the, likely, combination of both results in big reconstruction errors.}
%	\label{fig:toy_example_reconstruction_error}
%\end{figure*}

\section{Related Work}
It is well known in photogrammetry that refraction occurs when photographing through different media and in particular in underwater scenarios (see e.g. \cite{Shmutter67twomedia,Moore_76-underwaterphotogrammetry,Kotowski_88-refracPhotogrammetry,Freyer_86-UnderwaterCalib,Maas_95-refraction,Harvey_98-underwaterStereoStability,Jaffe_2001-underwaterImg,kunz2008hemispherical,Drap12}).
Early photogrammetric methods have analytically analyzed the refractive scenario \cite{Kotowski_88-refracPhotogrammetry} and suggested linearized correction terms in multi-media photogrammetry at flat interfaces \cite{Maas_95-multimediaPhotogrammetry}, whereas other approaches suggested to absorb refractive effects into 2D lens distortion.
Many practical underwater photogrammetry applications have shown that approximating the dome port camera system as a perspective camera model with standard distortion parameters is sufficient and convincing 3D survey results can be obtained.

Gläser and Schröcker\cite{glaeser_2000-reflectionsOnRefractions} found the analytical forward projection for a single refraction. Treibitz et al. \cite{Treibitz_12_flatrefractivegeometry} have shown that overall systems of cameras behind flat glass interfaces in the water can be considered axial cameras and Chari and Sturm \cite{chari2009refractiveplane} have inspected related multi-view relations. 
Agrawal et al. \cite{Agrawal_2012-UnwCalib} have generalized analytic forward projection to multiple layers of flat interfaces. 

For spherical glass domes, on the contrary, much less work exists, although they can sustain more pressure - and are thus better suited for deep sea applications - and they do not limit the field of view (see Fig. \ref{fig:domeport} for examples).
Nocerino et al. \cite{nocerino2016underwater} compare how aerial standard calibration parameters change when submerging dome port cameras underwater, but disregard explicit consideration of refraction. 
Kunz and Singh \cite{kunz2008hemispherical} discuss that domes avoid refraction when a pinhole camera is exactly centered with the dome and suggest that decentering could be determined using optimization. They analyze the 2D pixel error when approximating a misaligned dome port camera as a perspective camera and state that the error will increase as the camera moves closer or further to the observed scene since the refraction distortion is depth-dependent. Therefore, centering the camera with the dome port is critical to underwater vision applications.
Here, Menna et al. \cite{menna_16-characterizationdomeport} practically measure a particular dome port geometry and discuss properties of domes as compared to flat port cameras.
They also suggest how to align the nodal point of a lens with the dome's center by individually measuring both points and then mounting without a feedback loop.
Recently, in \cite{Menna2020UWSystematic}, the authors investigate the depth-dependent systematic errors introduced by refraction effects through dome ports in a statistical way and apply iterative look-up table corrections to reduce systematic residual patterns \cite{Nocerino2021Bundle}.

To support such a theory, and also to motivate this work, we conduct a numerical experiment which compares the physically-based decentered dome port model and underwater calibration using a perspective model. We particularly look at the case of a camera-equipped robot that sometimes comes close to underwater structures, and sometimes observes them from a distance, which means that calibration is required to hold for a range of different distances.
% As shown in Fig. \ref{fig:motivation_experiment} (a), such an approximation works well in a mild decentering in optical axis direction, but it is noticeable that the standard perspective camera model has more difficulty to absorb the refraction effect with larger decentering (here: more than 5\%) in forward-backward direction or when there is even a slight sideward decentering (see Fig. \ref{fig:motivation_experiment} (b)).
As shown in Fig. \ref{fig:motivation_experiment}, the underwater pinhole model works well for a mild decentering in optical axis direction (see top row), but it is noticeable that the standard perspective camera model has more difficulty to absorb the refraction effect with larger decentering or when there is even only a slight sideward decentering (see bottom row). In these cases explicitely considering refraction does not suffer from such errors and this motivates the work in this contribution.

Already earlier, She et al. \cite{she2019adjustment} have proposed to mechanically adjust the dome and lens using through-the-lens feedback until no refraction effects are observable.
They build their feedback loop on a human operator to judge the continuity of lines in a setup where the camera is looking parallel to the water surface in a special tank.
Later they determine the remaining offset by an image pair of a chessboard in air and in water, both taken at exactly the same pose, which makes the approach delicate and complex.
However, there are cases where the camera cannot be centered with the dome port.
For instance, Bosch et al. \cite{bosch2015omnidirectional} have designed a special underwater camera housing for an omnidirectional camera system which consists of a cylinder for the lateral cameras and a hemispherical dome for the top-looking camera, then
a calibration scheme to estimate the extrinsic parameters of each camera and the housing parameters is proposed.
Refraction effects are considered using the ray-tracing based camera model.
However, their work reports a relatively high residual after the parameter optimization in the real-world evaluation.
Iscar and Johnson-Roberson \cite{iscar2020towards} have developed an underwater stereo camera system where two cameras share a single hemispherical glass port.
Thus, the offset between the camera's optical center and the dome center is extreme in this scenario.
They propose ideas to recover the camera's position inside the dome by measuring the point spread function of the overall camera system, which requires a complex dataset collection procedure. The authors report that their calibration approach is very time-consuming and practically limited.
Besides underwater imaging, exact modeling of refraction in the image formation process can also improve in-air applications such as cameras behind the windshield of cars \cite{Verbiest2020Windshield} or behind an optical cover for coal mining vehicles \cite{yang2021non}. Note however, that when imaging in air through a relatively thin glass pane, rays essentially only undergo a slight lateral shift depending on the thickness of the glass, whereas in our scenario the refraction effects are dominated by the other two media (air and water) with significantly different optical densities that cause large direction changes of rays.

In summary, in practise dome port cameras are often successfully modeled by the pinhole camera system and very special setups of domes (e.g. stereo camera in a single dome) needed very special treatment that does not directly apply to the case of mildly decentered dome systems. In this contribution we therefore want to characterize the geometrical properties of decentered dome systems such as important axes, directions, symmetries,  what kind of refraction patterns are to be expected and how refraction effects can be exploited to infer the physical parameters of the camera system.

Therefore, our contribution in this paper is two-fold:
In the next section, we derive the geometrical properties for decentered dome port systems and show that even thick domes are actually also axial cameras, similar to the findings of \cite{Treibitz_12_flatrefractivegeometry,Agrawal_2012-UnwCalib} for flat interfaces. 
In contrast to flat interfaces, spheres have two intersections with the refraction axis, and we discuss their different properties.
Second, we derive how to estimate the refraction axis from a single picture of a chessboard drawing on prior insights into radial distortion center estimation \cite{hartley2007parameter}.
We also show that the apparent chessboard curvature (barrel vs. pincushion) is related to backward and forward decentering respectively and propose how to distinguish these cases in a 2D image.
Afterwards, we analyze the forward projection of a 3D scene point onto the image plane given a calibrated decentered dome port camera system.
We show that a 6th degree polynomial equation for thin domes can be derived by leveraging the property of the axial camera \cite{agrawal2010analytical,Agrawal_2012-UnwCalib,glaeser_2000-reflectionsOnRefractions}. 
Next, we derive an efficient optimization scheme to find the exact decentering from multiple underwater chessboard pictures, which provides a practical calibration procedure that does not require cumbersome in-air/underwater pair of the same chessboard.
Finally, we discuss the limitation and the calibration accuracy when the decentering is not significant and propose an image pre-selection scheme to maximize the observable refraction effect to achieve a high calibration accuracy.

The remainder of this paper is structured as follows: In the following section 3, we will discuss and derive the refractive dome geometry based upon which we propose the different novel steps of our calibration algorithm in section 4.
The single components and an overall calibration approach are evaluated in section 5 using synthetic and real datasets.
Then, we discuss key findings and limitations in section 6 before the conclusion in the final section 7.

\begin{figure*}[t]
	\begin{center}
		%\small
		\def\svgwidth{0.42\textwidth}
		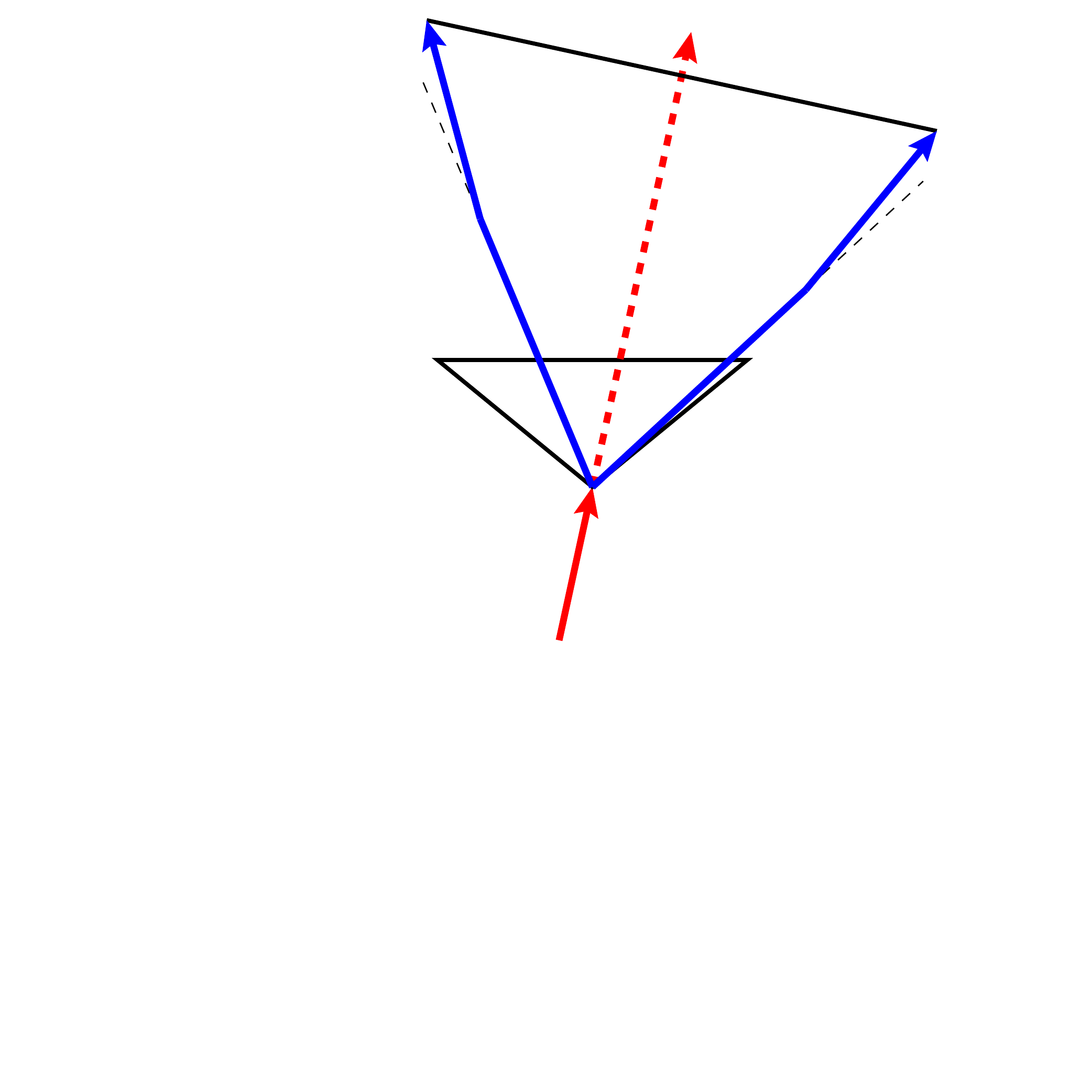
		\def\svgwidth{0.42\textwidth}
		{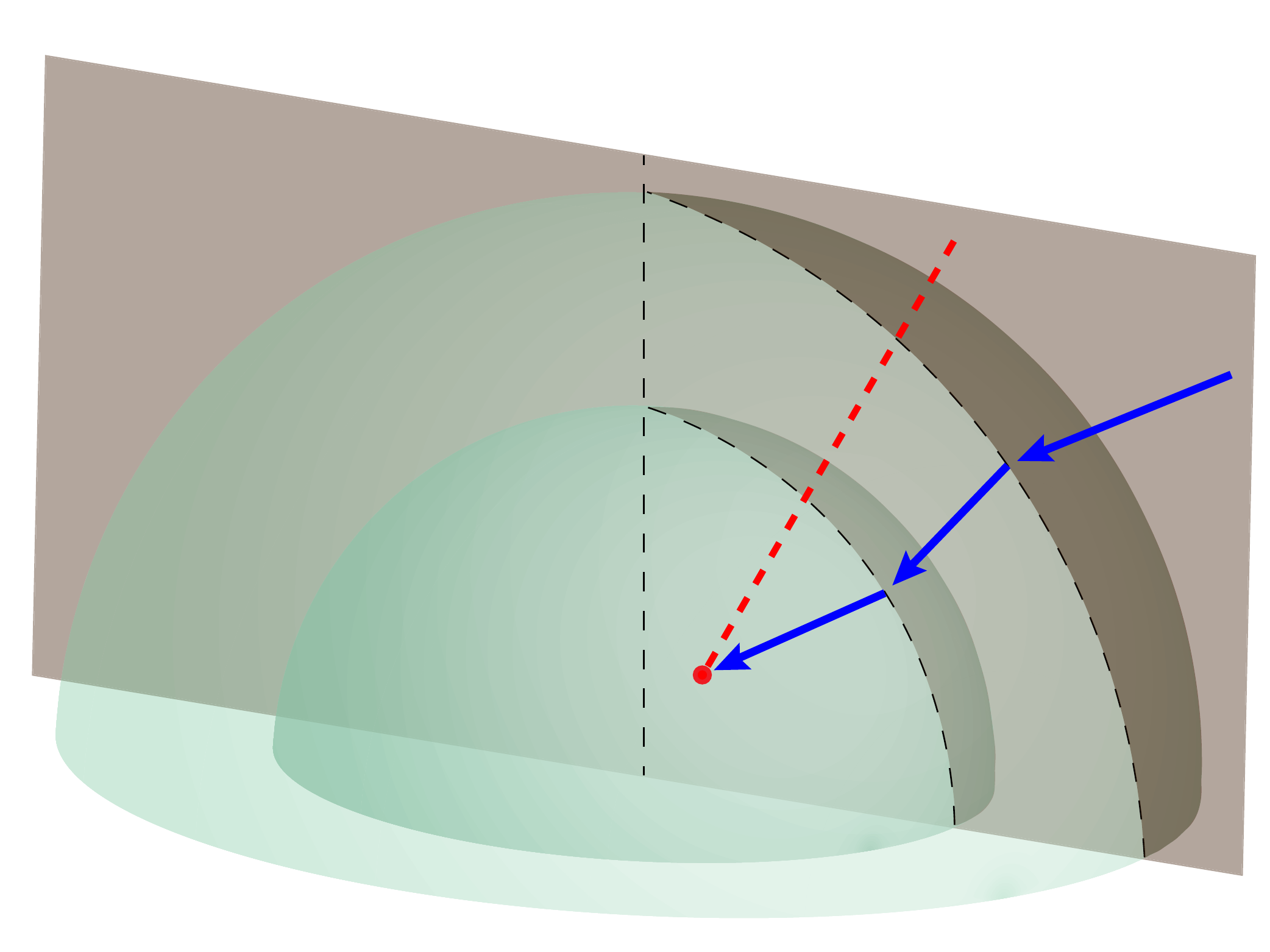}
		\def\svgwidth{0.42\textwidth}
		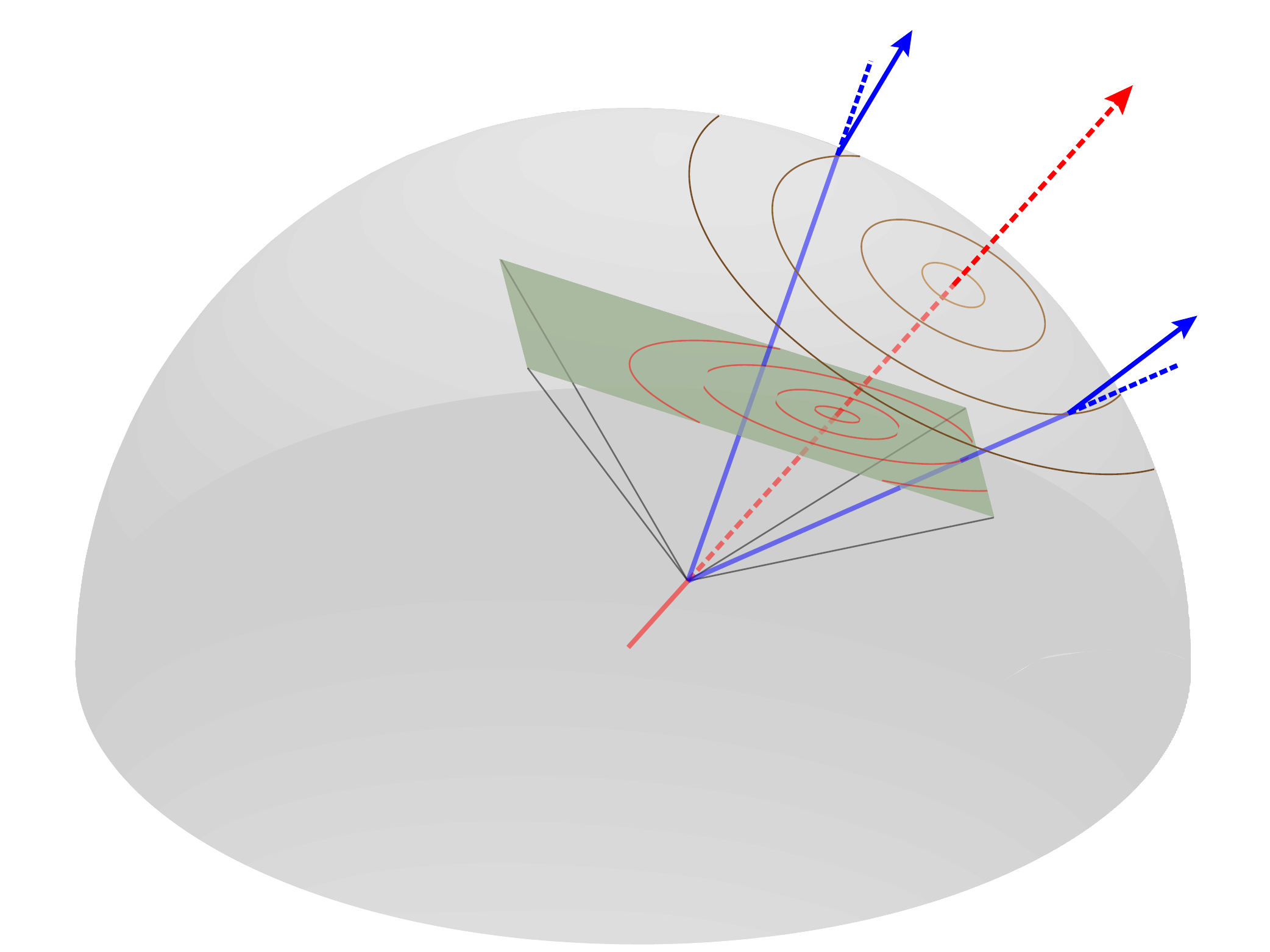
	\end{center}
	\caption{Sketch of the dome refraction setup. Up left: The camera center is not located at the center of a thin dome (the origin). Still, the ray from the dome center through the camera center will not be refracted at the sphere. Another ray, going from the camera more to the left, will not continue straight (dashed line) but rather be refracted. The dotted line, the refracted line and the line from the origin to the dome center (the refraction axis) lie all in one plane. The upper right image shows the intersection of this plane with the sphere and for each viewing ray, a plane of refraction exists, even in case we use a thick dome.
		In the lower sketch, the dashed light paths form the same angle $\phi$ with the refraction axis at the camera center. Consequently, they will be refracted by the same angle $\theta$ at the interface. The cone of all those rays intersects the sphere in a 3D circle, and intersects the image plane in an ellipse.
	}
	\label{fig:main_sketch}
\end{figure*}

\section{Decentered Dome Geometry}
\label{sec:geometry}

The setting is displayed in Fig. \ref{fig:main_sketch}: 
We assume that a camera is positioned inside a sphere.
The medium inside the sphere (e.g. air) has a different optical density $\mu_\mathrm{air}$ as compared to the medium outside the sphere (e.g. water, with density $\mu_\mathrm{water}$).
The separating layer (e.g. glass) can either be considered of almost zero thickness (thin dome model), or, in particular for deep sea housings sustaining several hundred bars of pressure, can consist of several millimeters glass (thick dome model) with optical density $\mu_\mathrm{glass}$. The exact numbers for the $\mu$ parameters depend on the composition of the water, the exact "glass" material, but in the remainder of this paper we will assume $\mu_\mathrm{air} \leq \mu_\mathrm{water} \leq \mu_\mathrm{glass}$ to reason about inward- or outward bending.
In case a pinhole camera\footnote{ In this contribution we consider the mathematical pinhole camera model, cf. to \cite{Hartley2004_MultipleView}.
Real cameras use lenses to collect light over a larger area and need to be focused to the object under consideration.
When used behind a dome port, the focusing behaviour of the lens depends on the medium outside the dome, and is different in air or in water (cf. to \cite{menna_16-characterizationdomeport}).
While focusing strategies are an interesting problem in their own right, in this contribution we consider the principal rays of the lens, and use the pinhole model to describe the mapping from world coordinates to image coordinates.}
is positioned exactly in the center, all viewing rays will pass all dome layers in direction of the local normal and no refraction will occur. In practice, aligning the centers is very challenging, and therefore some decentering is likely to remain when assembling without visual feedback.
The vector from the dome center to the camera center is the decentering offset vector $\q v_{\mathrm{off}}$, or short {\em decentering}.
The line through the dome center and the camera center is the refraction axis with direction $\q a$.
It has two intersections with the (thin) dome surface which will be called the refraction poles, where we distinguish the pole closer to the camera center (positive refraction pole $\q I_{\mathrm{pol´e+}}$) and the pole further away (negative refraction pole $\q I_{\mathrm{pole-}}$). The refraction axis also intersects the image plane at the refraction center, which has the position $\q r$ in image coordinates.

We set the origin of the world coordinate system to the center of the dome. Further, we assume that the camera is calibrated, and omit the camera intrinsics for the sake of readability, so it can be described by the projection matrix
\begin{equation}
\mq P = (\mq R \;\;|\; -\mq{R} \q v_{\mathrm{off}}).
\end{equation}

Because of refraction effects according to Snell's law, light rays will change their direction at the interface between different media, unless they hit the interface at an angle coinciding with the normal of the intersection point.

Along the light path from an object in the water, through the glass dome and into the camera we consider 3 segments here, the water segment with light ray direction $\q l_{\mathrm{water}}$, the glass segment with direction $\q l_{\mathrm{glass}}$ and the air segment with $\q l_{\mathrm{air}}$ (see Fig. \ref{fig:main_sketch}, Center).
Note that incoming rays that travel along the refraction axis will not be refracted, as they hit the outer interface at 90$^\circ$ and then also the inner interface at 90$^\circ$ before they move towards the camera.
We will now trace back a different light ray $\q l$ from the pinhole to its intersection $\q I_{\mathrm{inner}}$ with the inner interface.

\begin{lemma}
	\label{lem:linearcombination}	
	The surface normal $\d n_{\mathrm{inner}}$ of the inner interface at $\q I_{\mathrm{inner}}$ is a linear combination of $\q v_{\mathrm{off}}$ and $\q l_{\mathrm{air}}$.
\end{lemma}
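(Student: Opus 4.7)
The plan is to combine two elementary geometric facts: that the inner interface is a sphere concentric with the dome (centered at the world origin), and that $\q I_{\mathrm{inner}}$ lies on the straight air-segment emanating from the camera center.

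First I would recall the choice of coordinate system made just above the lemma: the world origin coincides with the dome center, and the camera center sits at position $\q v_{\mathrm{off}}$. Because the inner interface is a sphere centered at the origin, the outward surface normal at any point $\q p$ of that sphere is parallel to the position vector $\q p$ itself. Hence, up to a nonzero scalar,
\begin{equation}
\d n_{\mathrm{inner}} \;\propto\; \q I_{\mathrm{inner}}.
\end{equation}

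Next, inside the air cavity no refraction has occurred yet, so the backward-traced ray from the pinhole to the inner interface is a straight line with direction $\q l_{\mathrm{air}}$, starting at the camera center $\q v_{\mathrm{off}}$. Therefore there exists a scalar $t$ (the travel distance in the air segment) such that
\begin{equation}
\q I_{\mathrm{inner}} \;=\; \q v_{\mathrm{off}} \,+\, t\,\q l_{\mathrm{air}}.
\end{equation}
Substituting into the previous proportionality immediately yields
\begin{equation}
\d n_{\mathrm{inner}} \;\propto\; \q v_{\mathrm{off}} \,+\, t\,\q l_{\mathrm{air}},
\end{equation}
which exhibits $\d n_{\mathrm{inner}}$ as a linear combination of $\q v_{\mathrm{off}}$ and $\q l_{\mathrm{air}}$, as claimed.

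There is essentially no hard step; the whole content of the lemma is the observation that placing the world origin at the dome center identifies radial normals with position vectors. The only thing worth remarking on is the degenerate case $\q v_{\mathrm{off}} = \q 0$ (the perfectly centered camera), where $\q I_{\mathrm{inner}}$ is simply parallel to $\q l_{\mathrm{air}}$ and the normal coincides with the ray direction, consistent with the no-refraction behaviour mentioned in the preceding paragraph. This degenerate case still fits the stated linear-combination form trivially.
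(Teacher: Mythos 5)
Your proof is correct and follows essentially the same route as the paper's: you parametrize $\q I_{\mathrm{inner}} = \q v_{\mathrm{off}} + t\,\q l_{\mathrm{air}}$ along the air segment and use the fact that the normal of an origin-centered sphere at a point is parallel to that point's position vector. The extra remark on the degenerate centered case is a harmless bonus not present in the paper.
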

\begin{proof}
	The intersection point $\q I_{\mathrm{inner}}$ must be somewhere on the ray starting from the camera center  $\mq v_{\mathrm{off}}$ in direction of $\q l_{\mathrm{air}}$, consequently $\q I_{\mathrm{inner}} =  \q v_{\mathrm{off}} + \lambda \q l_{\mathrm{air}}$. 
	Since the surface normal at some point $\q X$ on an origin-centered sphere is simply $\q X / ||\q X||$, we obtain
	$\q n_{\mathrm{inner}}= \alpha_1 \q v_{\mathrm{off}} + \alpha_2 \q l_{\mathrm{air}} $. 
\end{proof}

\begin{lemma}All segments $\q l_\star $ of a light path towards the pinhole, the normals at the inner and outer dome intersection point and the refraction axis all lie in one plane.
	\label{lem:planeofrefraction}	 
\end{lemma}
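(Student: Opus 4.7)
The plan is to construct a specific candidate plane $\Pi$ passing through the origin and show, one object at a time, that every item in the statement lies in it, propagating along the light path from the air segment outward.

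First I would define $\Pi$ as the plane through the dome center (the origin) spanned by $\q v_{\mathrm{off}}$ and the air-side ray direction $\q l_{\mathrm{air}}$. By construction, both the camera center $\q v_{\mathrm{off}}$ and the refraction axis $\q a$ (which is collinear with $\q v_{\mathrm{off}}$) sit in $\Pi$, as does the entire line containing the air segment, since that line is $\{\q v_{\mathrm{off}} + t\,\q l_{\mathrm{air}} : t \in \mathbb{R}\}$. In particular, the inner intersection $\q I_{\mathrm{inner}} = \q v_{\mathrm{off}} + \lambda\,\q l_{\mathrm{air}}$ lies in $\Pi$, and by Lemma~\ref{lem:linearcombination} the inner normal $\q n_{\mathrm{inner}}$ is a linear combination of $\q v_{\mathrm{off}}$ and $\q l_{\mathrm{air}}$, so $\q n_{\mathrm{inner}} \in \Pi$ as well.

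Next I would invoke Snell's law in its coplanarity form: the refracted ray lies in the plane spanned by the incident ray and the surface normal at the point of refraction. Applied at $\q I_{\mathrm{inner}}$ with incident $\q l_{\mathrm{air}}$ and normal $\q n_{\mathrm{inner}}$, both of which live in $\Pi$, this forces the glass-segment direction $\q l_{\mathrm{glass}}$ into $\Pi$. Since $\q I_{\mathrm{inner}} \in \Pi$, the whole glass segment $\q I_{\mathrm{inner}} + s\,\q l_{\mathrm{glass}}$ lies in $\Pi$, and therefore so does the outer intersection $\q I_{\mathrm{outer}}$. Because the outer sphere is concentric with the inner one, the outer normal $\q n_{\mathrm{outer}}$ is parallel to $\q I_{\mathrm{outer}}$ (origin to surface point), hence also in $\Pi$. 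A second application of Snell's law at $\q I_{\mathrm{outer}}$ then pushes $\q l_{\mathrm{water}}$ into $\Pi$, completing the induction down the light path.

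Finally I would briefly address the degenerate case in which $\q l_{\mathrm{air}}$ is parallel to $\q v_{\mathrm{off}}$, so that $\q v_{\mathrm{off}}$ and $\q l_{\mathrm{air}}$ fail to span a plane. Here the ray coincides with the refraction axis, strikes both interfaces along the local normal, and is not refracted; any plane containing the axis contains all segments and all relevant normals trivially, so the claim still holds (now with a one-parameter family of valid planes). I expect the main obstacle to be purely a matter of bookkeeping: making sure the Snell's-law coplanarity invocation is stated cleanly once and then reused at both interfaces, and handling the thin-dome case as the limiting situation $\q I_{\mathrm{inner}} = \q I_{\mathrm{outer}}$ where the glass segment degenerates but the argument still goes through verbatim with $\q l_{\mathrm{glass}} = \q l_{\mathrm{air}}$ at the single interface.
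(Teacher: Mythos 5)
Your proposal is correct and follows essentially the same route as the paper: identify the plane spanned by $\q v_{\mathrm{off}}$ and $\q l_{\mathrm{air}}$, note via Lemma \ref{lem:linearcombination} that the inner intersection point and its normal lie in it, and then use the fact that Snell's law expresses each refracted direction as a linear combination of the incident direction and the local normal to propagate coplanarity through the glass and water segments. The only differences are cosmetic: the paper writes out the explicit refraction formula where you cite the coplanarity form of Snell's law abstractly, and you additionally treat the degenerate on-axis case, which the paper leaves implicit.
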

\begin{proof}
	From lemma \ref{lem:linearcombination}, it can be seen that $\q I_{\mathrm{inner}}$ and the normal at this position are both linear combinations of $\q v_{\mathrm{off}}$ and $\q l_{\mathrm{air}}$, consequently, they all lie in the plane spanned by $\q v_{\mathrm{off}}$ and $\q l_{\mathrm{air}}$. According to Snell's law, the glass segment direction $\q l_{\mathrm{glass}}$ can be computed as a linear combination of the incoming direction $\q l_{\mathrm{air}}$ and the local surface normal $\q n_{\mathrm{inner}}$ \cite{glassner1989introRayTracing}:
	\begin{equation}
	\label{eq:calc_refracted_ray}
	\q l_{\mathrm{glass}} = r \q l_{\mathrm{air}} + c \q n_{\mathrm{inner}}
	\end{equation}
	where $r = \mu_{\mathrm{air}} / \mu_{\mathrm{glass}}$, and
	\begin{multline}
	c = \frac{-\mu_{\mathrm{air}} \q l_{\mathrm{air}} \trans \q n_{\mathrm{inner}}}{\mu_{\mathrm{glass}} (\q n_{\mathrm{inner}}\trans \q n_{\mathrm{inner}})} -\\ \frac{\sqrt{\mu_{\mathrm{air}}^{2} (\q l_{\mathrm{air}} \trans \q n_{\mathrm{inner}})^{2} - (\mu_{\mathrm{air}}^{2} -\mu_{\mathrm{glass}}^2 ) (\q l_{\mathrm{air}} \trans \q l_{\mathrm{air}}) (\q n_{\mathrm{inner}}\trans \q n_{\mathrm{inner}})}}{\mu_{\mathrm{glass}} (\q n_{\mathrm{inner}}\trans \q n_{\mathrm{inner}})}.
	\end{multline}
	
	%\todo{maybe show equation here ? (dome)}
	Consequently, following the same reasoning, also the outer interface point, its normal and the water segment are linear combinations of $\q v_{\mathrm{off}}$ and $\q l_{\mathrm{air}}$ and all lie in the same plane.
\end{proof}
In the flat refractive case \cite{Agrawal_2012-UnwCalib}, this plane is called the plane of refraction.
We will now investigate by what angle a light ray changes its direction at the inner interface.
We conceptually group light rays that start from $\q v_{\mathrm{off}}$ and change their direction at the inner interface by some angle $\theta$ into the set $\mathcal{R}_\theta$ (see Fig. \ref{fig:main_sketch}, Right). We call the corresponding set of image coordinates {\em iso-refraction-angle curves}.

\begin{theorem} Iso-refraction-angle curves are conic sections in the image. \label{theorem:isorefraction} \end{theorem}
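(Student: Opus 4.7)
The plan is to exploit the rotational symmetry of the whole refractive configuration about the refraction axis $\q a$. The dome (thin or thick) consists of one or two concentric spheres centered at the origin, and the camera center $\q v_\mathrm{off}$ lies on $\q a$ by construction. Hence, for any angle $\alpha$, the rotation $R_{\q a}(\alpha)$ about $\q a$ is a symmetry of the entire physical setup: it fixes the camera center, preserves the inner and outer dome surfaces, and preserves the refractive indices of all layers.

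First I would verify rigorously that the refraction angle at the inner interface is a function only of the angle $\phi$ between the outgoing air ray $\q l_\mathrm{air}$ and the refraction axis $\q a$. Given any ray starting from $\q v_\mathrm{off}$ in direction $\q l_\mathrm{air}$, applying $R_{\q a}(\alpha)$ yields another ray from $\q v_\mathrm{off}$ that still makes angle $\phi$ with $\q a$; because the sphere is invariant, the rotated ray hits the inner interface at the rotated point, with a correspondingly rotated normal, so by Lemma~\ref{lem:linearcombination} and Snell's law (equation~\eqref{eq:calc_refracted_ray}) the angle between incoming ray and normal is identical, and therefore the refraction angle $\theta$ is identical. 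Consequently, the set $\mathcal R_\theta$ of air rays from $\q v_\mathrm{off}$ refracted by a common angle $\theta$ is exactly the union of all rays from $\q v_\mathrm{off}$ making some fixed angle $\phi(\theta)$ with $\q a$.

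Next I would observe that this set of rays is by definition a right circular cone with apex at $\q v_\mathrm{off}$, axis along $\q a$, and half-opening angle $\phi(\theta)$. The iso-refraction-angle curve in the image is the set of image points whose back-projected rays lie in $\mathcal R_\theta$, i.e.\ the intersection of this cone with the image plane (a 2D affine plane in the camera coordinate system). Since the intersection of a right circular cone with a plane is, by the classical definition, a conic section (ellipse, parabola, hyperbola, a pair of lines, or a point in degenerate cases), the claim follows.

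The one subtlety worth spelling out carefully is that the image plane is generally neither orthogonal to $\q a$ nor passing through the apex $\q v_\mathrm{off}$, so all conic types can occur depending on the orientation of the optical axis relative to $\q a$ and on $\phi(\theta)$; in the special case where the image plane is perpendicular to $\q a$, the curve reduces to a circle centered at the refraction center $\q r$. I do not expect a hard step here: the only place needing care is the symmetry argument for thick domes, which goes through verbatim because both glass interfaces are concentric spheres about the dome center, so Snell's law at the outer interface inherits the same rotational invariance, and hence the full refraction angle (through all layers) is still a function of $\phi$ alone.
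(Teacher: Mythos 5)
Your proof is correct and takes essentially the same approach as the paper: both arguments use the rotational symmetry about the refraction axis to show that rays making a fixed angle $\phi$ with the axis are refracted by the same amount, so that the iso-refraction set is a right circular cone whose intersection with the image plane is a conic. Your version merely makes the paper's ``for symmetry reasons'' explicit and adds the (correct) observations about the possible conic types and the thick-dome case.
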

\begin{proof}For symmetry reasons it can be seen that all rays starting at $\q v_{\mathrm{off}}$ that form some angle $\phi$ with the refraction axis (the $\phi$-cone) intersect the inner sphere in a 3D circle (see Fig. \ref{fig:main_sketch}). They all form the same angle with the local normal at the sphere, and due to Snell's law, they will thus all be refracted by the same amount. The intersection of the $\phi$-cone with the image plane forms a conic section.
\end{proof}
Finally, we will inspect the displacement induced by the different optical densities of the media. In case all media (air, glass, water) had the same refraction index, we could compute the image position $\q x_a$ of an observed 3D point $\q X$ simply by using the projection matrix: $\q x_a \simeq \mq P \q X$. This is the position where we would expect to see the point without a dome, just "in air". In case the three media have different optical densities the light will undergo refraction and we will observe the same 3D point at another position $\q x_r$. 3D points that lie on the refraction axis will not be refracted and rather be projected onto the refraction center $\q r$ in the image.

\begin{theorem} The "in-air" observation $\q x_a$ of a point $\q X$ and the underwater "refracted" position $\q x_r$ of that point $\q X$ form a line with the refraction center $\q r$. \label{theorem:line}\end{theorem}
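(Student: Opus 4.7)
The plan is to reduce the statement to a single geometric fact about the plane of refraction established in Lemma \ref{lem:planeofrefraction}, and then observe that three points lying in one plane all project onto the line where that plane meets the image plane.

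First I would fix the plane of refraction $\Pi$ associated with the light path joining $\q X$ and the camera. By Lemma \ref{lem:planeofrefraction}, $\Pi$ contains the refraction axis $\q a$, the camera center $\q v_{\mathrm{off}}$, all three ray segments $\q l_{\mathrm{air}}$, $\q l_{\mathrm{glass}}$, $\q l_{\mathrm{water}}$, and both dome intersection points. Since the water segment $\q l_{\mathrm{water}}$ emanates from $\q X$, we also have $\q X \in \Pi$.

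Next I would locate each of the three image points in $\Pi$. The refraction axis lies in $\Pi$, so its intersection with the image plane, which is by definition $\q r$, lies in $\Pi$. The refracted image point $\q x_r$ is the intersection of the air segment $\q l_{\mathrm{air}}$ with the image plane, and $\q l_{\mathrm{air}} \subset \Pi$, so $\q x_r \in \Pi$. Finally, the hypothetical unrefracted image point $\q x_a$ corresponds to projecting $\q X$ along the straight line through the camera center; both the camera center and $\q X$ belong to $\Pi$, so this line lies in $\Pi$, and hence $\q x_a \in \Pi$ as well.

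To conclude, I would intersect $\Pi$ with the image plane. Generically this intersection is a straight line $\ell = \Pi \cap (\text{image plane})$, and since $\q r$, $\q x_r$, and $\q x_a$ each lie in $\Pi$ and in the image plane, all three lie on $\ell$, which is exactly the collinearity claim. The only step requiring a little care is the degeneracy analysis: if $\q X$ happens to lie on the refraction axis, then $\Pi$ is not uniquely defined, but in that case $\q x_a = \q x_r = \q r$ and the statement holds trivially; and if $\Pi$ were parallel to the image plane, the image would contain no such points at all, so this case is vacuous. The main (and only real) obstacle is therefore simply recognizing that Lemma \ref{lem:planeofrefraction} already does all the heavy lifting, so that the theorem becomes a coplanarity observation rather than a computation involving Snell's law.
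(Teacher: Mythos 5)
Your argument is correct and follows essentially the same route as the paper's own proof: both reduce the claim to the coplanarity established by Lemma \ref{lem:planeofrefraction} (the light path, the point $\q X$, the camera center, and the refraction axis all lie in one plane) and then observe that $\q r$, $\q x_r$, and $\q x_a$ must all lie on the intersection of that plane with the image plane. Your explicit treatment of the degenerate cases is a small addition the paper omits, but the core argument is identical.
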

\begin{proof}
	From lemma \ref{lem:planeofrefraction} it can be seen that all the segments of the light path, including the 3D point $\q X$, are in a plane jointly with the refraction axis. The intersection of that plane with the image plane forms a line. 
	Since the camera center is in that plane too, also the direct line from the camera center to $\q X$ is in that plane, and thus the unrefracted "in-air" observation must be in the same intersection line of that plane with the image plane.	
\end{proof}
This means that refraction happens actually along a line containing the unrefracted "in-air" observation and the refraction center (cf. also Fig. \ref{fig:chessrefraction}).
Now we consider the overall dome camera system in the water as a special kind of camera. We simply extend the water segments that lead from 3D points to the outer sphere to (infinitely long) lines:

\begin{theorem} The decentered dome port camera system is an axial camera. \end{theorem}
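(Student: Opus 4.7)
The plan is to exhibit a single line in three-dimensional space which every back-projected viewing ray of the dome system must meet; the natural candidate is the refraction axis $\q a$, which by construction passes through both the dome center (the world origin) and the camera center $\q v_{\mathrm{off}}$.

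First I would fix an arbitrary image pixel and follow its light path outward through the system: it leaves $\q v_{\mathrm{off}}$ along an air segment $\q l_{\mathrm{air}}$, refracts at the inner sphere (and, for a thick dome, again at the outer sphere), and emerges as the water segment $\q l_{\mathrm{water}}$. Lemma \ref{lem:planeofrefraction} guarantees that all of these segments, together with the refraction axis, lie in a single plane --- the plane of refraction associated with that pixel. Inside this plane, the extended water ray and the line $\q a$ are two coplanar lines, so they must meet at either a finite point or a point at infinity. Since the pixel was arbitrary, every extended viewing ray of the camera system meets $\q a$, which is precisely the defining property of an axial camera.

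The only obstacle I expect is conceptual rather than technical: one must interpret ``meet'' in the projective sense, so that a water ray which happens to be parallel to $\q a$ is handled uniformly with the generic case of a finite intersection. Beyond this bookkeeping, the theorem follows essentially as a one-line corollary of Lemma \ref{lem:planeofrefraction}, and no further computation with Snell's law or with the positions of $\q I_{\mathrm{inner}}$ and $\q I_{\mathrm{outer}}$ is required; in particular, the argument is insensitive to whether the dome is modeled as thin or thick, since Lemma \ref{lem:planeofrefraction} already covers both cases.
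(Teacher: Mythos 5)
Your argument is correct and is essentially the paper's own proof: both invoke Lemma~\ref{lem:planeofrefraction} to place each water segment in a plane containing the refraction axis, and both handle the parallel case by admitting an intersection at an ideal point at infinity. No gaps.
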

\begin{proof}
	From lemma \ref{lem:planeofrefraction} it can be seen that each water segment of a light path is in a plane jointly with the refraction axis. The water segment is either parallel to the axis (intersection at an ideal point at infinity) or its extension will have a Euclidean intersection with the axis. Consequently, all water segments of paths reaching the pinhole, i.e. the viewing rays in water, intersect the axis and the overall system is an axial camera.
\end{proof}

\begin{theorem} In the thin dome port camera system in water, all the rays from the camera are refracted towards the positive refraction pole. \label{theorem:refractiondirection}\end{theorem}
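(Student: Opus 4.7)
The plan is to reduce everything to the 2D plane of refraction via Lemma \ref{lem:planeofrefraction}, and then derive the direction of bending from a single triangle together with Snell's law. Concretely, I would coordinatize the plane of refraction with the dome center at the origin and the refraction axis as the positive $x$-axis, so that $\q I_{\mathrm{pole+}} = (R,0)$ (where $R$ is the dome radius) and the camera sits at $\q v_{\mathrm{off}} = (d,0)$ with $0 < d < R$ — this $d > 0$ is forced by the very definition of the positive pole as the pole closer to the camera. By reflection symmetry across the axis it then suffices to treat air rays in the upper half plane, leaving the camera at angle $\phi \in (0,\pi)$ with the axis; the axial cases $\phi \in \{0,\pi\}$ hit the dome along its normal and the claim is trivial.

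The key step is locating the angle of incidence. Let $\alpha$ be the radial angle of the intersection $\q I_{\mathrm{inner}}$ on the sphere. Looking at the triangle with vertices at the dome center $O$, the camera $\q v_{\mathrm{off}}$ and $\q I_{\mathrm{inner}}$, the angles there are $\alpha$, $\pi-\phi$ and $\phi - \alpha$ respectively, and the law of sines applied to this triangle immediately gives $\sin(\phi - \alpha) = (d/R)\sin\phi$. The inequality $d < R$ forces $\phi-\alpha$ onto the principal branch $(0,\pi/2)$, which gives $0 < \alpha < \phi$ for every $\phi \in (0,\pi)$ and identifies the angle of incidence as $\theta_i = \phi - \alpha$. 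The geometric upshot is that the outward normal at $\q I_{\mathrm{inner}}$ (direction angle $\alpha$) lies angularly between the positive-pole direction (angle $0$) and the incident air ray (angle $\phi$).

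Snell's law at the single thin interface then finishes things quickly: $\sin\theta_r = (\mu_{\mathrm{air}}/\mu_{\mathrm{water}})\sin\theta_i \leq \sin\theta_i$ combined with $\theta_r,\theta_i \in (0,\pi/2)$ yields $\theta_r \leq \theta_i$, so the refracted water ray has direction angle $\alpha + \theta_r \leq \phi$ with the axis; equivalently, bending toward the normal means bending toward the positive pole, because the normal already lies on that side. I expect the hard part to be bookkeeping rather than algebra — picking conventions so that \emph{``refracted toward the positive pole''} is captured precisely as ``the angle with the refraction axis strictly decreases at the interface'', and carefully using $0 < d < R$ to justify that the outward normal lies strictly between the axis and the incident ray (rather than on the opposite side of one of them), which is what makes the bending go in the positive-pole direction and not the negative-pole direction.
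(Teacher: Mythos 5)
Your proof is correct and follows essentially the same route as the paper's: reduce to the plane of refraction via Lemma~\ref{lem:planeofrefraction}, then invoke Snell's law (bending toward the normal when entering the denser medium) together with the observation that the sphere normal at the intersection point lies angularly between the incident ray and the positive-pole direction. The paper asserts this last geometric fact informally (``the surface normals cross the dome center, and the camera is positioned closer to the positive refraction pole''), whereas your law-of-sines step $\sin(\phi-\alpha) = (d/R)\sin\phi$, giving $0<\alpha<\phi$, makes it precise --- a triangle computation the paper itself uses only later, in the proof of Theorem~\ref{theo:perpRefrac}.
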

\begin{proof}
	From lemma \ref{lem:planeofrefraction} we know that all ray segments of a light path lie in a single plane with the axis, i.e. the incoming and outgoing ray at the refraction interface change direction inside this plane. According to Snell's law, when going from the lighter to the denser medium, rays change their direction towards the surface normal. Since all the surface normals cross the dome center, and since the camera is positioned closer to the positive refraction pole, the incident rays from the camera are thus always refracted towards the positive refraction pole (see also Fig. \ref{fig:refraction_all_rays}, Left).
\end{proof}
This means that in the image the 2D displacement direction with respect to the refraction center (inwards or outwards) depends on the sign of the 3D decentering vector and is the explanation of the barrel and pincushion distortion (depending on backward or forward decentering) known to underwater photographers and empirically reported e.g. in \cite{menna_16-characterizationdomeport}.

\begin{figure}[t]
	\begin{center}
		\def\svgwidth{0.48\textwidth}
		{%% Creator: Inkscape inkscape 0.92.3, www.inkscape.org
%% PDF/EPS/PS + LaTeX output extension by Johan Engelen, 2010
%% Accompanies image file 'refraction_dome_all_rays.pdf' (pdf, eps, ps)
%%
%% To include the image in your LaTeX document, write
%%   \input{<filename>.pdf_tex}
%%  instead of
%%   \includegraphics{<filename>.pdf}
%% To scale the image, write
%%   \def\svgwidth{<desired width>}
%%   \input{<filename>.pdf_tex}
%%  instead of
%%   \includegraphics[width=<desired width>]{<filename>.pdf}
%%
%% Images with a different path to the parent latex file can
%% be accessed with the `import' package (which may need to be
%% installed) using
%%   \usepackage{import}
%% in the preamble, and then including the image with
%%   \import{<path to file>}{<filename>.pdf_tex}
%% Alternatively, one can specify
%%   \graphicspath{{<path to file>/}}
%% 
%% For more information, please see info/svg-inkscape on CTAN:
%%   http://tug.ctan.org/tex-archive/info/svg-inkscape
%%
\begingroup%
  \makeatletter%
  \providecommand\color[2][]{%
    \errmessage{(Inkscape) Color is used for the text in Inkscape, but the package 'color.sty' is not loaded}%
    \renewcommand\color[2][]{}%
  }%
  \providecommand\transparent[1]{%
    \errmessage{(Inkscape) Transparency is used (non-zero) for the text in Inkscape, but the package 'transparent.sty' is not loaded}%
    \renewcommand\transparent[1]{}%
  }%
  \providecommand\rotatebox[2]{#2}%
  \newcommand*\fsize{\dimexpr\f@size pt\relax}%
  \newcommand*\lineheight[1]{\fontsize{\fsize}{#1\fsize}\selectfont}%
  \ifx\svgwidth\undefined%
    \setlength{\unitlength}{600bp}%
    \ifx\svgscale\undefined%
      \relax%
    \else%
      \setlength{\unitlength}{\unitlength * \real{\svgscale}}%
    \fi%
  \else%
    \setlength{\unitlength}{\svgwidth}%
  \fi%
  \global\let\svgwidth\undefined%
  \global\let\svgscale\undefined%
  \makeatother%
  \begin{picture}(1,1)%
    \lineheight{1}%
    \setlength\tabcolsep{0pt}%
    \put(0,0){\includegraphics[width=\unitlength,page=1]{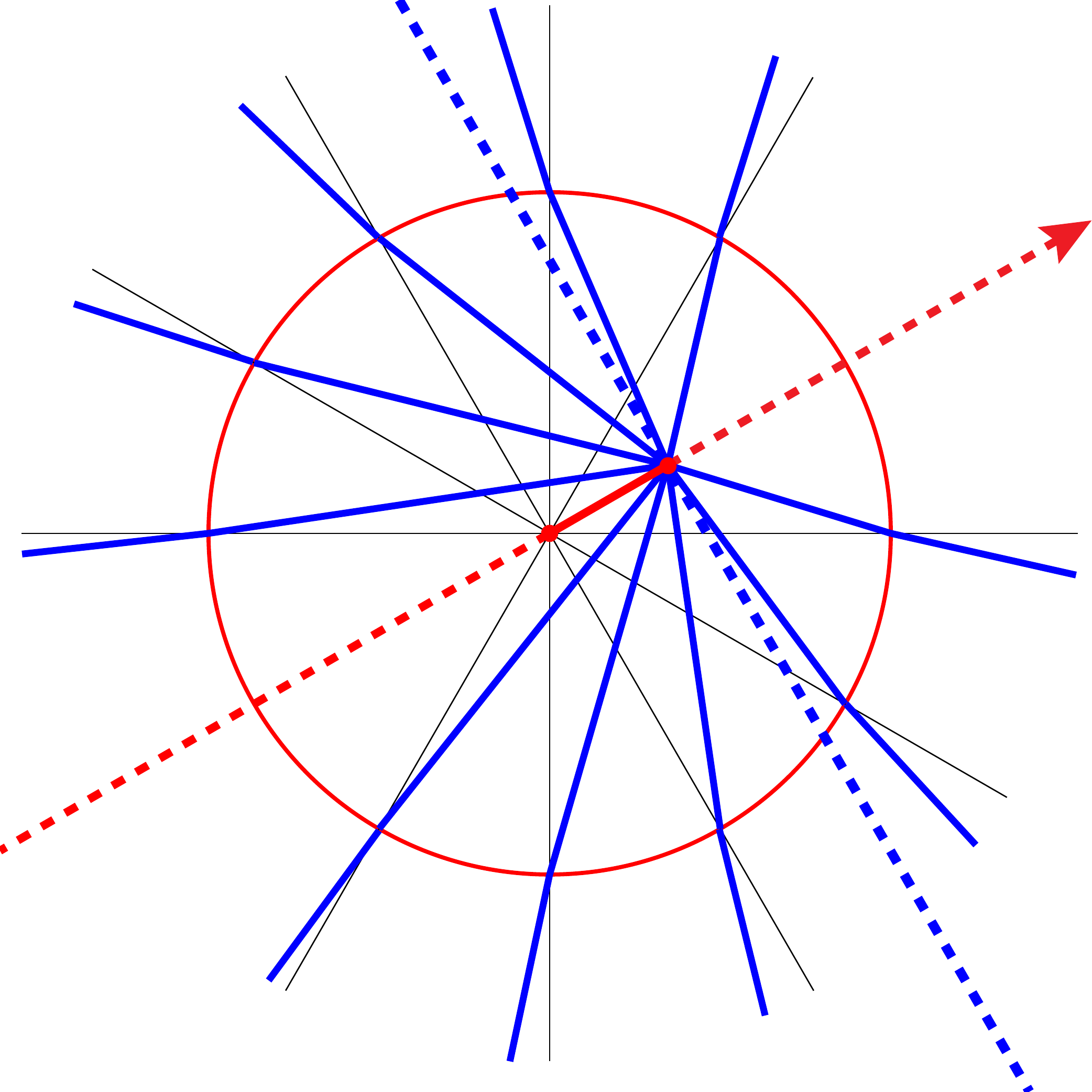}}%
    \put(0,0){\includegraphics[width=\unitlength,page=2]{refraction_dome_all_rays.pdf}}%
    \put(0,0){\includegraphics[width=\unitlength,page=3]{refraction_dome_all_rays.pdf}}%
    \put(0.801814,0.64824038){\makebox(0,0)[lt]{\lineheight{1.25}\smash{\begin{tabular}[t]{l}$\q I_\mathrm{pole+}$\end{tabular}}}}%    
    \put(0.66758,0.54338263){\makebox(0,0)[lt]{\lineheight{1.25}\smash{\begin{tabular}[t]{l}Camera center \end{tabular}}}}%
    \put(0.42221313,0.40894287){\makebox(0,0)[lt]{\lineheight{1.25}\smash{\begin{tabular}[t]{l}Dome center \end{tabular}}}}%
    \put(0.0267065,0.92464112){\makebox(0,0)[lt]{\lineheight{1.25}\smash{\begin{tabular}[t]{l}Water\end{tabular}}}}%
    \put(0.28376038,0.684834){\makebox(0,0)[lt]{\lineheight{1.25}\smash{\begin{tabular}[t]{l}Air\end{tabular}}}}%
  \end{picture}%
\endgroup%
}\quad
		\def\svgwidth{0.48\textwidth}
		{%% Creator: Inkscape inkscape 0.92.3, www.inkscape.org
%% PDF/EPS/PS + LaTeX output extension by Johan Engelen, 2010
%% Accompanies image file 'SupplementalMaterial.pdf' (pdf, eps, ps)
%%
%% To include the image in your LaTeX document, write
%%   \input{<filename>.pdf_tex}
%%  instead of
%%   \includegraphics{<filename>.pdf}
%% To scale the image, write
%%   \def\svgwidth{<desired width>}
%%   \input{<filename>.pdf_tex}
%%  instead of
%%   \includegraphics[width=<desired width>]{<filename>.pdf}
%%
%% Images with a different path to the parent latex file can
%% be accessed with the `import' package (which may need to be
%% installed) using
%%   \usepackage{import}
%% in the preamble, and then including the image with
%%   \import{<path to file>}{<filename>.pdf_tex}
%% Alternatively, one can specify
%%   \graphicspath{{<path to file>/}}
%% 
%% For more information, please see info/svg-inkscape on CTAN:
%%   http://tug.ctan.org/tex-archive/info/svg-inkscape
%%
\begingroup%
  \makeatletter%
  \providecommand\color[2][]{%
    \errmessage{(Inkscape) Color is used for the text in Inkscape, but the package 'color.sty' is not loaded}%
    \renewcommand\color[2][]{}%
  }%
  \providecommand\transparent[1]{%
    \errmessage{(Inkscape) Transparency is used (non-zero) for the text in Inkscape, but the package 'transparent.sty' is not loaded}%
    \renewcommand\transparent[1]{}%
  }%
  \providecommand\rotatebox[2]{#2}%
  \newcommand*\fsize{\dimexpr\f@size pt\relax}%
  \newcommand*\lineheight[1]{\fontsize{\fsize}{#1\fsize}\selectfont}%
  \ifx\svgwidth\undefined%
    \setlength{\unitlength}{595.27559055bp}%
    \ifx\svgscale\undefined%
      \relax%
    \else%
      \setlength{\unitlength}{\unitlength * \real{\svgscale}}%
    \fi%
  \else%
    \setlength{\unitlength}{\svgwidth}%
  \fi%
  \global\let\svgwidth\undefined%
  \global\let\svgscale\undefined%
  \makeatother%
  \begin{picture}(1,1)%
    \lineheight{1}%
    \setlength\tabcolsep{0pt}%
    \put(0.1151129,0.79820405){\makebox(0,0)[lt]{\lineheight{1.25}\smash{\begin{tabular}[t]{l}Water\end{tabular}}}}%
    \put(0.32802788,0.64102282){\makebox(0,0)[lt]{\lineheight{1.25}\smash{\begin{tabular}[t]{l}Air\end{tabular}}}}%
    \put(0,0){\includegraphics[width=\unitlength,page=1]{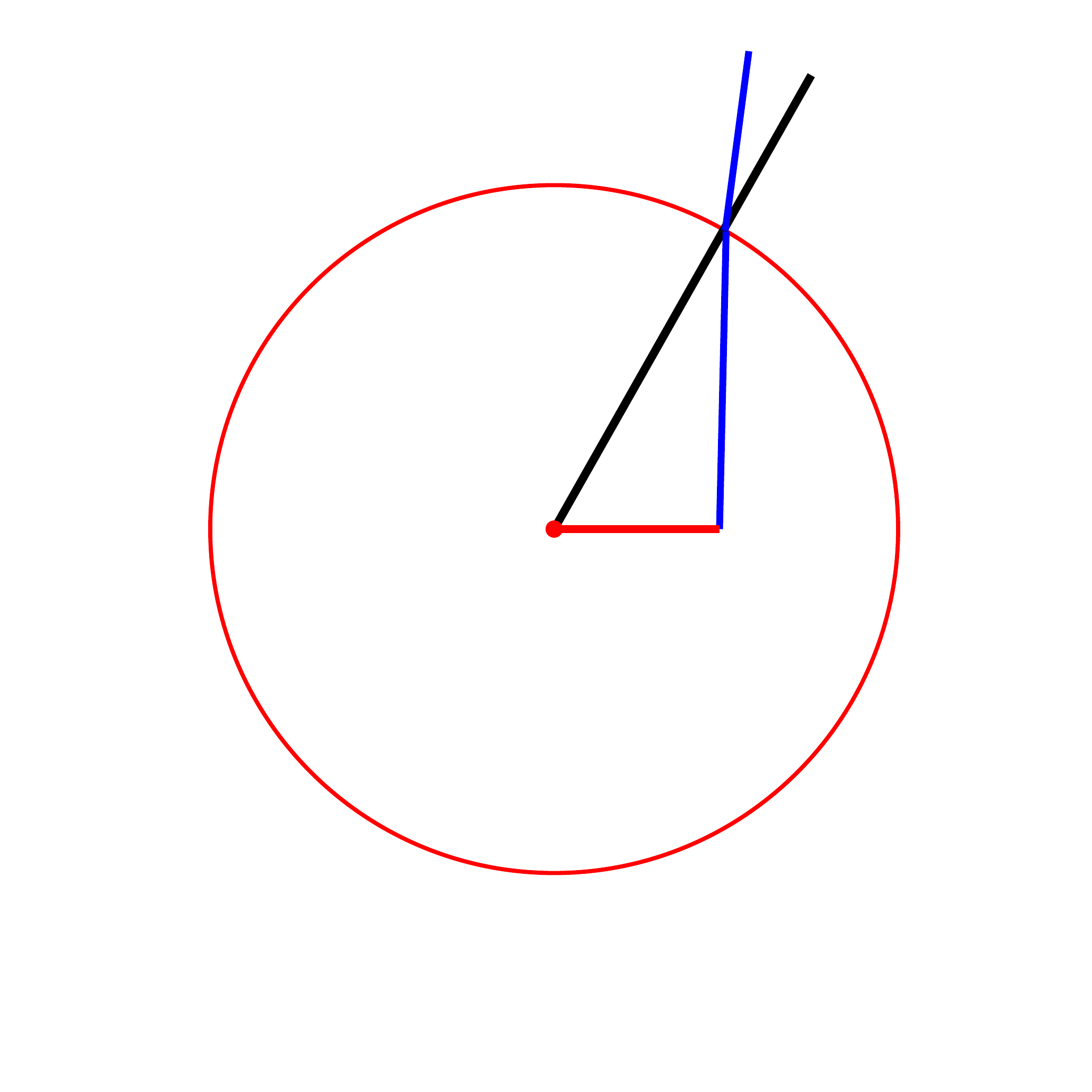}}%
    \put(0.66196518,0.47079573){\makebox(0,0)[lt]{\lineheight{1.25}\smash{\begin{tabular}[t]{l}C\end{tabular}}}}%
    \put(0,0){\includegraphics[width=\unitlength,page=2]{SupplementalMaterial.pdf}}%
    \put(0.46538999,0.47499498){\makebox(0,0)[lt]{\lineheight{1.25}\smash{\begin{tabular}[t]{l}O\end{tabular}}}}%
    \put(0.70927133,0.78072142){\makebox(0,0)[lt]{\lineheight{1.25}\smash{\begin{tabular}[t]{l}$\rm P$\end{tabular}}}}%
    \put(0,0){\includegraphics[width=\unitlength,page=3]{SupplementalMaterial.pdf}}%
    \put(0.60012084,0.6285239){\makebox(0,0)[lt]{\lineheight{1.25}\smash{\begin{tabular}[t]{l}$\alpha$\end{tabular}}}}%
    \put(0.70359723,0.94970281){\makebox(0,0)[lt]{\lineheight{1.25}\smash{\begin{tabular}[t]{l}$\beta$\end{tabular}}}}%
  \end{picture}%
\endgroup%
}
	\end{center}
	
	\caption{Left: Refraction of various blue rays from the camera center happens towards the positive refraction pole. Right: Maximum change of direction happens to the rays that enter the camera center from the plane perpendicular to the axis.}
	
	\label{fig:refraction_all_rays}
\end{figure}

\begin{theorem}\label{theo:perpRefrac} In the thin dome port model the maximum change of direction (refraction at sphere) happens to the rays that approach the camera center inside the plane perpendicular to the refraction axis. \end{theorem}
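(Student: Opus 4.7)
The plan is to split the argument into two pieces: (i) find, as a purely geometric problem, the ray direction at the camera center that maximises the angle of incidence $\alpha$ on the inner sphere, and (ii) verify that, under Snell's law in the thin-dome air/water limit, the deflection angle is a monotone non-decreasing function of $\alpha$, so that the two maxima coincide. By light-path reversibility, it is equivalent to work with rays leaving the camera center.

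For (i), I would work inside the plane of refraction (Lemma~\ref{lem:planeofrefraction}) and consider the triangle with vertices the dome center $\q O$ (the origin), the camera center $\q C = \q v_{\mathrm{off}}$, and the intersection $\q I$ of the ray with the inner sphere. The inward surface normal at $\q I$ is collinear with $\q O - \q I$, and since negating both the incoming direction $\q I - \q C$ and the normal direction $\q I - \q O$ preserves their enclosed angle, the angle of incidence is exactly the interior angle $\angle \q O \q I \q C$ of this triangle. The law of sines then yields
\[
\sin \alpha \;=\; \frac{d}{R}\, \sin \psi ,
\]
where $d = \|\q v_{\mathrm{off}}\| < R$, $R$ is the inner dome radius, and $\psi$ is the angle at $\q C$ between the ray direction and the refraction axis $\q a$ (note $\sin \psi = \sin(\pi - \psi)$, so forward- and backward-going rays are handled on the same footing). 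Since $d/R < 1$, this expression is maximised precisely when $\sin \psi = 1$, i.e. when the ray is orthogonal to $\q a$. The locus of all such rays through $\q C$ is exactly the plane through the camera center perpendicular to the refraction axis, which is the claim of the theorem.

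For (ii), in the thin dome Snell's law at the single air/water interface reads $\mu_{\mathrm{air}} \sin \alpha = \mu_{\mathrm{water}} \sin \beta$, so $\beta \le \alpha$ under the standing assumption $\mu_{\mathrm{air}} \le \mu_{\mathrm{water}}$. Writing $\beta = \arcsin\bigl((\mu_{\mathrm{air}}/\mu_{\mathrm{water}}) \sin \alpha\bigr)$ and differentiating,
\[
\frac{d(\alpha - \beta)}{d\alpha} \;=\; 1 - \frac{\mu_{\mathrm{air}}}{\mu_{\mathrm{water}}}\, \frac{\cos \alpha}{\cos \beta} \;\ge\; 0,
\]
because $\mu_{\mathrm{air}}/\mu_{\mathrm{water}} \le 1$ and $\cos \beta \ge \cos \alpha$. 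Hence the deflection $\alpha - \beta$ is monotone non-decreasing in $\alpha$ on $[0, \pi/2]$ and is therefore maximised by the same rays identified in (i).

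The principal obstacle is the bookkeeping in step (i) — carefully identifying the incidence angle with the triangle's interior angle at $\q I$ and confirming that the law-of-sines identity holds uniformly for forward- ($\psi < \pi/2$) and backward-going rays ($\psi > \pi/2$), which works because only $\sin \psi$ enters. Step (ii) is elementary calculus; total internal reflection is not a concern since light travels from the less to the more optically dense medium, so $\beta$ is well-defined throughout $[0, \arcsin(d/R)] \subset [0, \pi/2)$.
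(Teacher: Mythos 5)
Your proposal is correct and follows essentially the same route as the paper's proof: reduce to the plane of refraction, show via Snell's law that the deflection $\alpha-\beta$ is monotone in the incidence angle $\alpha$, and then use the law of sines in the triangle (dome center, camera center, intersection point) to conclude that $\alpha$ is maximised exactly when the ray at the camera center is perpendicular to the refraction axis. Your write-up is in fact slightly cleaner on two points the paper glosses over — the explicit identification of the incidence angle with the interior angle at $\q I$, and the remark that $\sin\psi=\sin(\pi-\psi)$ handles forward- and backward-going rays uniformly — but the argument is the same.
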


\begin{proof}
	Given a unit circle $\odot \rm O$, a point $\rm C$ inside the circle has the distance $k\in[0,1]$ to the circle center $\rm O$. A ray from $\rm C$ intersects $\odot \rm O$ at $\rm P$ with the incidence angle $\angle\alpha$ and an outgoing angle $\angle\beta$ (see Fig. \ref{fig:refraction_all_rays}, Right). The change of direction can be represented as $\angle \rm diff = \angle\alpha-\angle\beta$. According to Snell's Law, $n_{air}\sin{\alpha} = n_{water}\sin{\beta}$, change of direction in range $[0 , \pi/2]$ can be rewritten to:  
	
	\begin{align}
	\angle \rm diff = \alpha - \arcsin{(\frac{n_{air}}{n_{water}}\sin{\alpha})}, \quad  (\alpha \in [0 , \pi/2])
	\end{align}
	Its first derivative is:
	
	\begin{multline}
	\frac{\partial \angle \rm diff}{\partial \alpha}  = 1-\frac{1}{\sqrt{1-(\frac{n_{air}}{n_{water}} \sin{\alpha})^2}} \cdot{\frac{n_{air}}{n_{water}}}\cos{\alpha} 
	\\ 
	= 1 - \sqrt{\frac{1}{\frac{ n_{water}^2 - n_{air}^2 }{n_{air}^2} \cdot \frac{1}{\cos^2{\alpha}}+1}} > 0
	\end{multline}
	Since the derivative is strictly positive, $\angle \rm diff$ is monotonically increasing ($\alpha \in [0 , \pi/2]$) and does not have local maxima. Then the problem of finding a point $\rm P$ on the circle which has the largest changes of direction $\angle \rm diff$ is equivalent to find $\rm P$ which has the largest incident angle $\angle\alpha$. 
	
	Then, according to the Law of Sines, 
	\begin{equation}
	\frac{\overline{\rm OC}}{\sin \alpha} = \frac{\overline{\rm OP}}{\sin {\angle \rm POC}},
	\end{equation}
	since $\rm OC$ and $\rm OP$ are fixed, and $\sin \alpha$ is monotonically increasing in the range $[0 , \pi/2]$, 
	$\angle\alpha$ must have its largest value when $\sin {\angle \rm POC}$ reaches its maximum 1.
	Since $\angle \rm POC \in [0 , \pi]$, it follows that $\angle \rm POC = \pi/2$. %, $\sin {\angle \rm POC}=1$.
	Therefore, the maximum incident angle $\angle\alpha$ on the circle happens when $\rm PC \bot OC$.
	
	Since, for the sphere all refractions happen in a plane of refraction, which always include the axis,
	we can subdivide the sphere surface into circles that include the poles, and in each of them consider the problem only as a 2F problem inside the specific plane of refraction. As shown above, in each of them the maximum change of direction happens perpendicular to the axis.
	
\end{proof}
This means that the ray that meets the camera center perpendicular to the refraction axis has suffered from the largest 
%\todo{No, not pixel! This should be angular change, not pixels, because of camera orientation and different angular resolutions in image center and towards the boundaries ! (fixed)}
angular change (see Fig. \ref{fig:refraction_all_rays}, Left), whereas the ray on the axis is not at all refracted.
This is an important finding for setting up experiments to observe or to calibrate the decentering.
Note that the largest effect in the image also depends on the orientation of the camera, since the angular resolution of a pinhole camera increases towards the boundaries: Lateral (left-right, or up-down, in the camera coordinate system) decentering will provide a much clearer signal-to-noise ratio of refraction effects vs. corner detector uncertainty, as compared to forward-backward decenterings. 

%------------------------------------------------------------------------
\section{Decentered Dome Calibration}

\begin{figure*}[t]
	\begin{center}
		\def\svgwidth{1.0\textwidth}
		\scriptsize{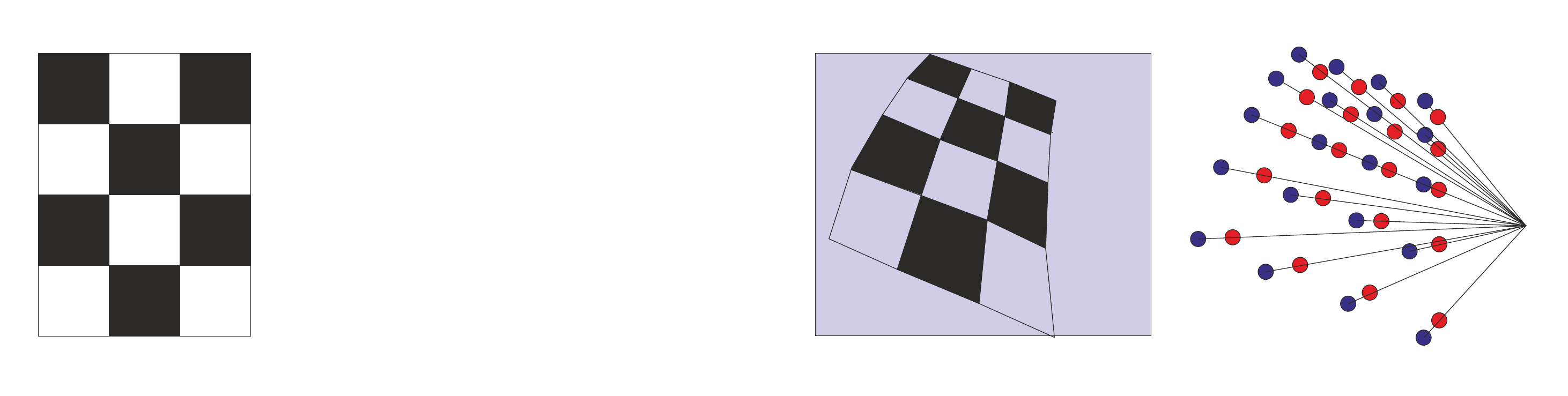}
	\end{center}
	\caption{Sketch of offset axis direction estimation principle: A chessboard (left sketch, corners $\q x_c$ marked in green) is photographed from an oblique viewpoint (second image), the projected positions without refraction $\q x_a$ marked in red. When submerging the camera and chessboard into the water behind a decentered dome, the light rays will be refracted (third image) and the projected positions $\q x_r$ are marked in blue. The displacement vectors (right image) between corresponding $\q x_a$ and $\q x_r$ all intersect in a single point, which corresponds to the refraction center $\q r$. This point can be estimated from correspondences of $\q x_c$ and $\q x_r$ (without knowledge of $\q x_a$ !) similar to an epipole (where all epipolar lines meet). }
	\label{fig:chessrefraction}
\end{figure*}
In this section, we will derive a calibration procedure from the insights of the previous section.
First, we will present the geometrical considerations that allow directly inferring the refraction axis and a distinction between positive or negative direction decentering from one underwater photo showing refracted chessboard corners. The result can be used to initialize a $\q v_\mathrm{off}$-optimization, using multiple images to actually measure the decentering.
Throughout the calibration, we assume the camera intrinsics are known.

\subsection{Direct Estimation of the Refraction Center}
We describe a corner's position on the original chessboard by $\q x_c$ (cf. to Fig. \ref{fig:chessrefraction}). 
When photographing a chessboard without refraction, the "as in air" image coordinates $\q x_a$ and the original chessboard pattern positions are related by a perspectivity \citep{Hartley2004_MultipleView}, a special kind of homography: $\q x_a \simeq \mq H \; \q x_c$

Keeping the chessboard pose, now consider the camera being behind a dome port and that the entire system of camera, dome and chessboard is submerged in water (underwater in Fig. \ref{fig:chessrefraction}). Imagine a line $\q q$ through the unrefracted point in air $\q x_a$ and the refraction center: $\q q = \left[ \q r \right]_\times \q x_a$

By theorem \ref{theorem:line}, the refracted point $\q x_r$ must be somewhere on this line:
$\q x_r\trans \q q = 0$.
If we now replace $\q q$ and $\q x_a$ we obtain a constraint that must hold between all points in the refracted image and their corresponding chessboard position:
\begin{equation}
\q x_r\trans \underbrace{\left[ \q r \right]_\times \mq H}_{\mq F} \q x_c = 0
\label{eq:refractionconstraint}
\end{equation}
This relation is reminiscent of epipolar geometry, where also all "displacement" (due to parallax) happens towards or away from the epipole. This principle has been exploited by \cite{hartley2007parameter} for calibration of radial distortion. We use it in a similar way to find the refraction center, but working on 3D rays rather than 2D points, and we do not have radial symmetry with respect to displacement in the image. Note that \cite{Agrawal_2012-UnwCalib} obtain an algebraically similar setting for refractive projection through flat interfaces. Essentially, as in epipolar geometry estimation, one can rearrange this equation using the Kronecker product and vectorization operator \citep{Fusiello2007AMO} to obtain constraints on the matrix $\mq F$:
\begin{equation}
\q x_c\trans \otimes \q x_r\trans \vec{\left( \mq F \right) } = 0
\label{eq:festimation}
\end{equation}
Basically, many of these equations can be stacked as for the eight-point algorithm for fundamental matrix estimation in order to estimate the vectorized matrix $\mq F$. After recomposition of $\mq F$, the refraction center $\q r$ can then be extracted as the left null vector of $\mq F$.

Since $\mq H$ is actually a perspectivity, it would even be possible to use a 5-point algorithm for essential matrix estimation, but as we will use the refraction center only as an initial guess for subsequent optimization, and as many reliable correspondences are obtained using a chessboard detector, the normalized 8-point algorithm will be a good fit for our purposes (also avoiding the ambiguities of up to ten solutions).
Multiple images of chessboards can be combined in the same way as described in \cite{hartley2007parameter} for 2D radial distortion center estimation. 
Note however that according to theorem \ref{theorem:isorefraction} iso-refraction curves (with respect to refraction angle) are conic sections in the image and that the refraction effect in pixels is depth-dependent, so it cannot be described by radial distortion and underwater images cannot simply be "unrefracted" without 3D scene knowledge.

\paragraph{Degenerate Cases}
In case the camera is already perfectly centered, the second-smallest singular value of the resulting equation system will also be zero and no unique $\mq F$ can be obtained. This corresponds to the case of no radial distortion in \cite{hartley2007parameter} or fundamental matrix estimation in a planar scene and can easily be detected. Besides that the same algebraic conditions hold as for fundamental matrix estimation (e.g. number of correspondences, non-collinearity \cite{Hartley2004_MultipleView}).

Now, using the camera orientation $\m R$, the corresponding refraction axis in world coordinates can be computed as:
\begin{equation}
\q a = \frac{\mq{R}\trans\q{r}}{ \parallel\mq{R}\trans\q r\parallel}
\label{eq:axis}
\end{equation}
Note that so far, we have obtained the refraction axis, but the sign of the decentering along the axis (forward vs. backward) is still missing. When drawing a line through two refracted chessboard corner points $\q x_{r,1}, \q x_{r,3}$ (using the cross product operator $\times$), we can determine whether a third chessboard point $\q x_{r,2}$ in between them is projected onto the same side of their connection line as the refraction center by using the convexity test:
\begin{multline}
\mathrm{convexity}(\q x_{r,1}, \q x_{r,2}, \q x_{r,3}, \q r)
\\
= \mathrm{sign}\left(\left(\q x_{r,1} \times \q  x_{r,3}\right)\trans \q x_{r,2} \; \cdot \left(\q x_{r,1} \times \q  x_{r,3}\right)\trans \q r\right)
\end{multline} 
A positive sign in this equation relates to barrel distortion, whereas a negative sign determines pincushion distortion of the refracted chessboard corners. From theorem \ref{theorem:refractiondirection} we can thus infer the sign of the decentering vector $\q v_\mathrm{off}$. In most practically relevant cases one will have an estimate on the maximum expected decentering (e.g. 10\% of the dome radius) allowing for a good initial guess for a later parameter optimization.

\subsection{Analytical Forward Projection for Thin Domes}

\begin{figure}[!h]
	\begin{center}
		\def\svgwidth{0.7\linewidth}
		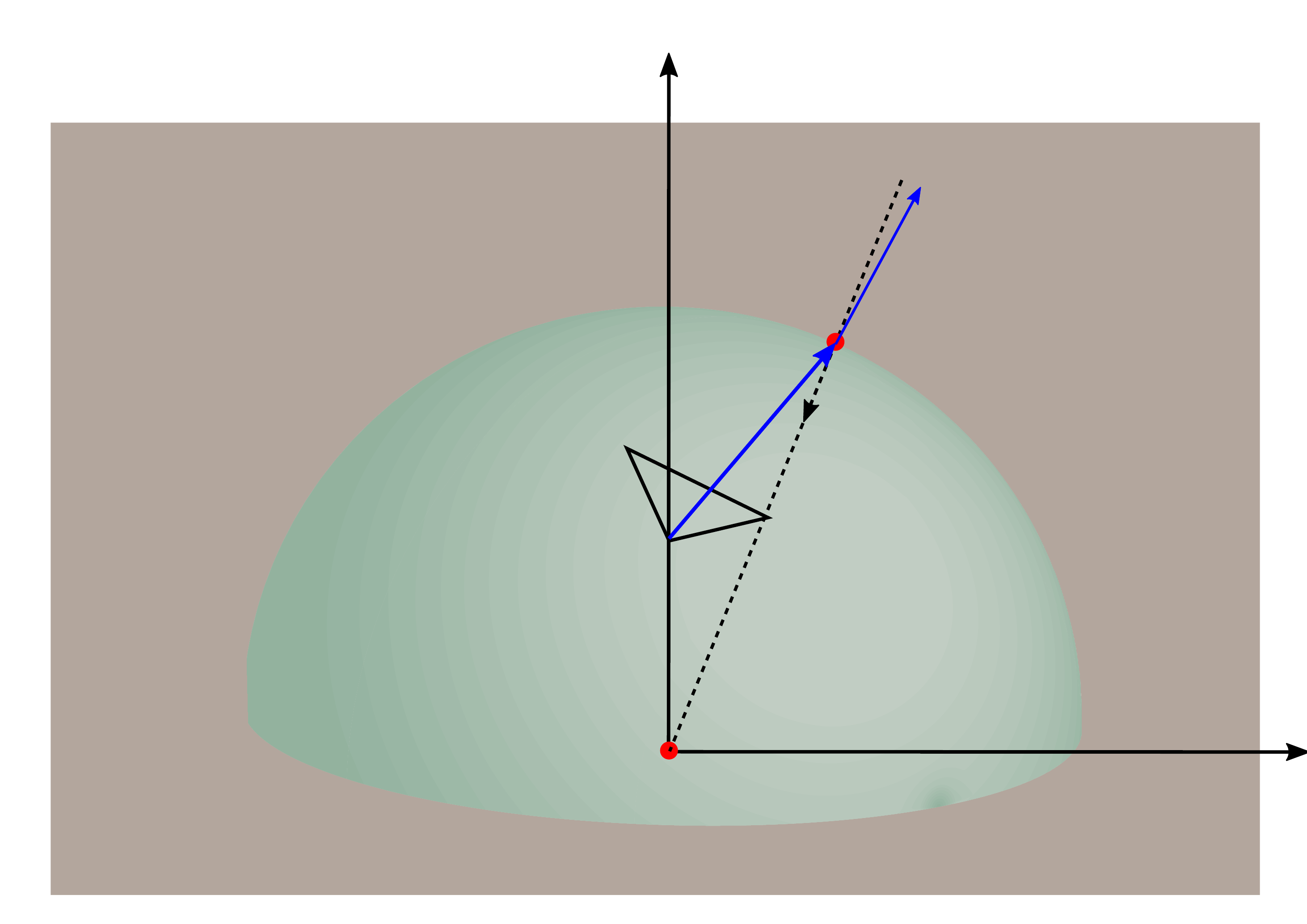
	\end{center}
	
	\caption{Due to the property of the axial camera, the refraction on the spherical layer can be analyzed on the plane of refraction. In the local coordinate system of the plane of refraction, $\d z_1$ aligns with the refraction axis and points from the spherical center to the camera center.}
	
	\label{fig:afp_derive}
\end{figure}

Now, we derive the analytical forward projection for decentered thin domes, where there is only one spherical layer of refraction. By lemma \ref{lem:planeofrefraction}, we know that all segments of a light path from the camera center and the refraction axis all lie in a single plane, therefore, the refraction on the spherical layer can be analyzed on the plane of refraction, which is similar to the derivation for multi-layer flat interfaces \citep{Agrawal_2012-UnwCalib}. But the difference is that the plane of refraction is constructed by the refraction axis and the 3D scene point.
Let 3D vectors $\d z_1$ and $\d z_2$ be the vertical axis and the horizontal axis of the 2D local coordinate system on this plane, and the spherical center be at the origin. Let $\d z_1$ align with the refraction axis and point from the spherical center to the camera's optical center, as illustrated in Fig. \ref{fig:afp_derive}.
Then, the normal of the plane can be found by taking the cross product between 3D point $\q X$ and the vertical axis $\d z_1$.
Afterwards, we can find the horizontal axis $\d z_2$ as the cross product between $\d z_1$ and the normal of the plane.
Therefore, a point on the plane of refraction has a 2D Euclidean coordinate of $(z_2, z_1)\trans$.

Assume the camera is decentered by a distance of $d$, thus the camera center has a coordinate of $(0, d)\trans$. The 2D coordinate of the 3D scene point $\q X$ on the plane is given by $\q x = (u_x, u_y)\trans$, where $u_x = \q X\trans
\d z_2$ and $u_y = \q X\trans \d z_1$.
The 2D refraction point $\q m = (x,y)\trans$ on the spherical interface satisfies the 2D circle equation $x^2 + y^2 = r^2$, where $r$ represents the radius of the sphere.
The 2D normal on the interface at the refraction point can also be simply computed as $n = -(x,y)\trans / r$.
Now, the 2D ray shot from the camera is $\q l_0 = (x, y-d)\trans$, and the refracted ray $\q l_1$ can be computed by equation \ref{eq:calc_refracted_ray}.
The line joining the refraction point $\q m$ and the scene point $\q x$ should be parallel to the refracted ray $\q l_1$, which provides another constraint.
By removing the square root term and substituting $x^2$ by $r^2 - y^2$, we end up with a 6th degree polynomial equation of a single variable $y$. Solving the equation results in up to 6 solutions, the correct solution can be found by checking Snell's law at the spherical refraction layer.

\subsection{Iterative Forward Projection for Thick Domes}
As outlined in the previous section, the thin flat port projection equation by \cite{glaeser_2000-reflectionsOnRefractions} of degree 4 becomes degree 6 for the thin dome port, essentially because of the quadratic nature of the refraction surface.
For thick flat ports, \cite{Agrawal_2012-UnwCalib} had derived a 12th degree polynomial for the analytical forward projection. Both for this case, and also 
for the case of imaging through a solid glass sphere (degree 10, also by \cite{agrawal2010analytical}), only some extra constraints of the special setting helped to bring down the polynomial degree to 12 resp. 10. For the thick dome, we haven't found a similar extra constraint, but even if found, the quadratic nature of the refraction surface comes into play and it is likely that the degree of the polynomial will be significantly higher than 12.
Solutions to high degree polynomials can become numerically unstable, and will also require iterative, numerical solvers.
Consequently, for forward projection through thick domes we turn to the numerical approach as proposed in \cite{kunz2008hemispherical} to find the projection by iteratively solving the inverse problem (back-projection), until the correct 2D point is found.
Note that according to theorem \ref{theorem:line}, the correct 2D point lies on the line joining the refraction center $\q r$ and the "in-air" observation $\q x_a$ of the 3D point. Rather than a 2D search, the search could now be restricted to the 1D line connecting $\q x_a$ and $\q r$, which would simplify iterative forward projection.

\subsection{Decentering Estimation} 
Having obtained a good start value from the direct solver and convexity test, in this section we describe an optimization procedure to optimize the decentering vector using several images at the same time.

Different from the approach presented in \cite{she2019adjustment}, this paper estimates the decentering vector $\q v_{\mathrm{off}} \in \mathcal{R}^3$ only from underwater imagery. 
Since there is no in-air photo (like in \cite{she2019adjustment}) to provide accurate pose estimation, the poses $\mq P_i$ of the chessboard images have to be estimated jointly with $\mq v_{\mathrm{off}}$. 
This results in $6m+3$ parameters $\Theta = (\q v_{\mathrm{off}}, \mq P_1, \mq P_2, \dots, \mq P_m) \trans $, where $m$ defines the number of chessboard images
and the estimation relies on $2 \cdot m \cdot \# corners $ measurements. Assume Euclidean coordinates: $\q X = (X , Y, 0)\trans$, and the measured $i^{th}$ corner in $j^{th}$ image is $\q x_{i}^{j}$. 
Then, we optimize the energy given by:

\begin{equation}
E(\Theta) =\sum_{i \in {\Omega}} \sum_{j \in {\Phi}} \Vert \q X^{j}_{i} - \hat{\q X}^{j}_{i}( \q x^{j}_{i} , \q v_{\mathrm{off}},  \mq P^{j}) \Vert ^2
\end{equation}
Herein, $\hat{\q X}$ is the 3D point on the chessboard plane back-projected from $\q x$. Instead of minimizing the re-projection error, we back-project the corners detected in the images to the 3D chessboard, and the sum of squared differences is minimized inside the chessboard plane, since this is computationally much more efficient than doing the iterative forward projection within each optimization step.

\section{Evaluation}

To validate the geometrical insights into the decentered dome and to evaluate the proposed decentering calibration approach, we have conducted three different types of experiments.
The first type of experiment was performed using synthetic data, where we numerically simulated projections of 3D points by a decentered dome port camera system with perfectly known ground truth and noise models.
This helps to understand performance and to look at the magnitude of effects.
To take into account also effects as they occur in real images (corner detection problems, reflections, field of view issues and other physical limitations) we have also conducted real-world experiments with a deep sea dome port in a test tank.
However, here evaluation becomes very indirect, as it is very hard to obtain ground truth information (in particular, real decentering), and experiments are sensitive to deformation of the tank due to the weight of the water, calibration uncertainties, inaccurate physical measurements of distances and many other effects.
While all this will also occur in complex systems and real world applications, we think it is nevertheless important to isolate and understand the refraction effects.
We have therefore put substantial effort into another step of evaluation that employs an open source ray-tracing toolbox \cite{blender} to faithfully render images with ground truth settings for evaluation.
In particular, we modeled a real deep sea dome port including all radiuses and materials in a virtual copy of our real water pool as realistic as possible and we have verified that we can accurately reproduce images taken by the real system.
In this setup, we can control all {\em physical} parameters and understand effects in valid experiments.

\subsection{Synthetic Experiments}

\begin{figure*}[t]
	\begin{center}
		\subfloat[Center backward]{\includegraphics[width=0.3\textwidth]{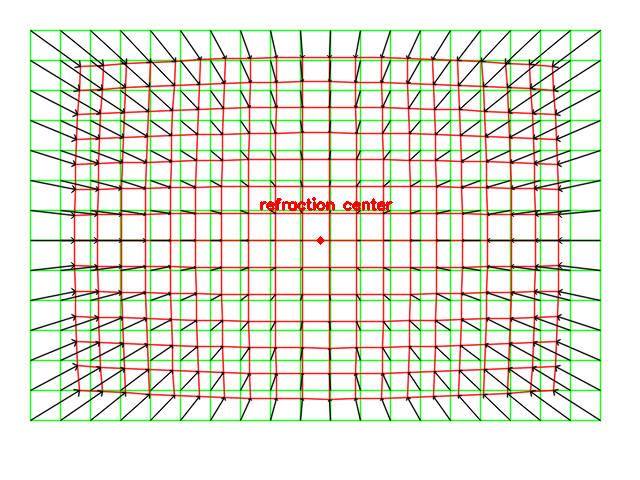}}
		\hspace{2ex}
		\subfloat[Left backward]{\includegraphics[width=0.3\textwidth]{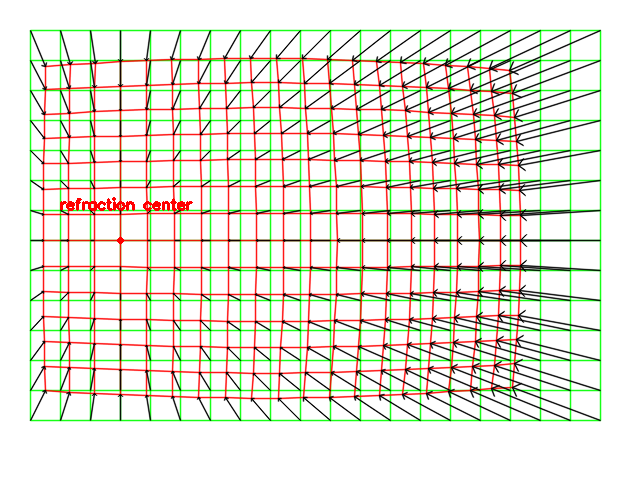}}
		\hspace{2ex}
		\subfloat[Right backward]{\includegraphics[width=0.3\textwidth]{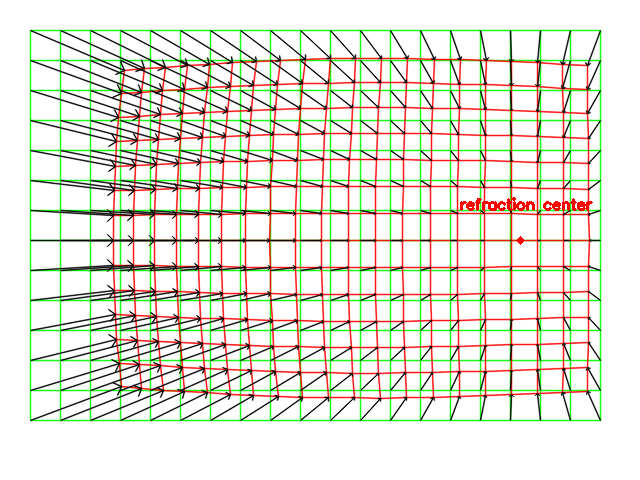}}\\
		\subfloat[Center forward]{\includegraphics[width=0.3\textwidth]{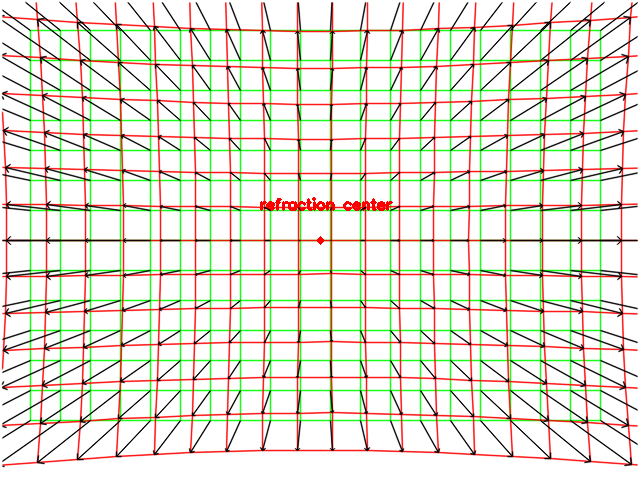}}
		\hspace{2ex}
		\subfloat[Left forward]{\includegraphics[width=0.3\textwidth]{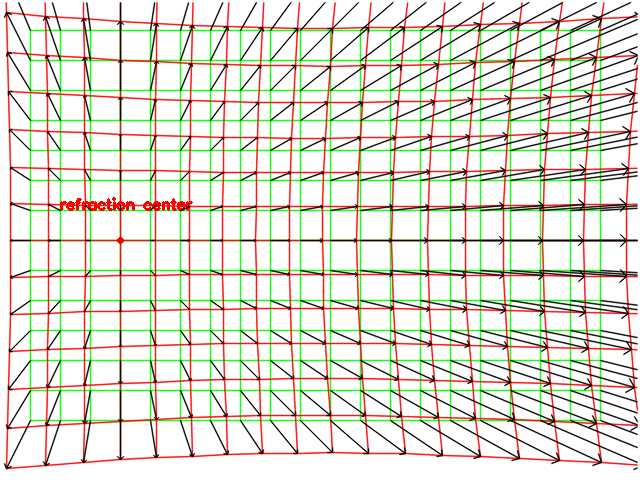}}
		\hspace{2ex}
		\subfloat[Right forward]{\includegraphics[width=0.3\textwidth]{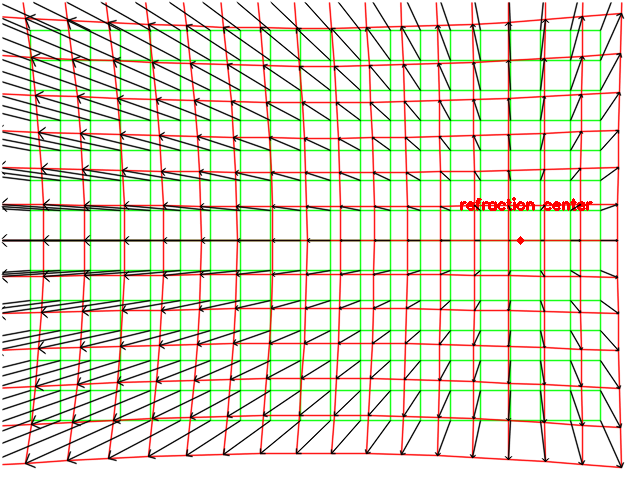}}
	\end{center}
	\caption{Refraction displacement fields created by simulation. If a camera is decentered in a dome port, the red patterns show how the grids are refracted depending on the direction of the decentering. The black arrows point from the un-refracted grids towards the refracted grids. Top: the camera is decentered backward (along the viewing direction of the camera), the grids are refracted towards the refraction center. Bottom: the grids are refracted outwards from the refraction center because the camera is decentered forward.}
	\label{fig:displacement_fields}
\end{figure*}

First, to validate the proposed decentered dome geometry, we simulated the geometric projection of a dome port camera system with known decenterings.
Here, the refraction displacement fields in the images were generated to visualize the theorems for the decentered dome geometry as shown in Fig. \ref{fig:displacement_fields}.
The dome center in the local camera coordinate system was directly projected onto the image as the refraction center.
In order to better visualize the displacement direction (inward vs. outward) with respect to the refraction center, the dome center was selected to be located in front of the camera center (Fig. \ref{fig:displacement_fields} , Top) and behind the camera center (Fig. \ref{fig:displacement_fields}, Bottom) respectively.
The original patterns shown as green grids were created as if a chessboard was observed in air (not refracted).
Then, the pattern was back-projected to the 3D space at 1$m$ depth to obtain the pattern coordinates.
Afterwards, we projected the pattern into the image considering the refraction effect, as if it was observed underwater, which is shown as red grids.
As can be seen, the grids are refracted either towards or away from the refraction center depending on the direction of the decentering.
In addition, the lines joining the un-refracted points and refracted points all intersect at the refraction center as exploited in the collinear constraint proposed in theorem \ref{theorem:line}.
Note that the refraction distortion pattern looks somewhat similar to the barrel/pincushion distortion, however, the refraction distortion center varies with the decentering direction, and the magnitude of displacement varies with the scene distance.
Although it is not explicitly marked in the figure, it is also possible to identify the \textit{iso-refraction-angle curves}.
\begin{figure*}[t]
	\begin{center}
		\subfloat[Refraction center estimation]{\includegraphics[width=0.85\linewidth]{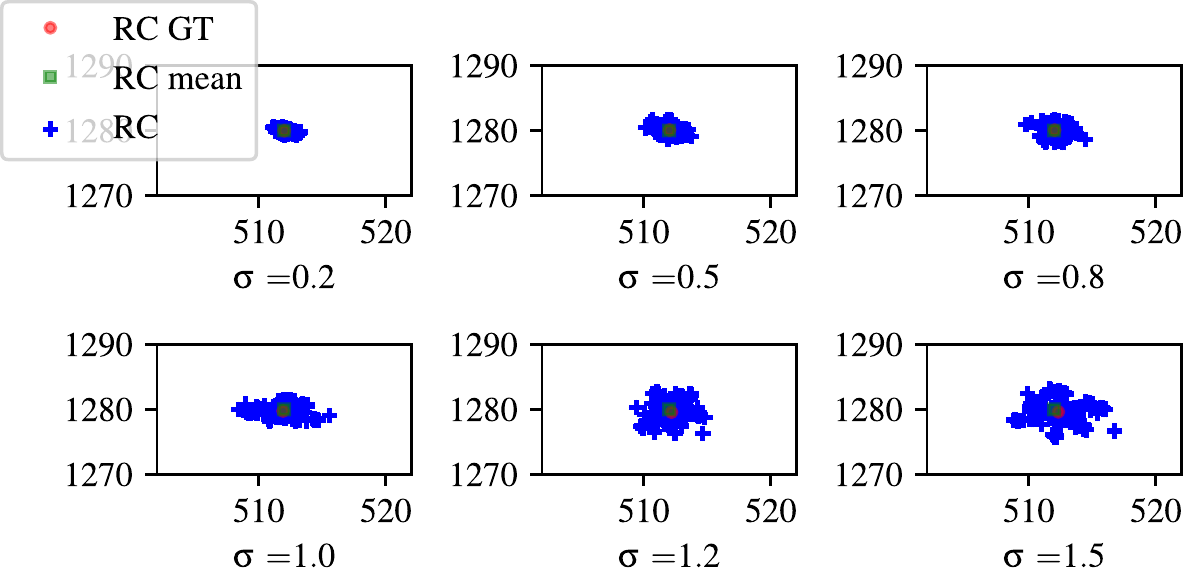}}
		\quad\subfloat[Refraction axis direction estimation]{\includegraphics[width=0.66\linewidth]{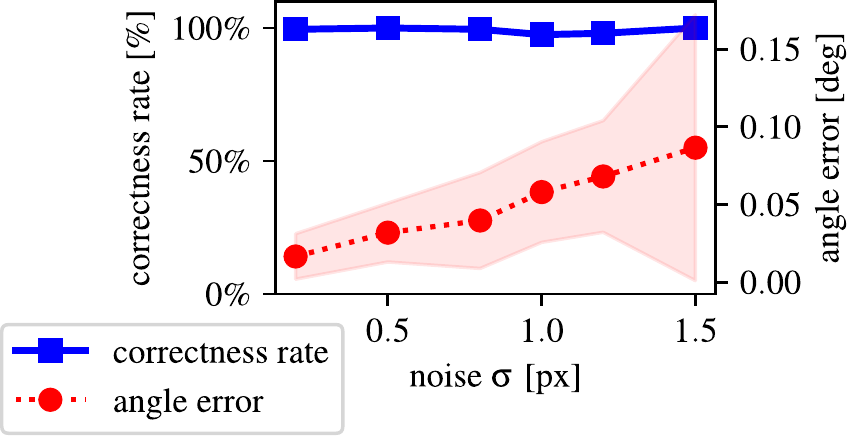}}
	\end{center}	
	\caption{Synthetic evaluation of refraction center estimation and the distinction between positive and negative decentering along the axis using synthetic data. (a), each plot shows the distribution of the estimated refraction center from 200 trials in total for each noise level. (b), the red curve shows the average and standard deviation of the angle error between the estimated refraction axis direction and the ground truth direction; the blue curve shows the correctness rate of detecting the sign of the decentering.}
	\label{fig:res_simu_eval_offdir}
\end{figure*}

Next, to evaluate the effectiveness and stability of the proposed calibration approach, synthetic experiments were conducted.
We first present the simulation experiments for the refraction center estimation and determine the sign of the decentering from noisy input data.
The calibration target was simulated as a $7 \times 8$ planar chessboard with $0.05m \times 0.05m$ square size.
The virtual camera had a resolution of $2048 \times 1536$ pixels and a field of view of $90^{\circ}$.
The dome port was simulated with a radius of $50mm$ and a thickness of $7mm$ (which mimics real domes for 6km ocean depth).
Since the refraction center estimation and the decentering sign determination require a single image of the chessboard as input, we generated 10 random poses of the chessboard to be photographed, and it is guaranteed that the chessboard is at a pose that is inside the field of view of the camera.
In order to evaluate the stability of the approach against small decentering offsets, which the refraction effect in the image is expected to be weaker in this scenario, we explicitly set the decentering vector as $\q v_{\mathrm{off}} = (-0.001, 0.001, 0.002)\trans$ (all numbers in meter, if not stated otherwise) which is an extremely small decentering in the millimeter range.
The 3D calibration targets are projected to the images using the iterative thick dome projection and zero-mean Gaussian noise with $\sigma = 0.2,0.5,0.8,1.0,1.2,1.5$ pixels is added to each projected corner.
We performed 20 trials for each of the poses and recorded the estimated refraction center and the sign of the decentering, then we measured the angle error between the estimated direction and the ground truth direction and computed the correctness rate of the sign of the axis direction.
The results are shown in Fig. \ref{fig:res_simu_eval_offdir}, from which we can infer that the refraction center can be successfully estimated despite a high noise level.
In addition, the sign of the decentering can also be correctly determined by the convexity test.

\begin{figure*}[t]
	\begin{center}
		\includegraphics[width=\linewidth]{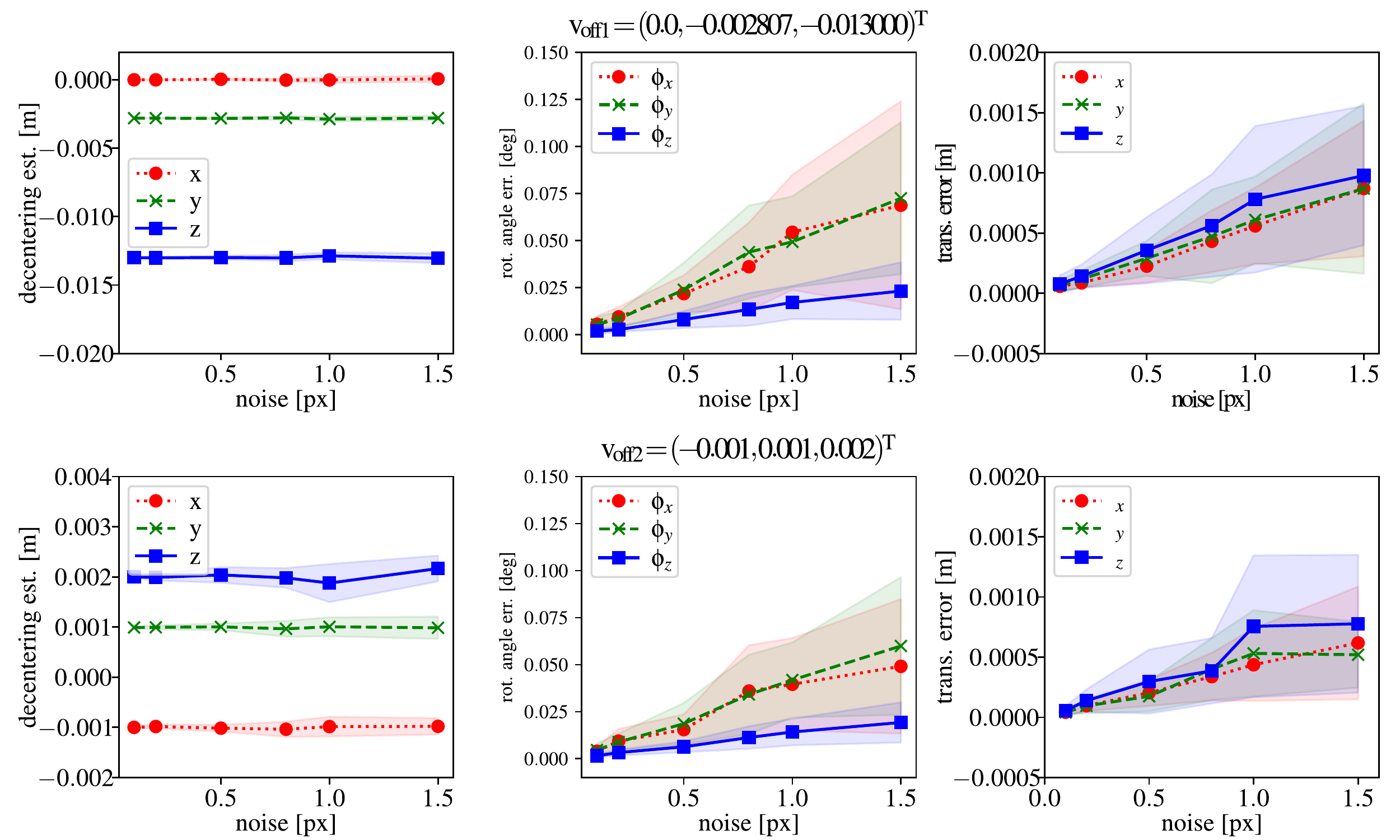}
	\end{center}
	\caption{Synthetic evaluation of decentering vector calibration in presence of a growing noise level. The plots from left to right: average and standard deviation of the estimated decentering vector, rotation angle error and translation error of the estimated camera poses.}
	\label{fig:res_simu_eval_offset_calib}
\end{figure*}

Then, we present the simulation experiments of the decentering vector calibration.
Similar to the above experiment, the dome port system, the 3D chessboard and corner observations at different poses were simulated.
The only difference is that the calibration approach is entirely based on multiple underwater images.
We set the decentering vector as $\q v_{\mathrm{off}1} = (0.0,-0.002807,-0.013)\trans m$ and $\q v_{\mathrm{off}2} = (-0.001,0.001,0.002)\trans m$.
The first set aims to validate the effectiveness of the calibration approach and the second set aims to evaluate the stability in the case of small decentering offsets.
The evaluation was repeated 20 times on each set, and 10 images were taken per trial.
The experiment results can be found in Fig. \ref{fig:res_simu_eval_offset_calib}, where the rotation angle error and the translation error show the differences between the estimated camera poses and the ground truth poses.
As can be seen, the unknown parameters were computed with high accuracy even though a considerable amount of noise was added to the measurement.
Note that the uncertainties of the estimated parameters in the second set are higher than the first set since the refraction effect is much weaker in the images, but the results can still be considered accurate.

\subsection{Experiments on Rendered Images}

\begin{figure}[t]
	\begin{center}
		\subfloat[Real experimental setup]
		{
			\includegraphics[width=0.25\textwidth]{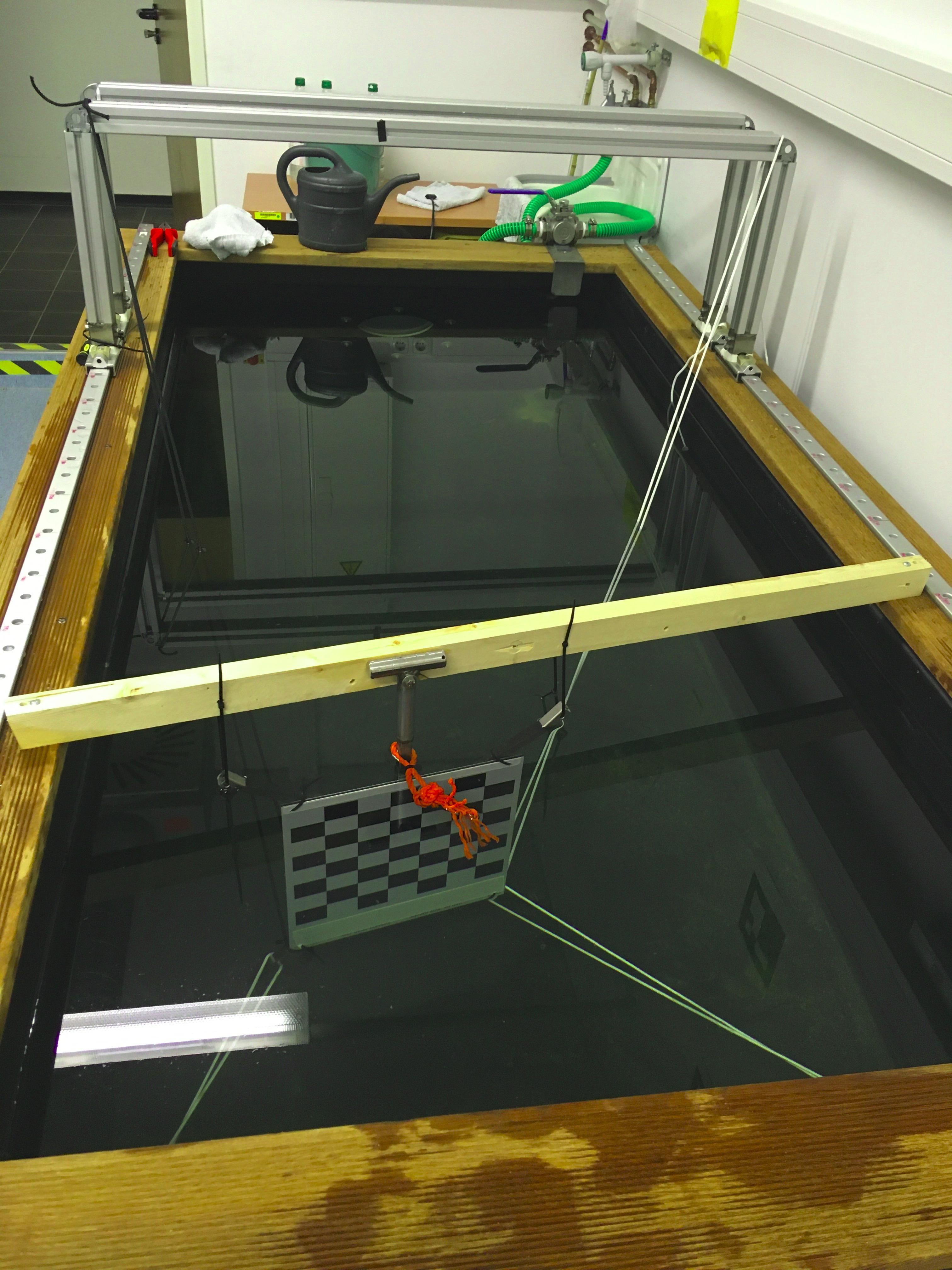}
			\includegraphics[width=0.25\textwidth]{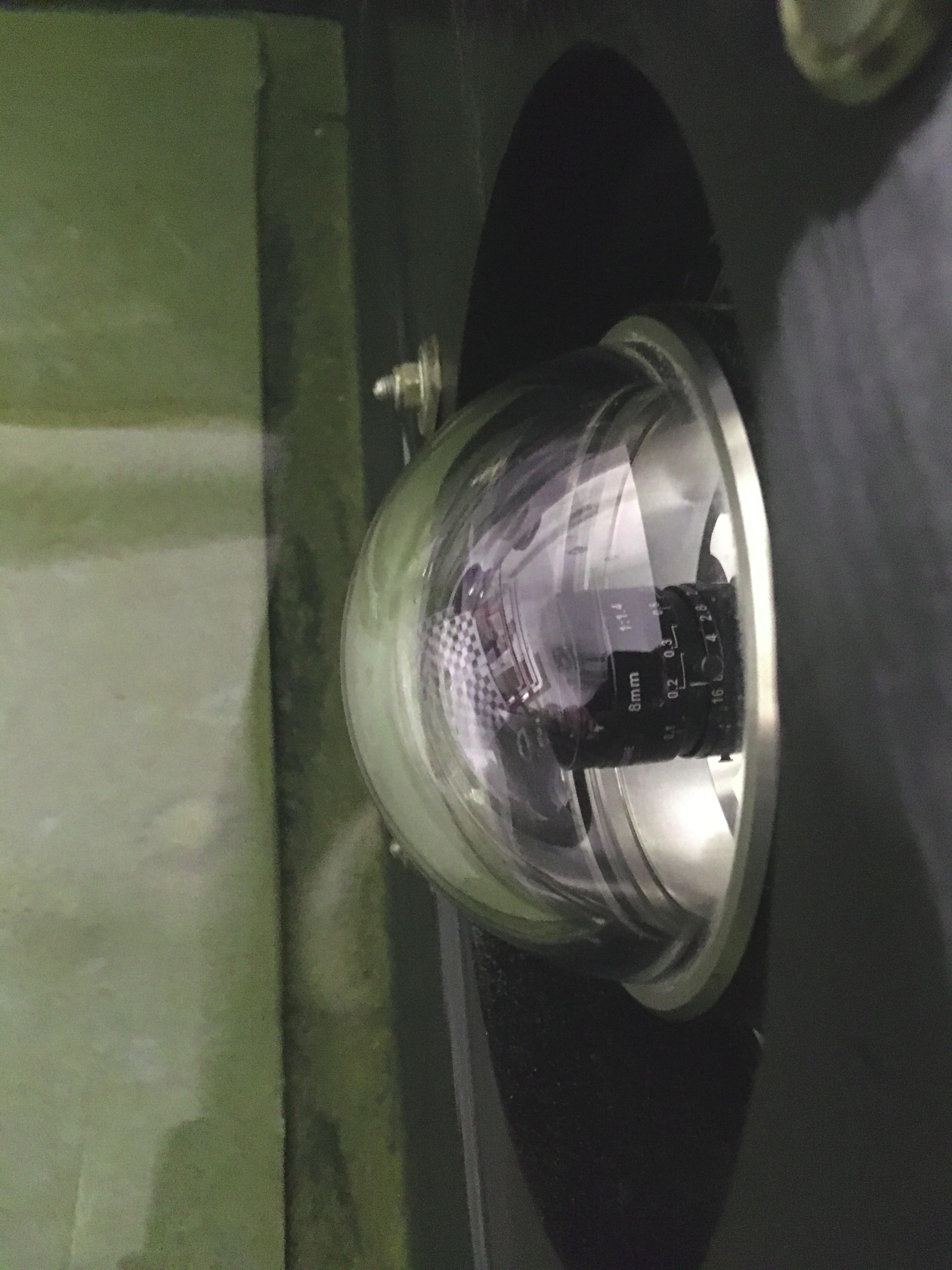}
			\includegraphics[width=0.45\textwidth]{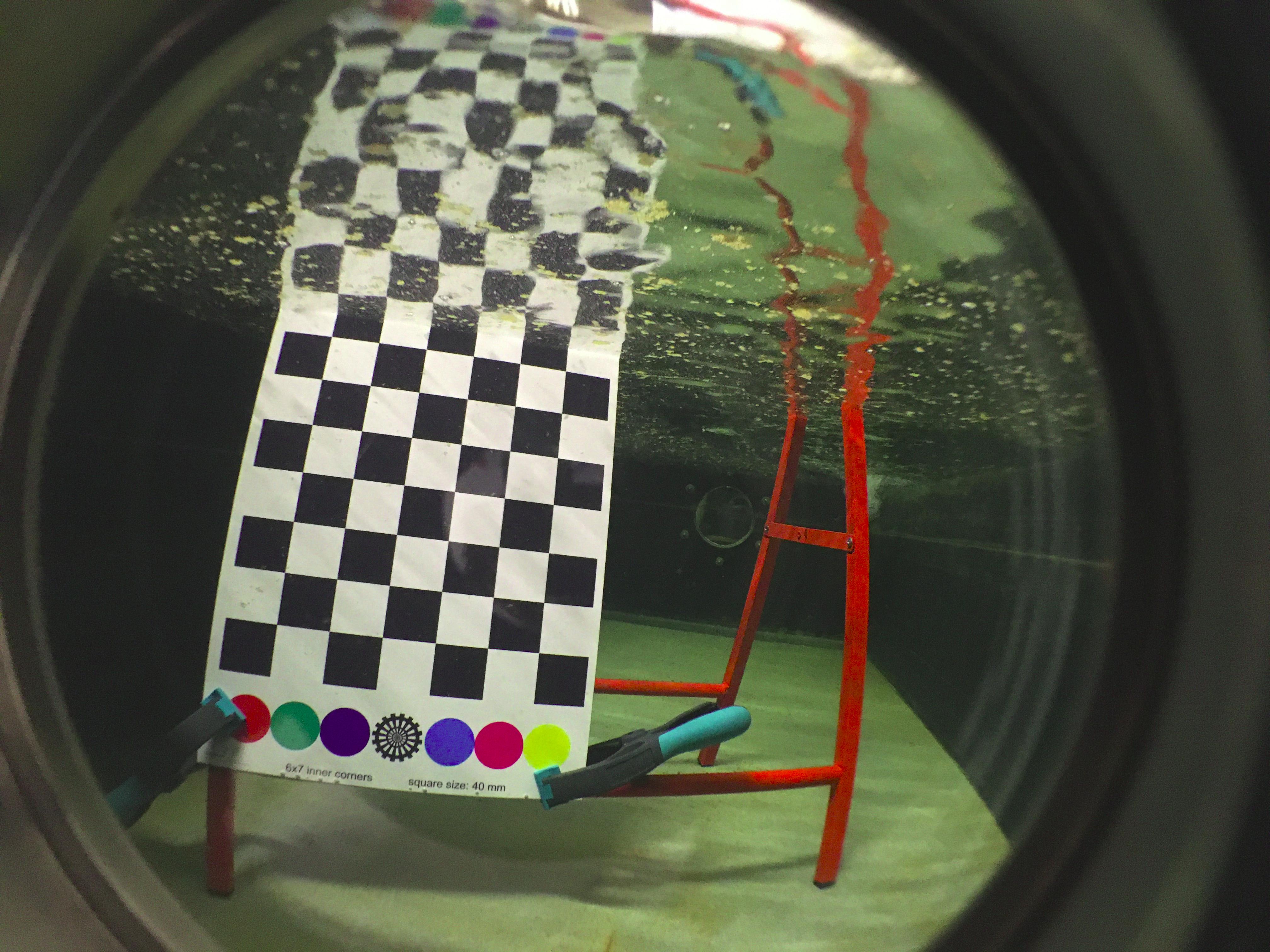}
		}\newline
		\subfloat[Blender experimental setup]
		{
			\includegraphics[width=0.3\textwidth]{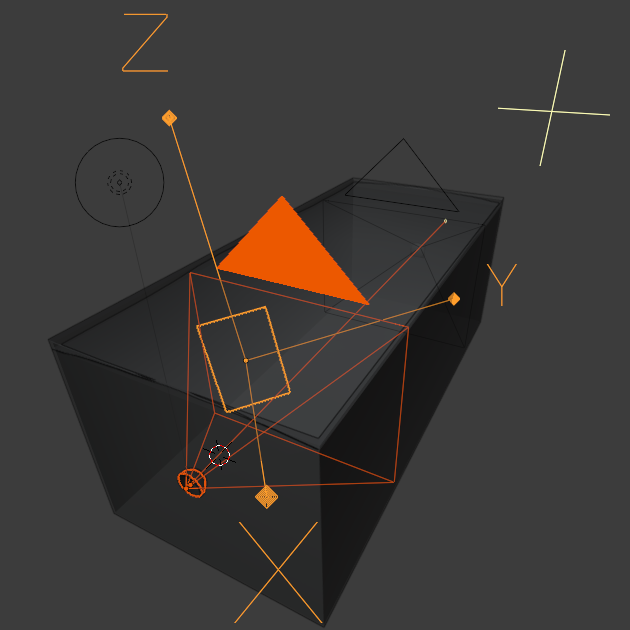}
			\includegraphics[width=0.3\textwidth]{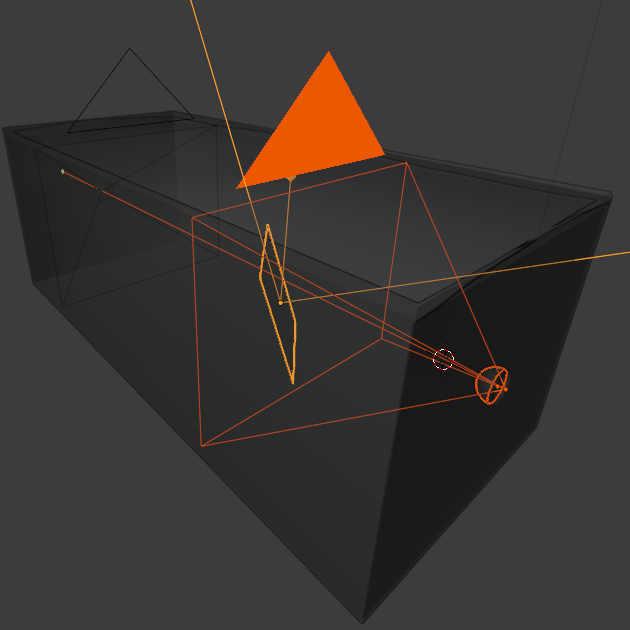}
			\includegraphics[width=0.3\textwidth]{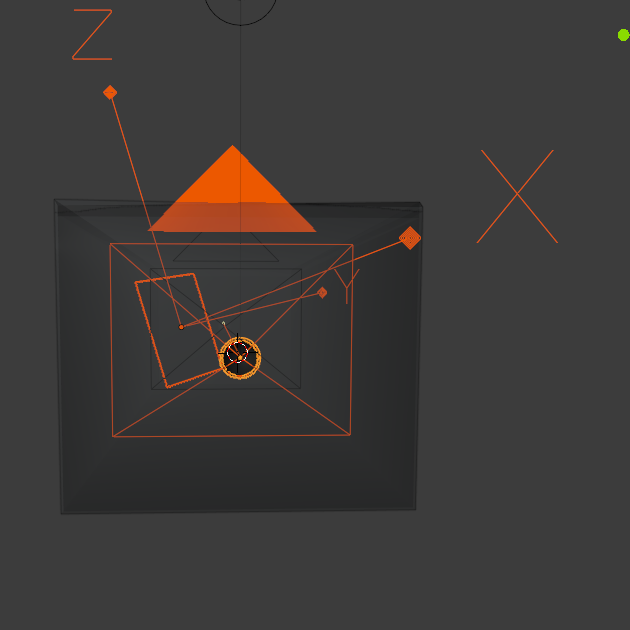}
		}
	\end{center}
	\caption{Evaluation setup in real world and in Blender. The scene in Blender mimics the setup in the real world. The black cube represents the water pool, and the orange plane with a rectangular shape represents the planar checkerboard. The dome is modeled as concentric hemispheres and it is attached to the sidewall of the pool.}
	\label{fig:blender_setup}
\end{figure}

\begin{figure*}[t]
	\begin{center}
		\subfloat[Half-air/half-underwater simulation, showing the different behaviours of the air-glass, glass-air interface as well as glass-water interface in different decentering settings. Images obtained with Blender \citep{blender}]
		{
			\includegraphics[width=0.3\textwidth]{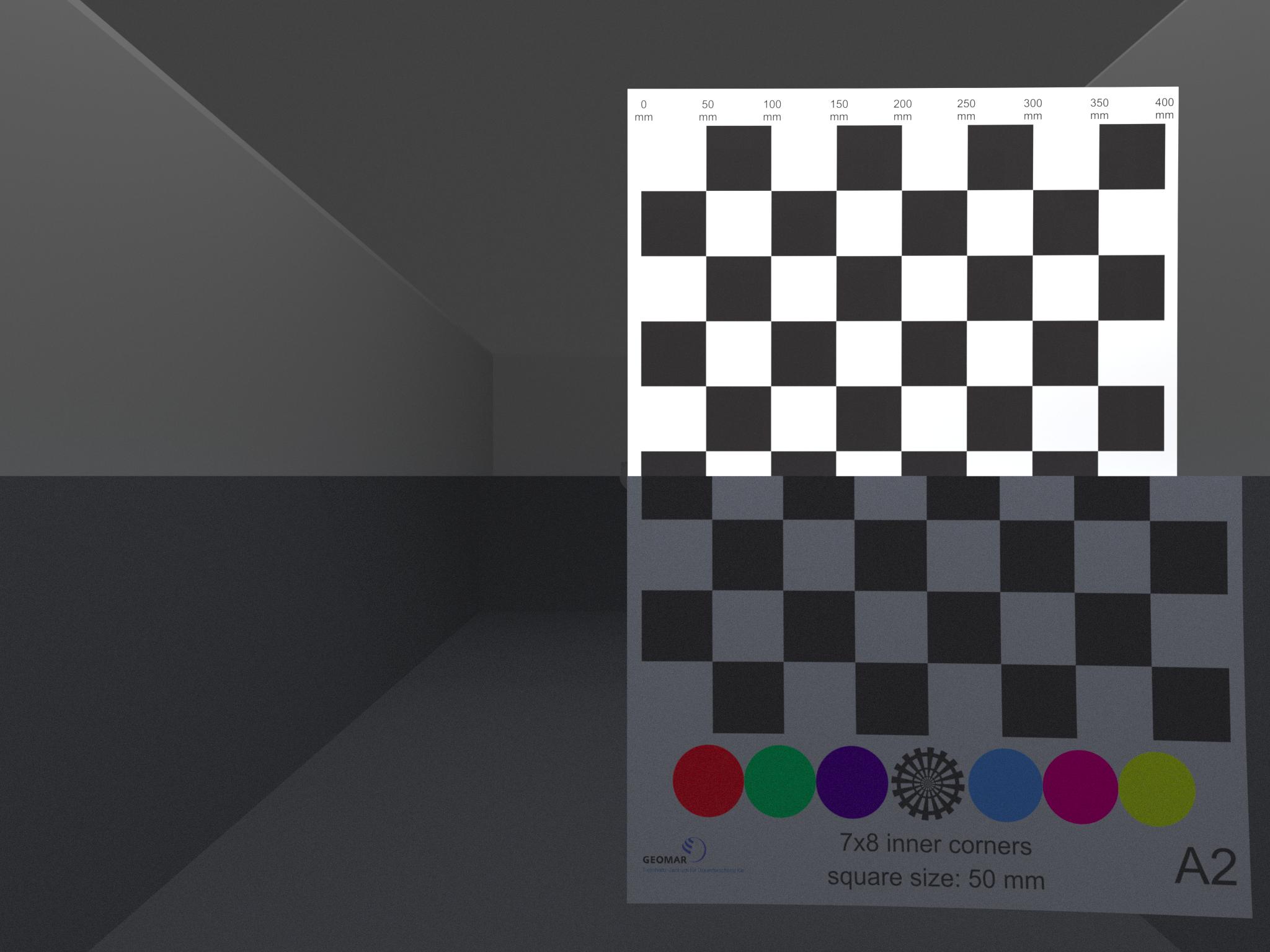}
			\includegraphics[width=0.3\textwidth]{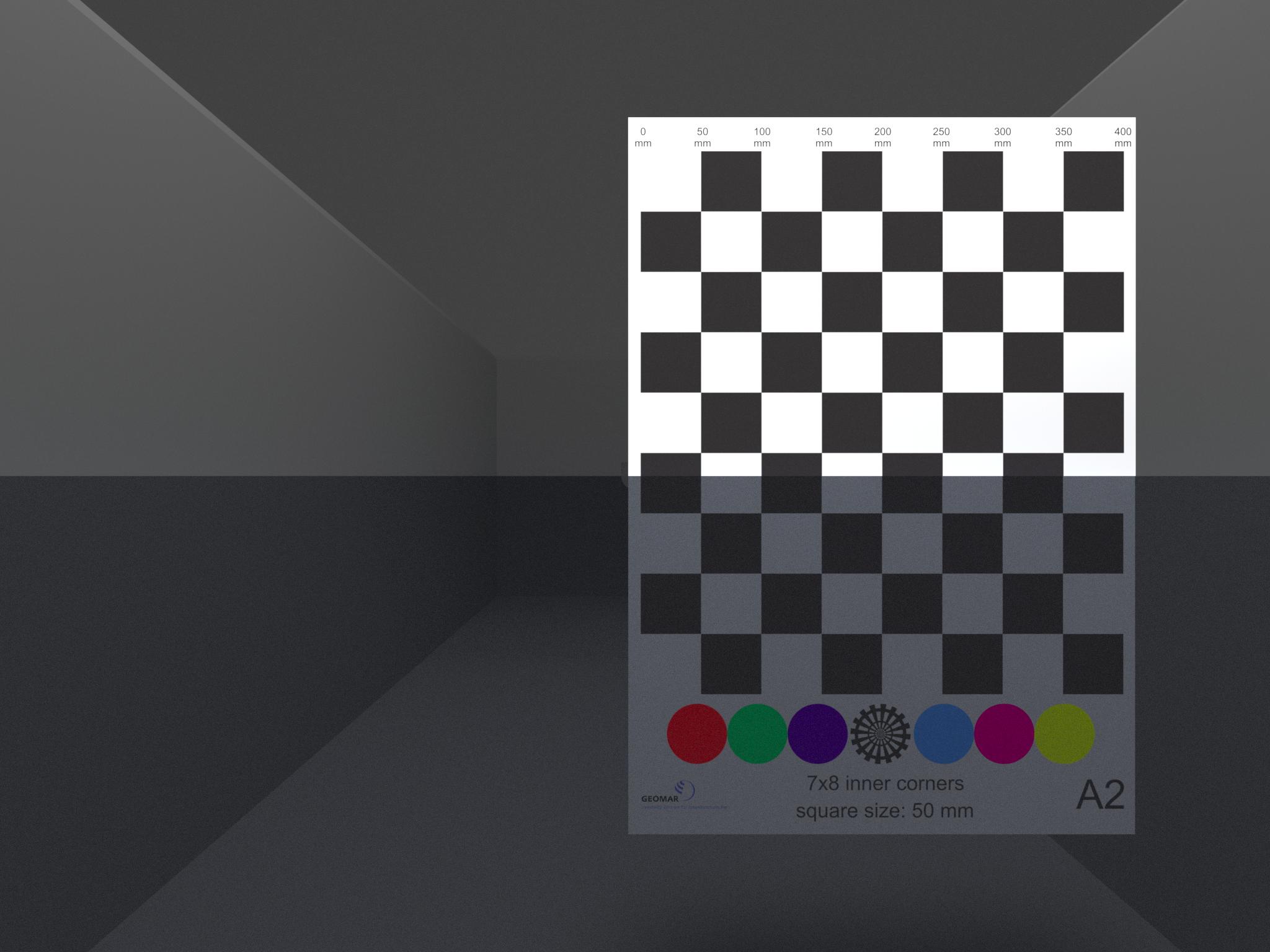}
			\includegraphics[width=0.3\textwidth]{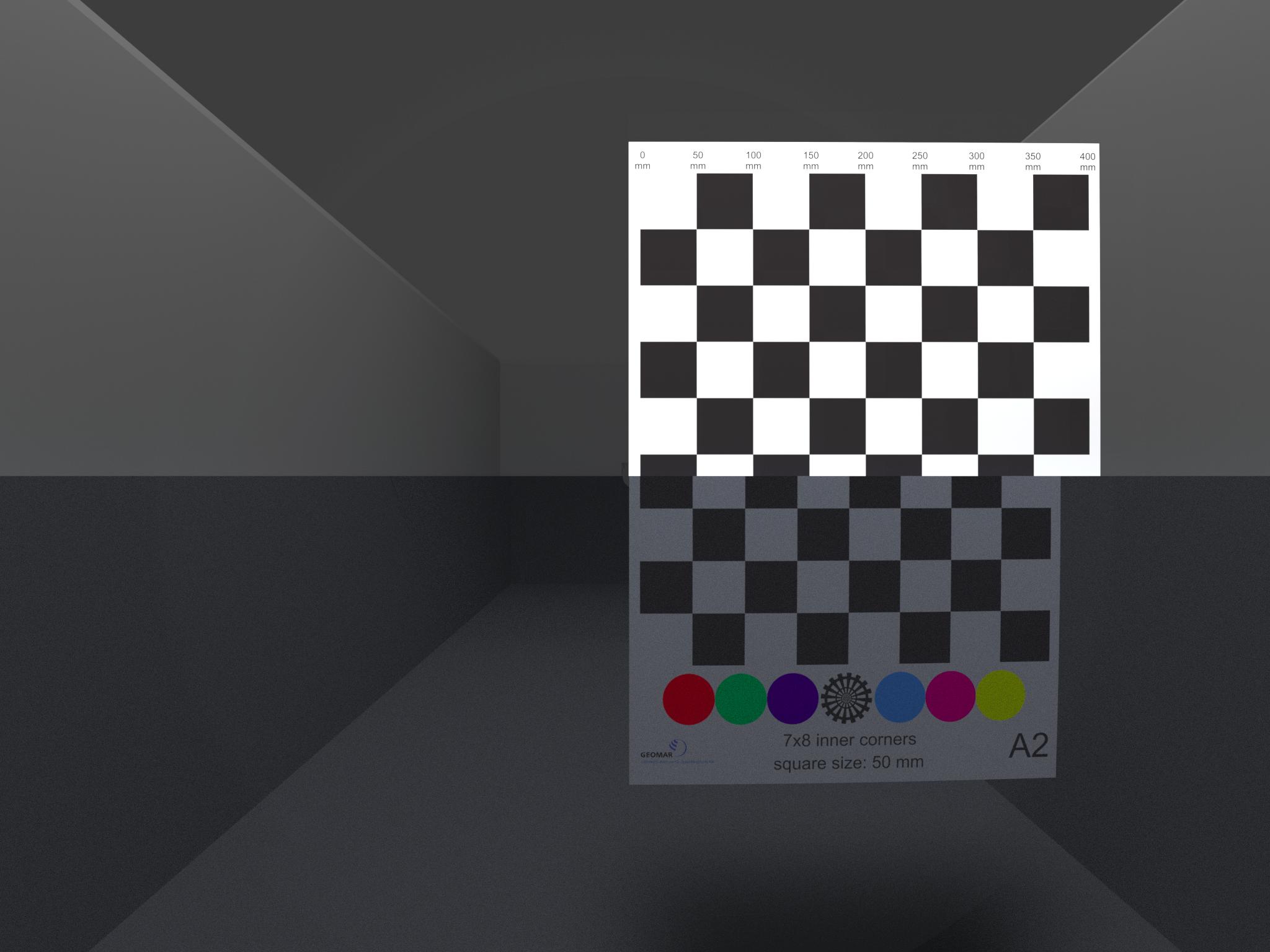}
			\label{fig:half_airwater_offset_impact}
		}\newline
		\subfloat[Refraction pattern shown as the angle between the incident light ray and the surface normal of the inner dome sphere plus the angle between the normal of the outer dome sphere and the outgoing ray.
		The \textbf{lower row} shows the same setting from another perspective, addionally inticating $\q v_{\mathrm{off}}$ as a red arrow. Images obtained with Mitsuba 2 \citep{NimierDavidVicini2019Mitsuba2}]
		{
     		\includegraphics[width=0.95\textwidth]{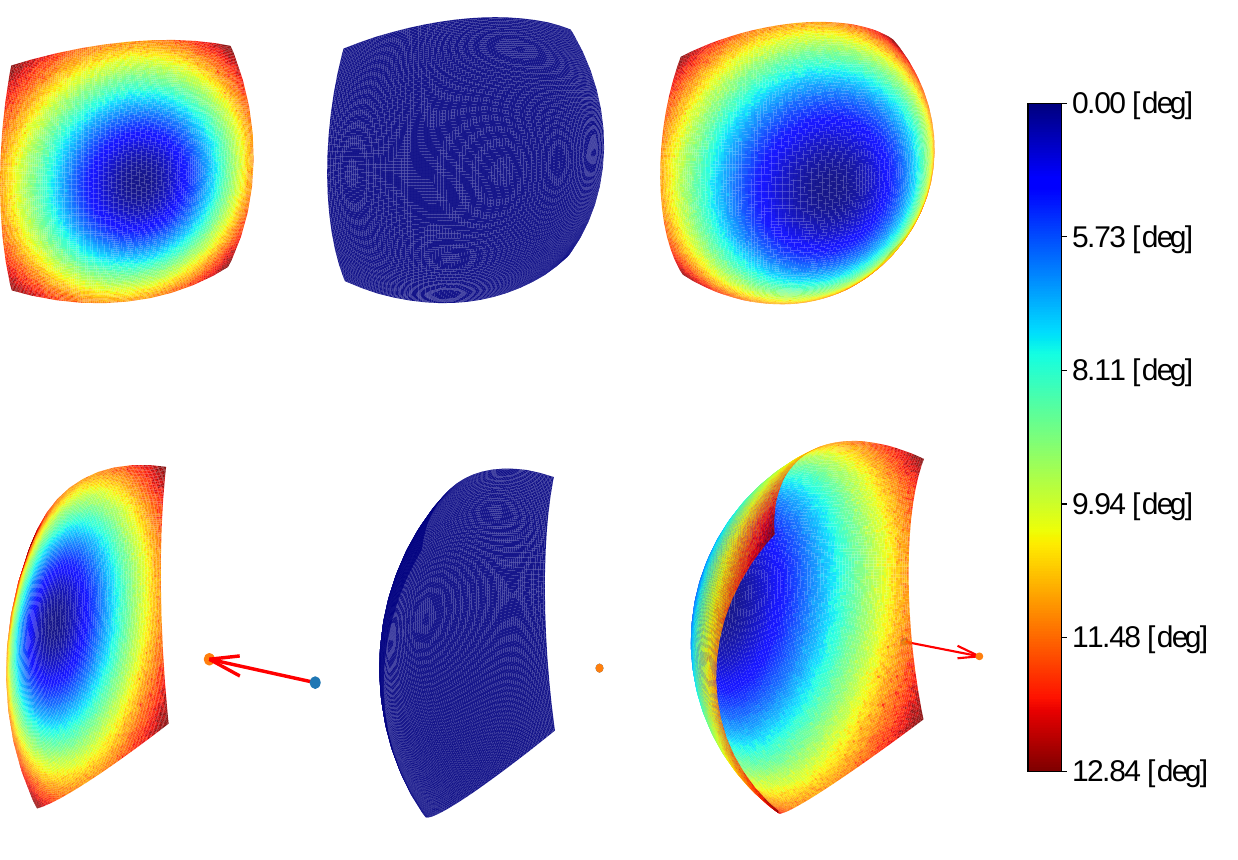}
						\label{fig:dome_refrac_angles}	
		}
		
%		{
%			\includegraphics[width=0.32\textwidth]{}
%			\includegraphics[width=0.32\textwidth]{}
%			\includegraphics[width=0.32\textwidth]{}
%			\label{fig:dome_refrac_angles}	
%		}
%		
%		
%		\subfloat[The same as Fig. \ref{fig:dome_refrac_angles} above, shown from another perspective, and addionally inticating $\q v_{\mathrm{off}}$ as a red arrow]
%		{
%			\includegraphics[width=0.32\textwidth]{}
%			\includegraphics[width=0.32\textwidth]{}
%			\includegraphics[width=0.32\textwidth]{}
%			\label{fig:dome_refrac_angles_vo}	
%		}

	\end{center}
	\caption{Validation of the physically correct ray-tracing image renderer in different decentering settings. Left: Forward-decentering. Center: Perfectly centered. Right: Backward-decentering.}
\end{figure*}

\begin{figure}
	\begin{center}
		\subfloat[Real]{
			\begin{minipage}{0.21\textwidth}
				\includegraphics[width=1\textwidth]{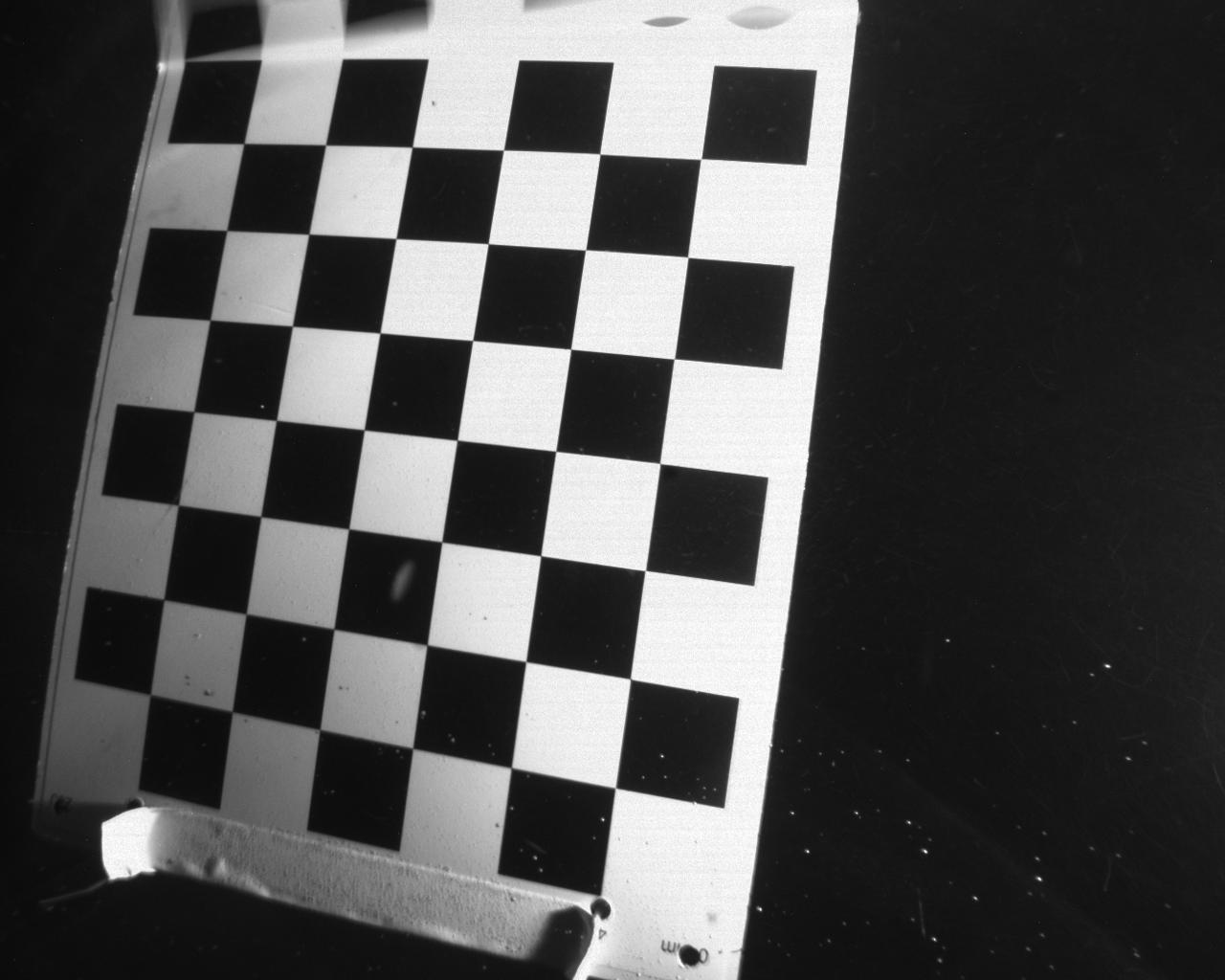}\vspace{0.8ex}
				\includegraphics[width=1\textwidth]{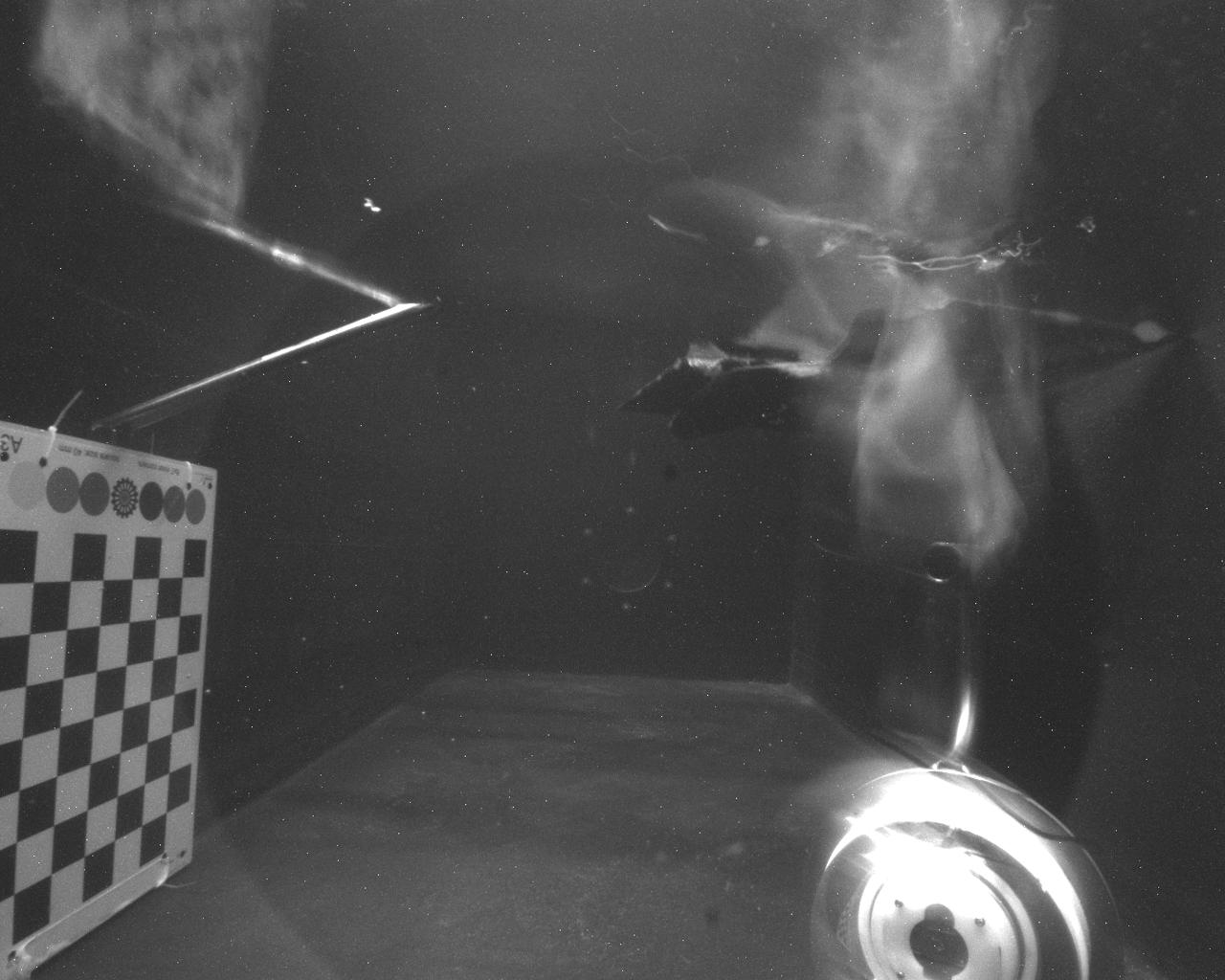}\vspace{0.8ex}
				\includegraphics[width=1\textwidth]{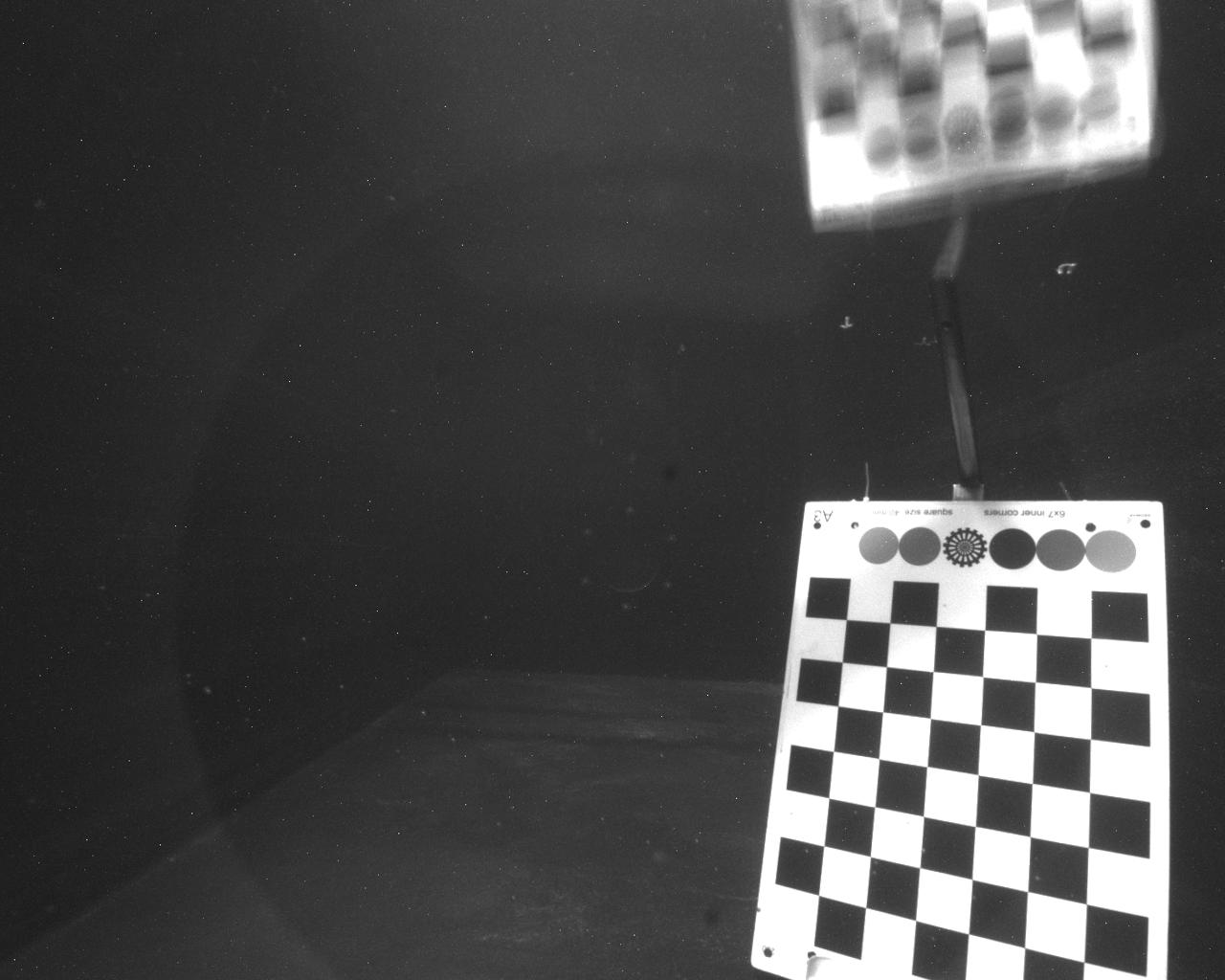}
		\end{minipage}}
		\subfloat[Blender]{
			\begin{minipage}{0.21\textwidth}
				\includegraphics[width=1\textwidth]{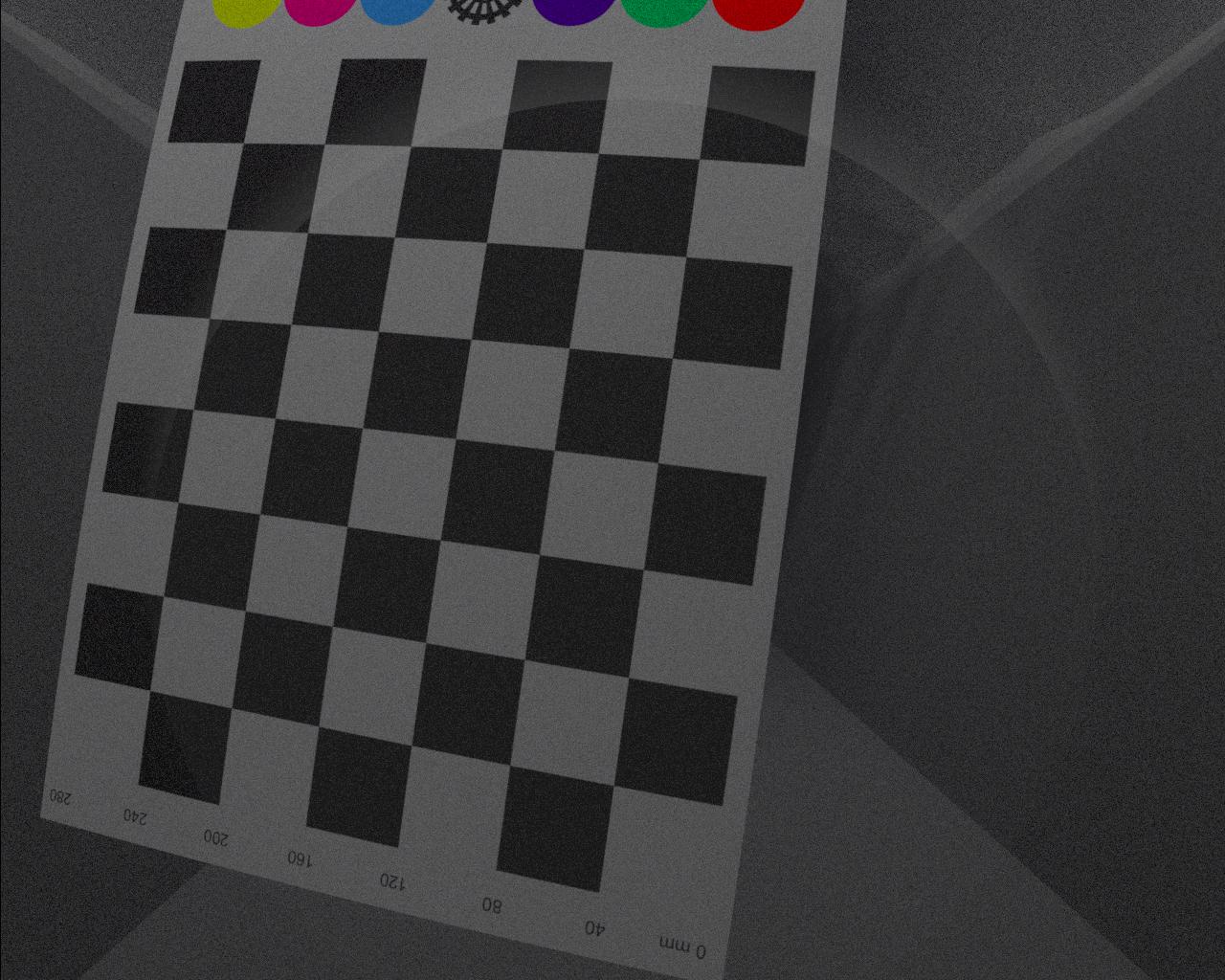}\vspace{0.8ex}
				\includegraphics[width=1\textwidth]{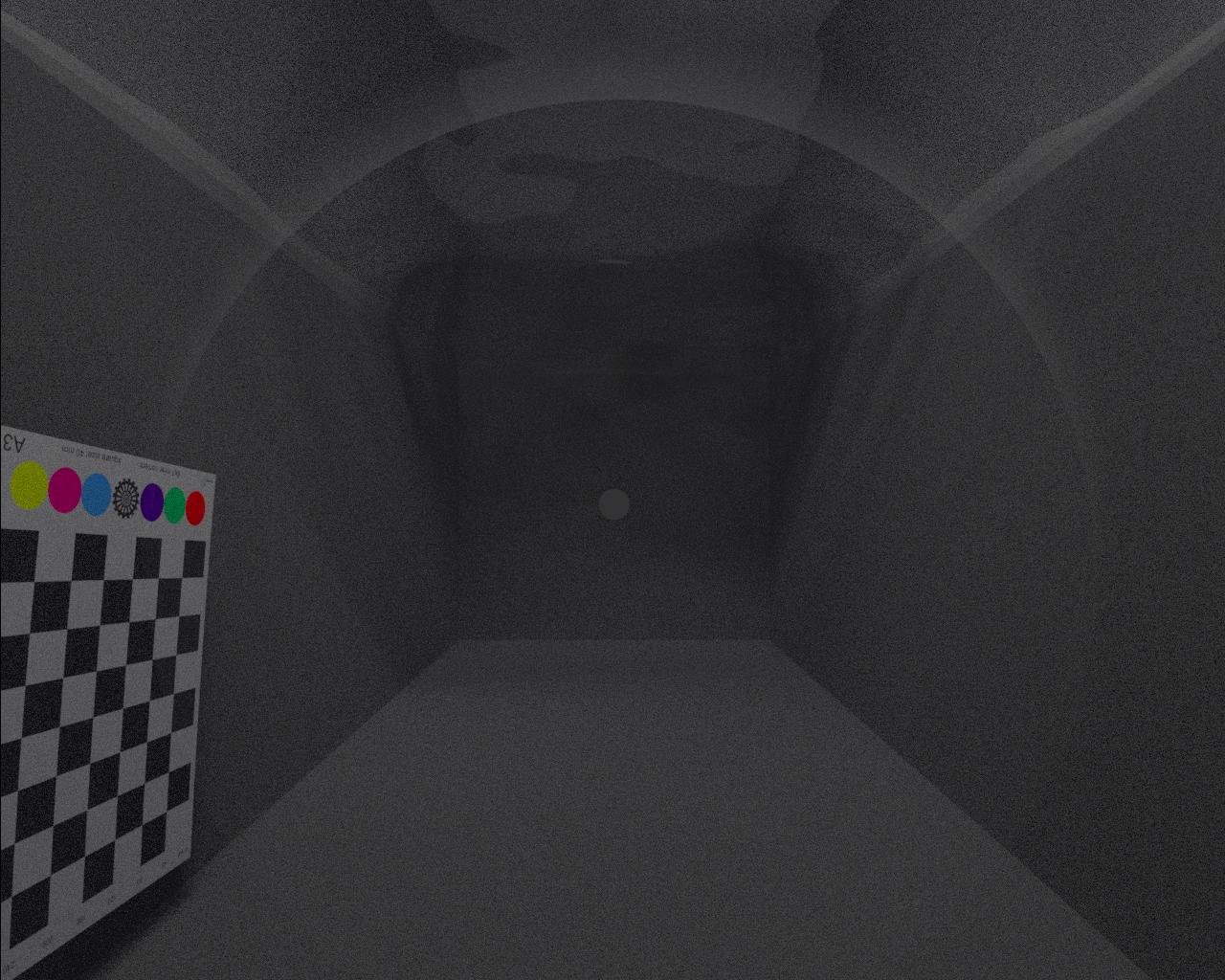}\vspace{0.8ex}
				\includegraphics[width=1\textwidth]{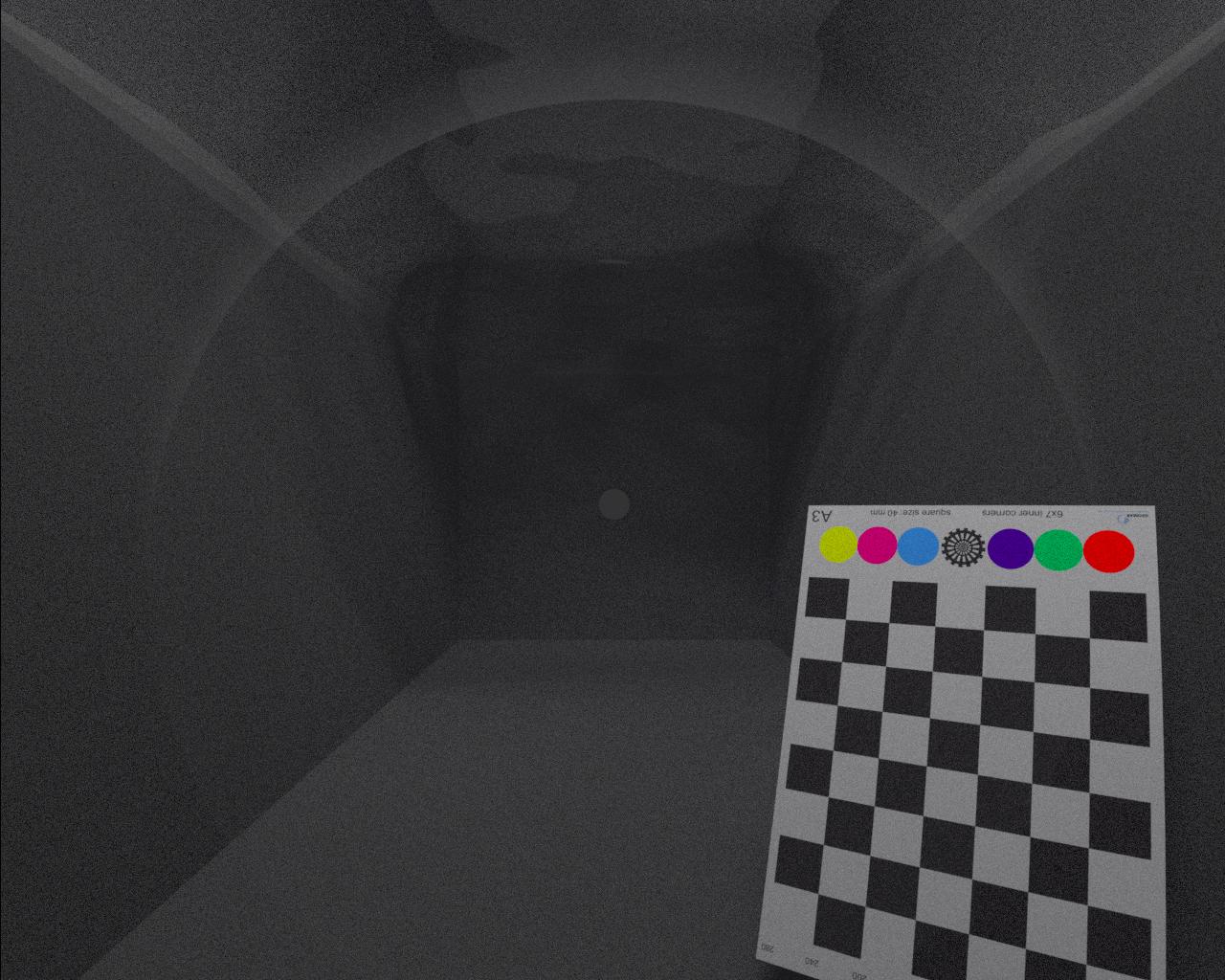}
		\end{minipage}}
		\subfloat[Ours]{
			\begin{minipage}{0.21\textwidth}
				\includegraphics[width=1\textwidth]{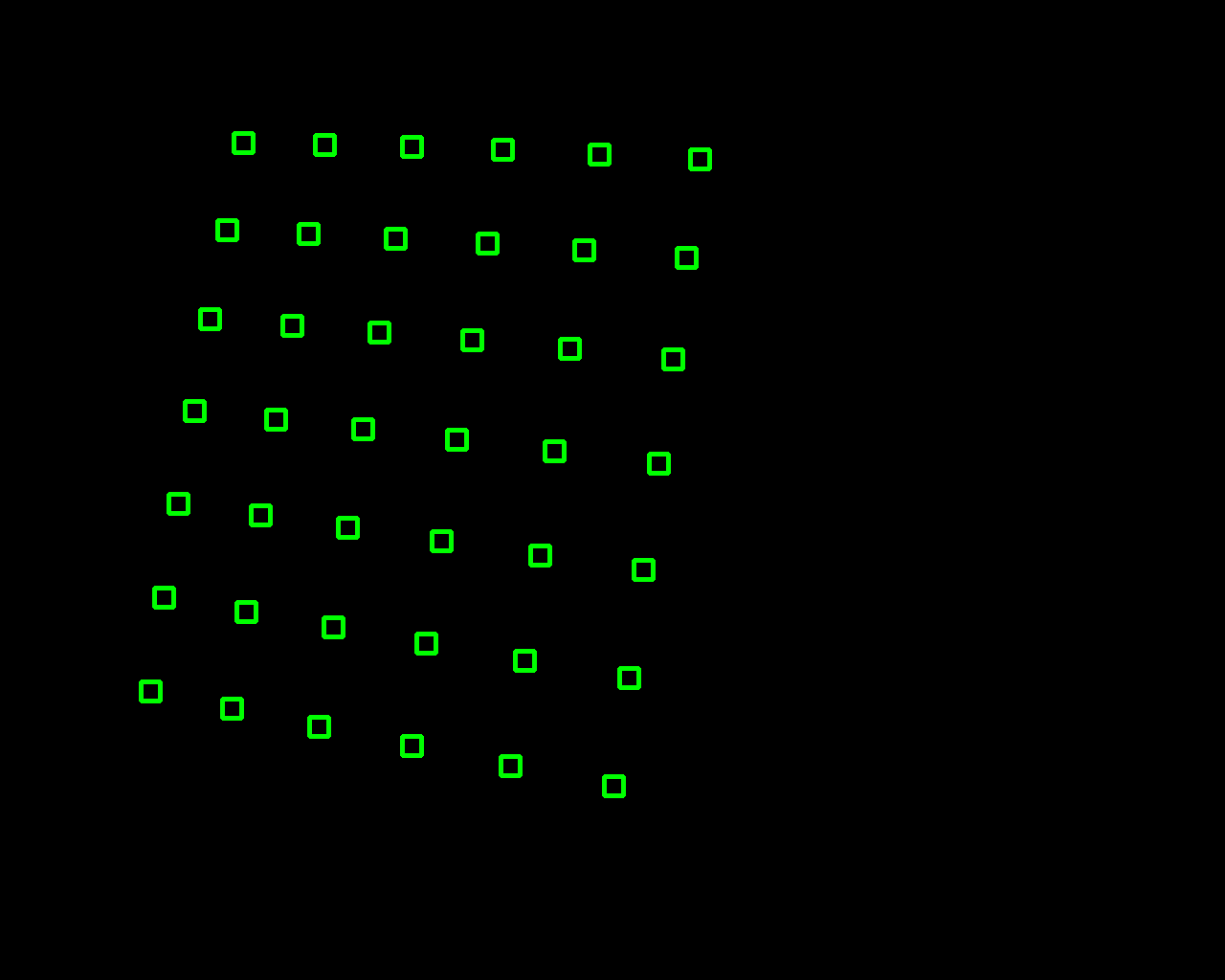}\vspace{0.8ex}
				\includegraphics[width=1\textwidth]{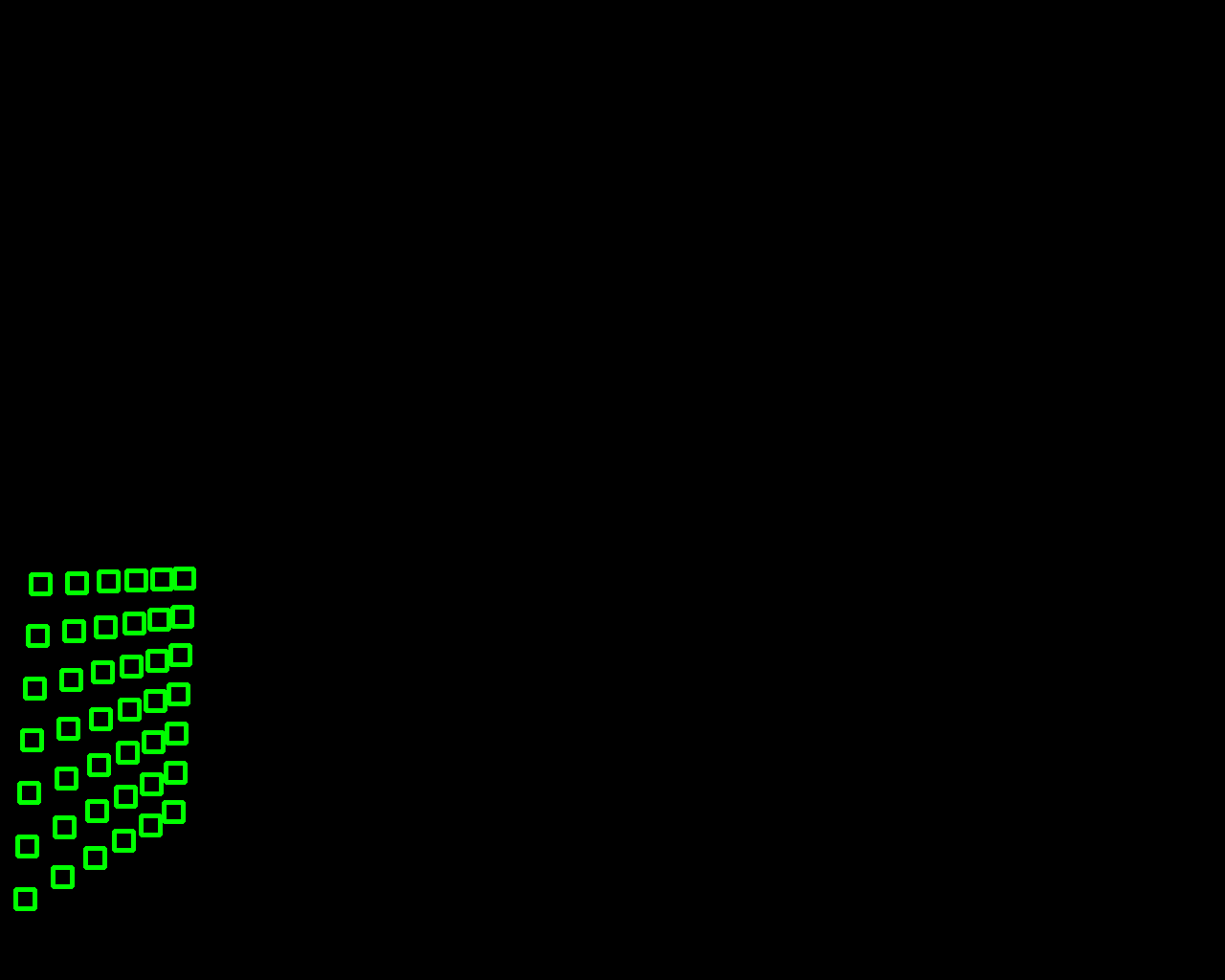}\vspace{0.8ex}
				\includegraphics[width=1\textwidth]{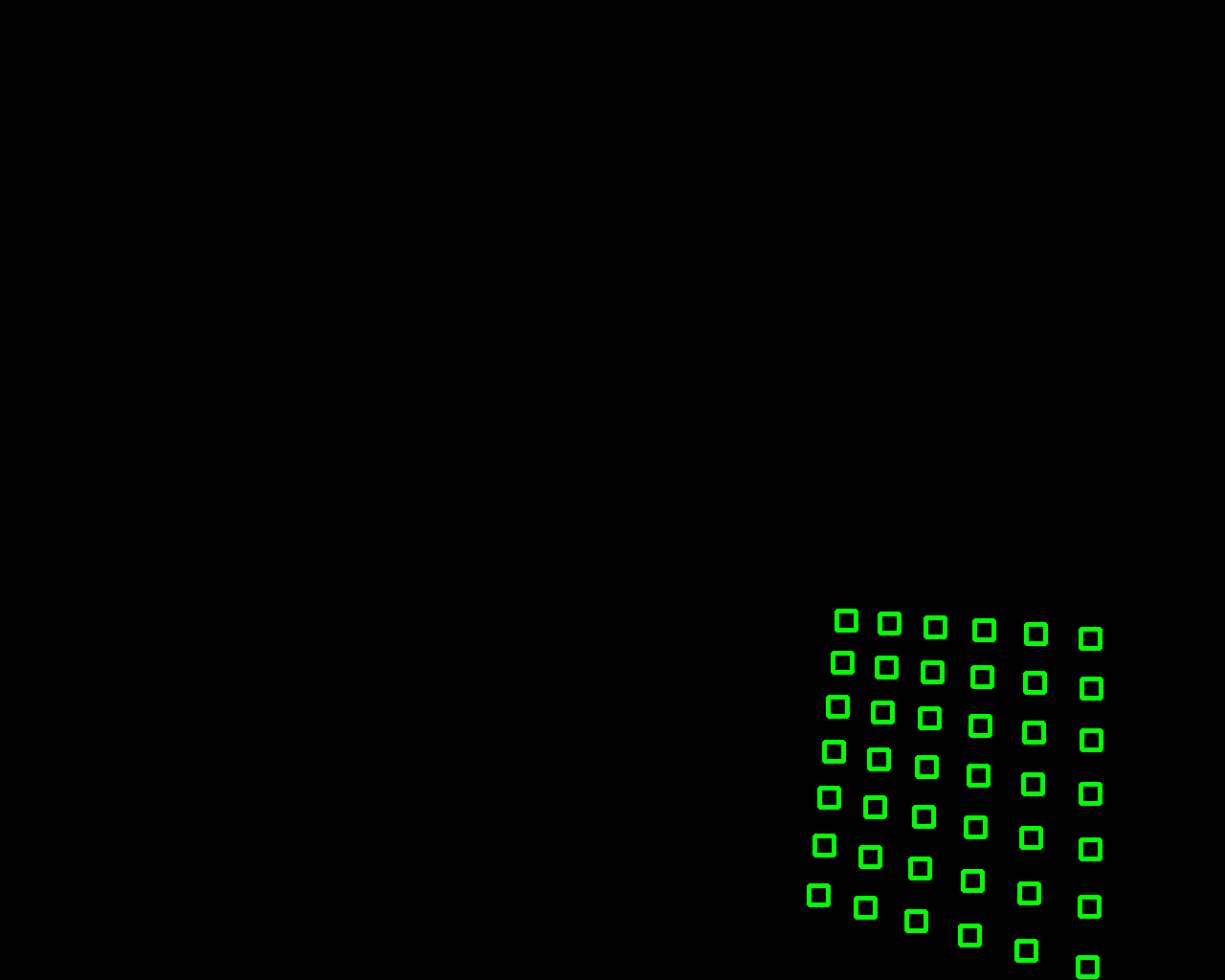}
		\end{minipage}}
	\end{center}
	\caption{Validation of the imaging formation model of the dome port camera system. From left to right: Real chessboard images, rendered chessboard images by Blender, synthetic chessboard corners by our thick dome port projection model.}
	\label{fig:blender_simu_real_compare}
\end{figure}

To validate the imaging formation model of our dome port camera system and also to take imaging and corner detection effects into account in the evaluation, we employed the ray-tracing toolbox Blender\footnote{http://www.blender.org} to render underwater imagery with physically plausible ground truth parameters.
Blender is a physics-based ray-tracing software to create 3D worlds with realistic optical parameters for lighting and media, such as the index of refraction. 
It also allows setting virtual cameras to simulate capturing the scene into 2D images.
To validate our camera model, we have modeled the water pool and the dome port into Blender mimicking the same setup as it exists in reality in our lab.
This allowed us to identify discrepancies between real images and those from the simulation, and strongly supports that our simulated experiments are valid.
The setup for rendering is visualized in Fig. \ref{fig:blender_setup}.
The dome port was modeled as concentric hemispheres with 7mm thick interface material of borosilicate glass 3.3 (index of refraction =  1.473).
Then the ray-tracer will track back light rays starting from the camera sensor and compute the light paths when traveling through the glass dome, thus, the refraction effect will be simulated separately for each pixel in the image.
This is different from refraction simulation in rasterization-based renderers (e.g. \cite{Song2019}) that apply refraction effects to the image after the actual rendering process.
Instead, using ray-tracing, the refraction is handled naturally as can be seen in Fig. \ref{fig:half_airwater_offset_impact}, showing
a chessboard half underwater and half in the air.
When the camera is exactly centered in the dome (see Fig. \ref{fig:half_airwater_offset_impact}, Center), we can clearly see that there is no refraction in the image.
When pulling the camera away from the center, the lower part of the image is refracted inwards whereas it is refracted outwards when the camera is positioned in front of the dome center.
Fig. \ref{fig:dome_refrac_angles} shows quantitatively the angular change of the viewing rays due to refraction at the dome sphere for each decentering case.

To validate the image formation model, we projected chessboard corners onto the images and compared their position against the rendered images by Blender and the real photos.
The parameters for synthesizing were exactly the same as were used in rendering, among which, the camera intrinsics were obtained via calibrating the real camera in-air, and the poses of the chessboard where the real images were taken were computed via the decentering calibration.
The resulting images are shown in Fig. \ref{fig:blender_simu_real_compare}, as can be seen, the synthesized chessboard corners, the rendered images and the real images match very well.

\begin{table*}
	%\tiny
	\begin{center}
		% table caption is above the table
		\caption{Evaluation results of decentering calibration on 8 sets of rendered images.}
		\label{table:blender_calib_result}       % Give a unique label
		% For LaTeX tables use
		\begin{tabularx}{1.0\textwidth}{llll}
			\hline\noalign{\smallskip}
			set & $\q r$ GT & $\q r$ Est. & -- \\
			\noalign{\smallskip}\hline\noalign{\smallskip}
			\textcircled{1} & $(870.40, 921.60)\trans$    & $(884.85, 942.92)\trans$   &--\\
			\textcircled{2} & $(1024,768)\trans$          & $(1027.28 ,774.88)\trans$  &--\\
			\textcircled{3} & $(511.99, 1280.01)\trans$   & $(507.94 , 1351.56)\trans$ &--\\
			\textcircled{4} & $(1075.09, -1.2e+08)\trans$ & $(1194.02 ,286.07)\trans$  &--\\
			\textcircled{5} & $(1024, 1342.88)\trans$     & $(1064.23, 1291.11)\trans$ &--\\
			\textcircled{6} & $(1024 , 989.10)\trans$     & $(1104.97 ,906.42)\trans$  &--\\
			\textcircled{7} & $(1183.69, 927.69)\trans$   & $(1187.7, 906.37)\trans$   &--\\
			\textcircled{8} & $(652.80, 623.77)\trans$    & $(586.46, 565.22)\trans$   &--\\
			
			\hline\noalign{\smallskip}
			set & $\q v_{\mathrm{off}}$ GT $[mm]$ & $\q v_{\mathrm{off}}$ Est. $[mm]$ & ATE $[mm]$\\
			\hline\noalign{\smallskip}
			\textcircled{1} & $(-3.0, 3.0,20.0)\trans$    & $(-3.06, 3.05,20.39)\trans$ & 0.61 \\
			\textcircled{2} & $(0.0, 0.0, 30.0)\trans$    & $(0.001, 0.09, 30.49)\trans$ & 0.51\\
			\textcircled{3} & $(-1.0, 1.0, 2.0)\trans$    & $(-1.09, 0.95, 1.66)\trans$ & 0.78\\
			\textcircled{4} & $(0.0, 2.81, 0.0)\trans$    & $(-0.08, 2.81, -0.26)\trans$ & 0.48\\
			\textcircled{5} & $(0.0, 2.81, 5.0)\trans$    & $(-0.10,2.92, 4.77)\trans$ & 0.69\\
			\textcircled{6} & $(0.0, -2.81, -13.0)\trans$ & $(-0.16, -2.77 ,-13.48)\trans$ & 0.90\\
			\textcircled{7} & $(-2.81, -2.81, -18)\trans$ & $(-2.93, -2.75, -18.39)\trans$ & 0.50\\
			\textcircled{8} & $(0.42, 3.67,28.39)\trans$  & $(0.46,3.64, 28.43)\trans$ & 0.50\\
			\noalign{\smallskip}\hline
		\end{tabularx}
	\end{center}
\end{table*}

\begin{figure*}[t]
	\begin{center}
		\includegraphics[width=0.24\textwidth]{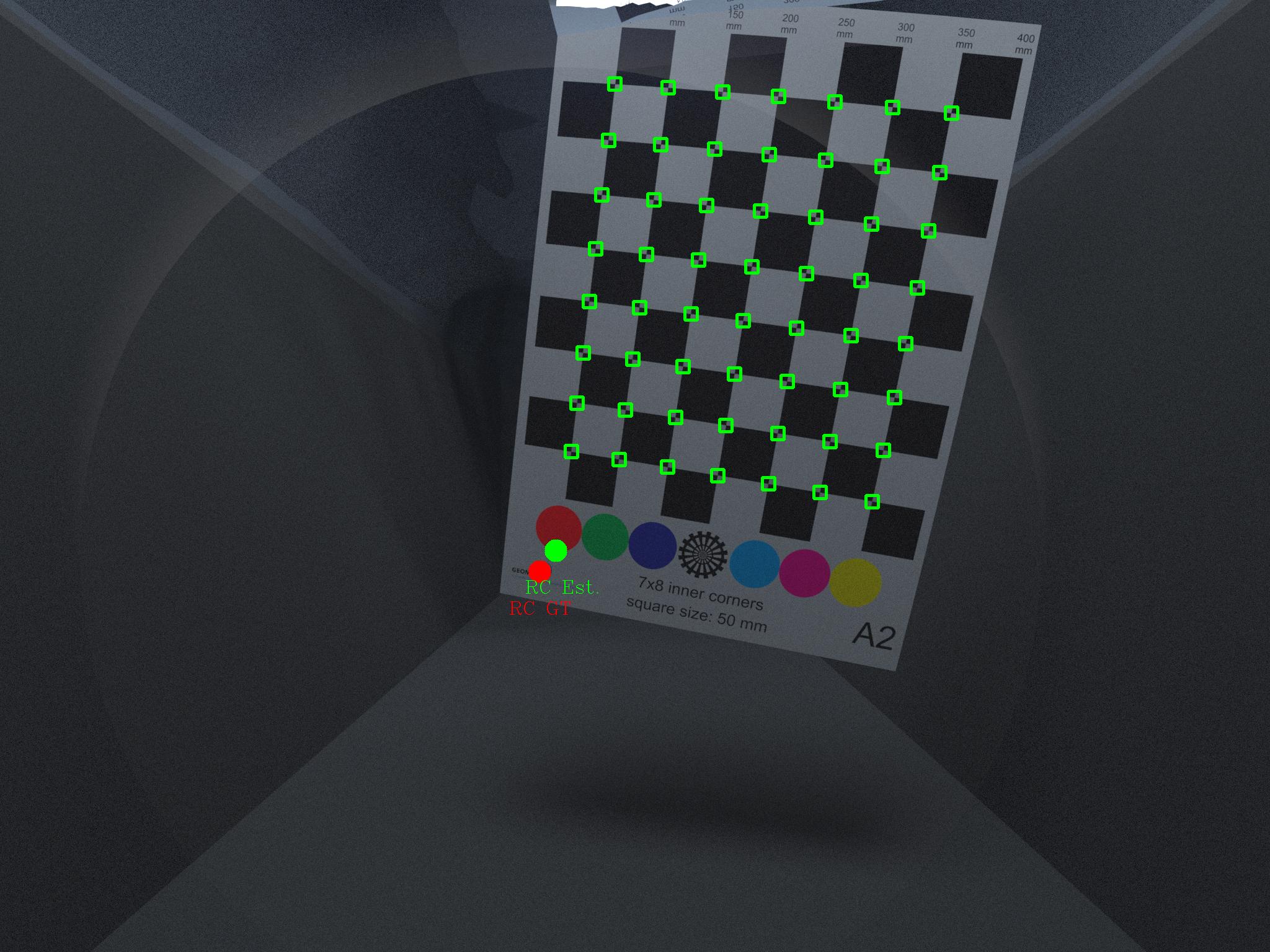}
		\includegraphics[width=0.24\textwidth]{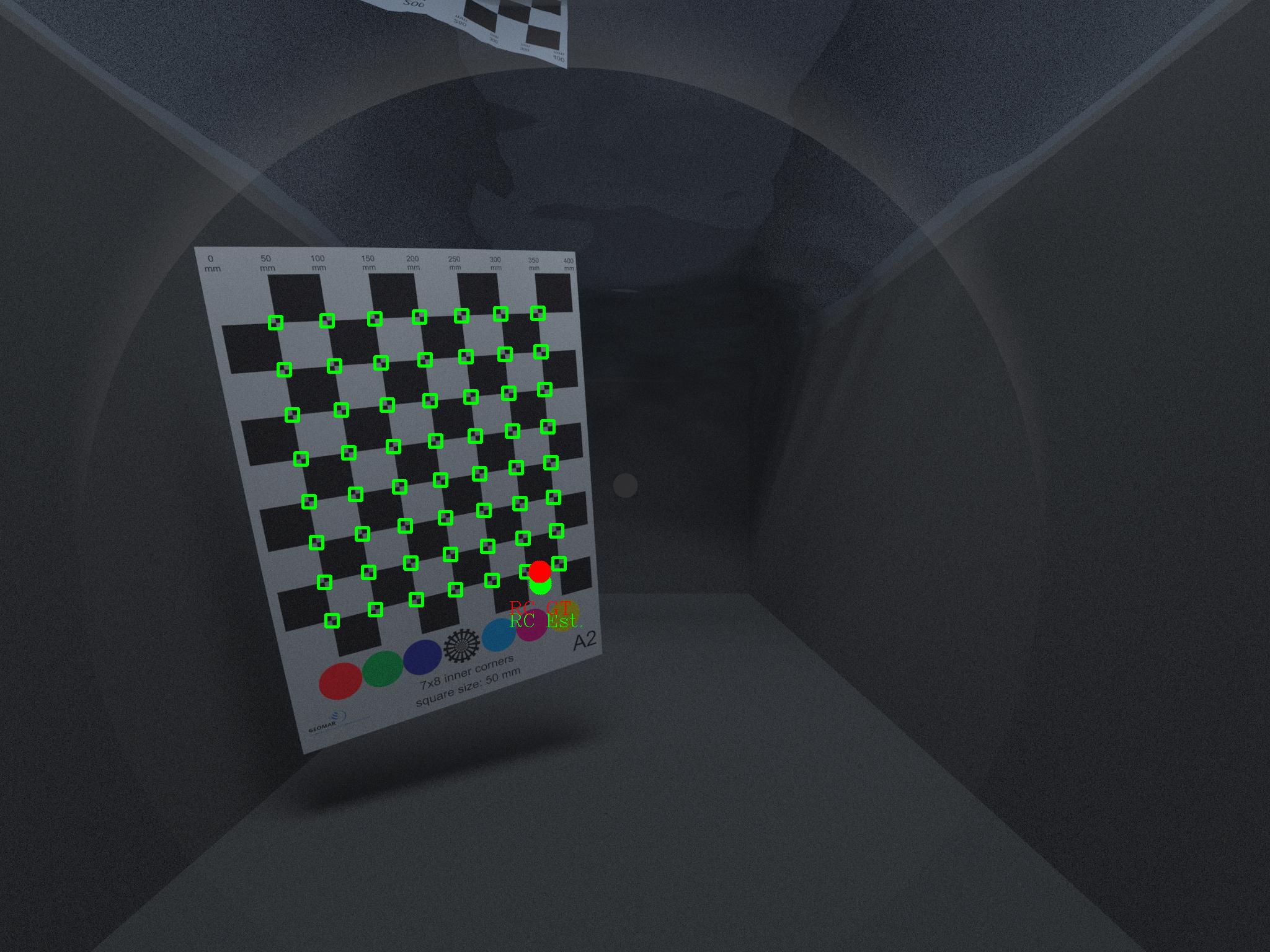}
		\includegraphics[width=0.24\textwidth]{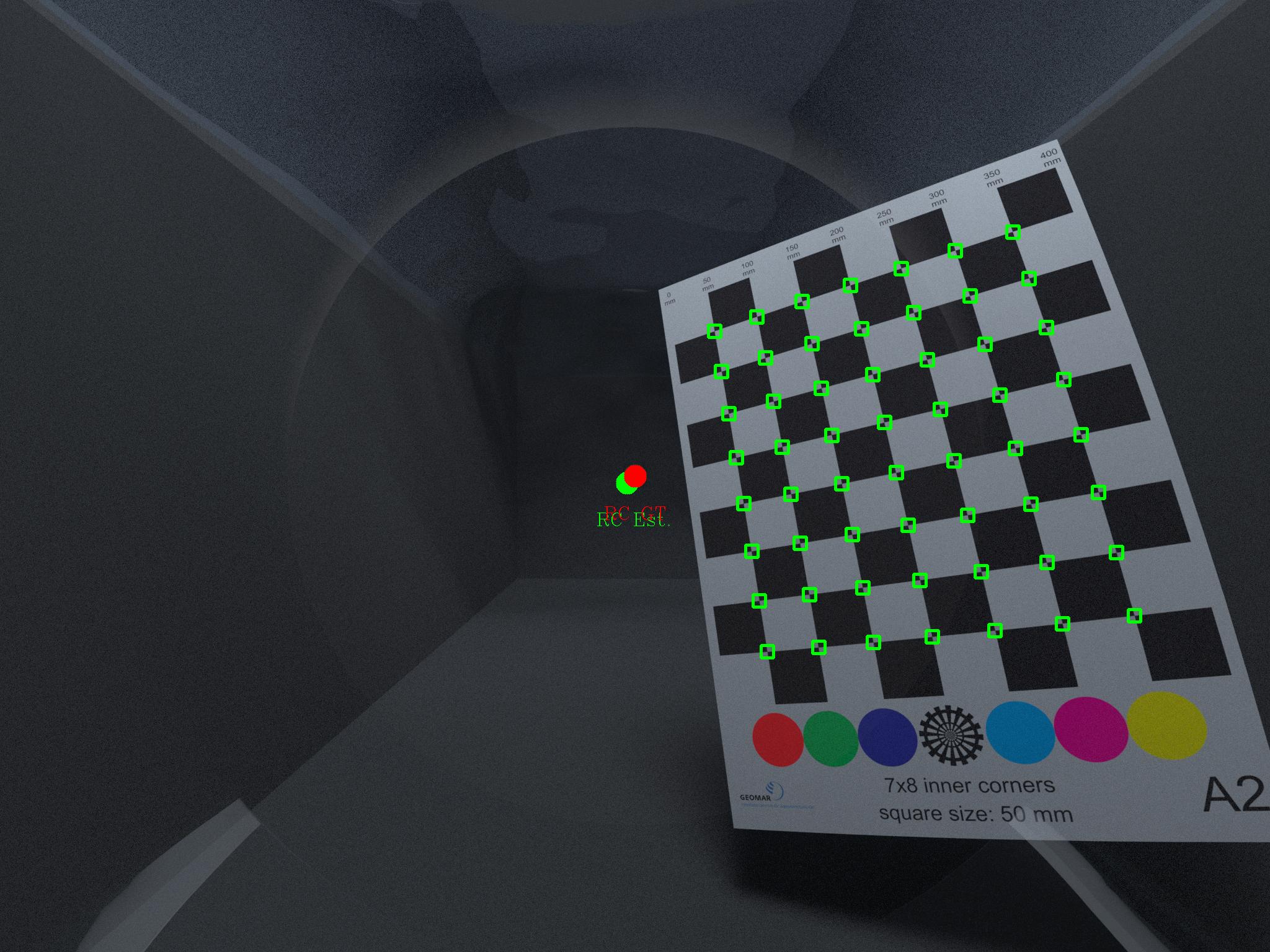}
		\includegraphics[width=0.24\textwidth]{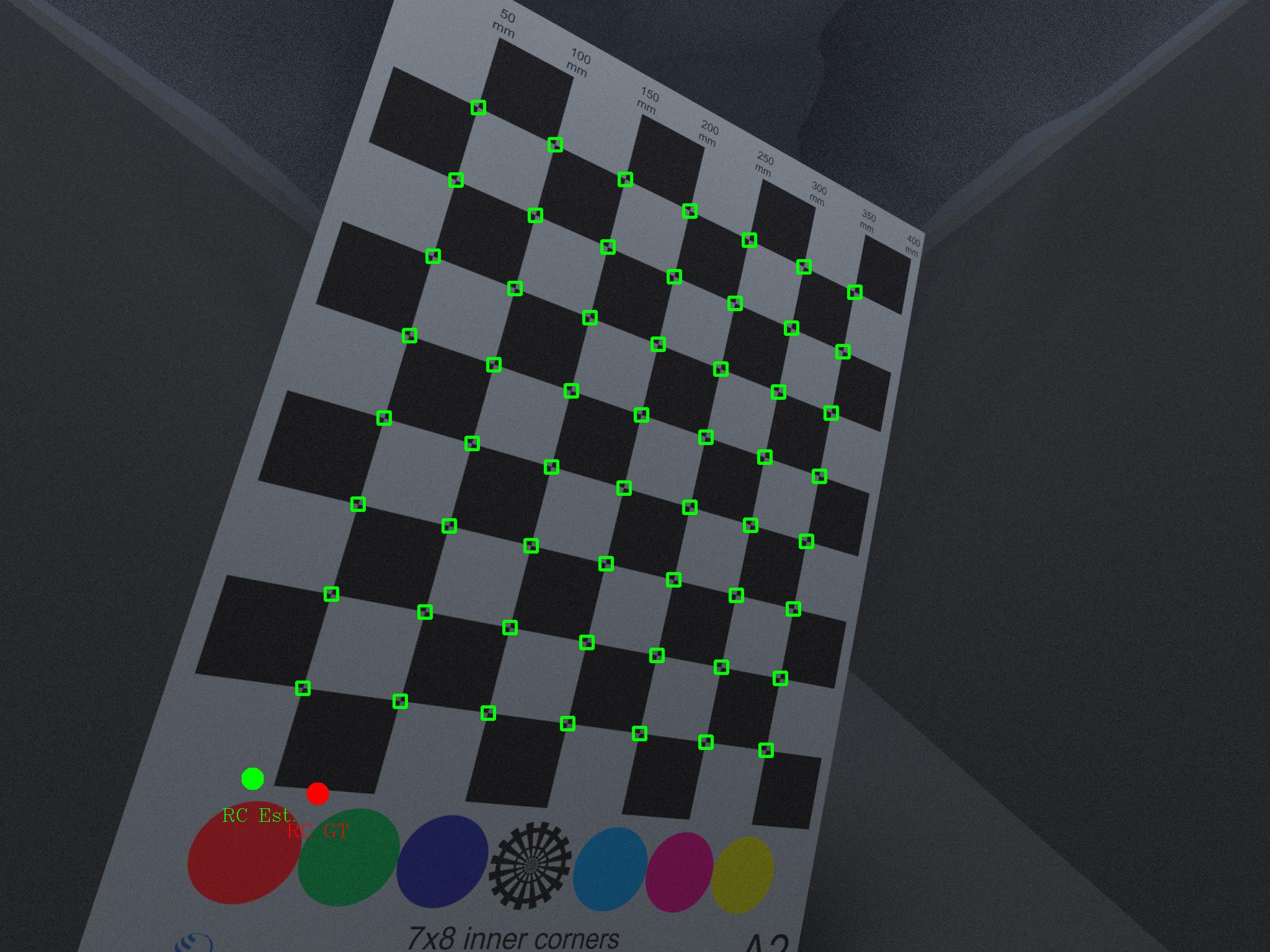}\vspace{0.8ex}
		\includegraphics[width=0.24\textwidth]{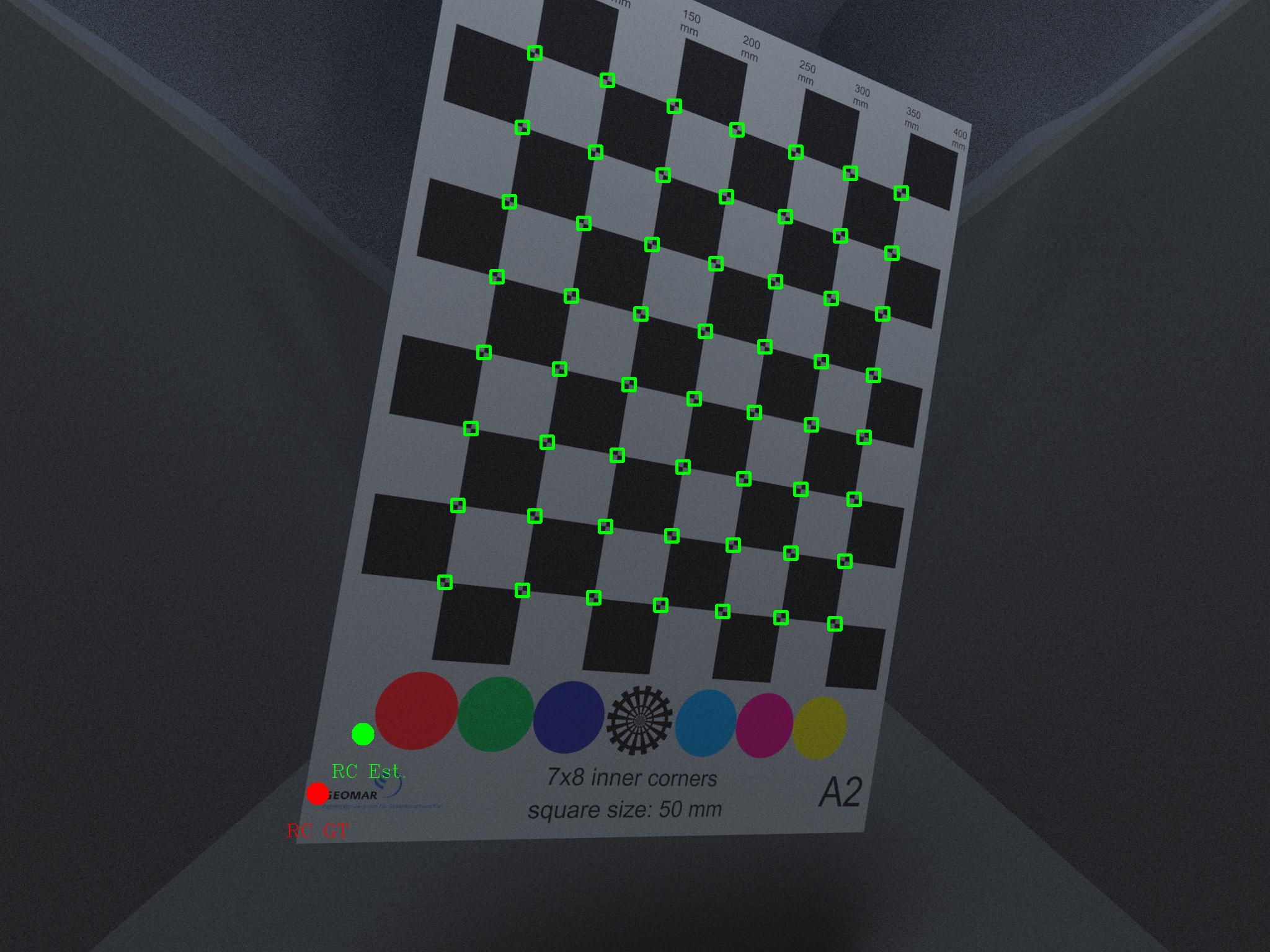}
		\includegraphics[width=0.24\textwidth]{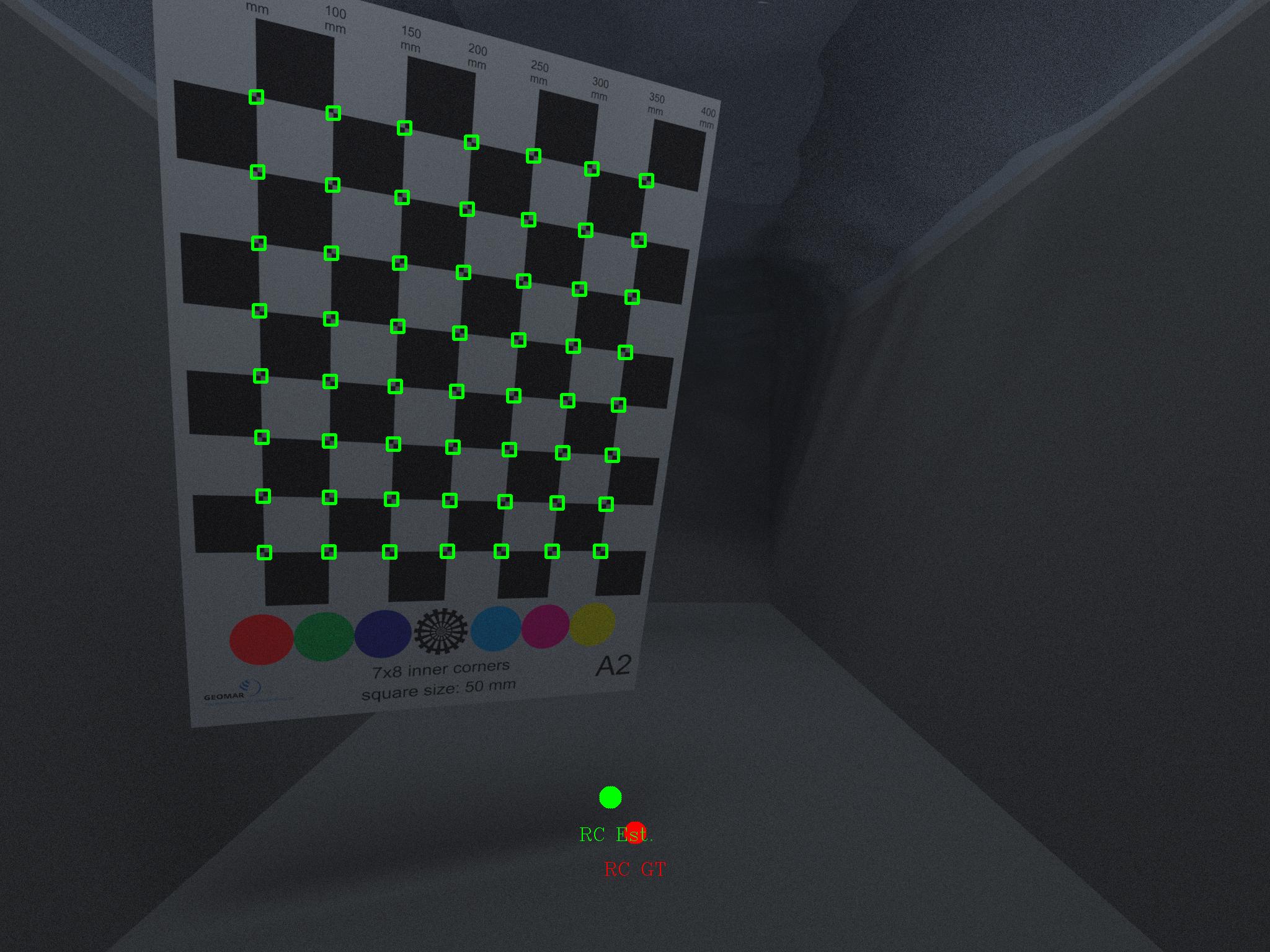}
		\includegraphics[width=0.24\textwidth]{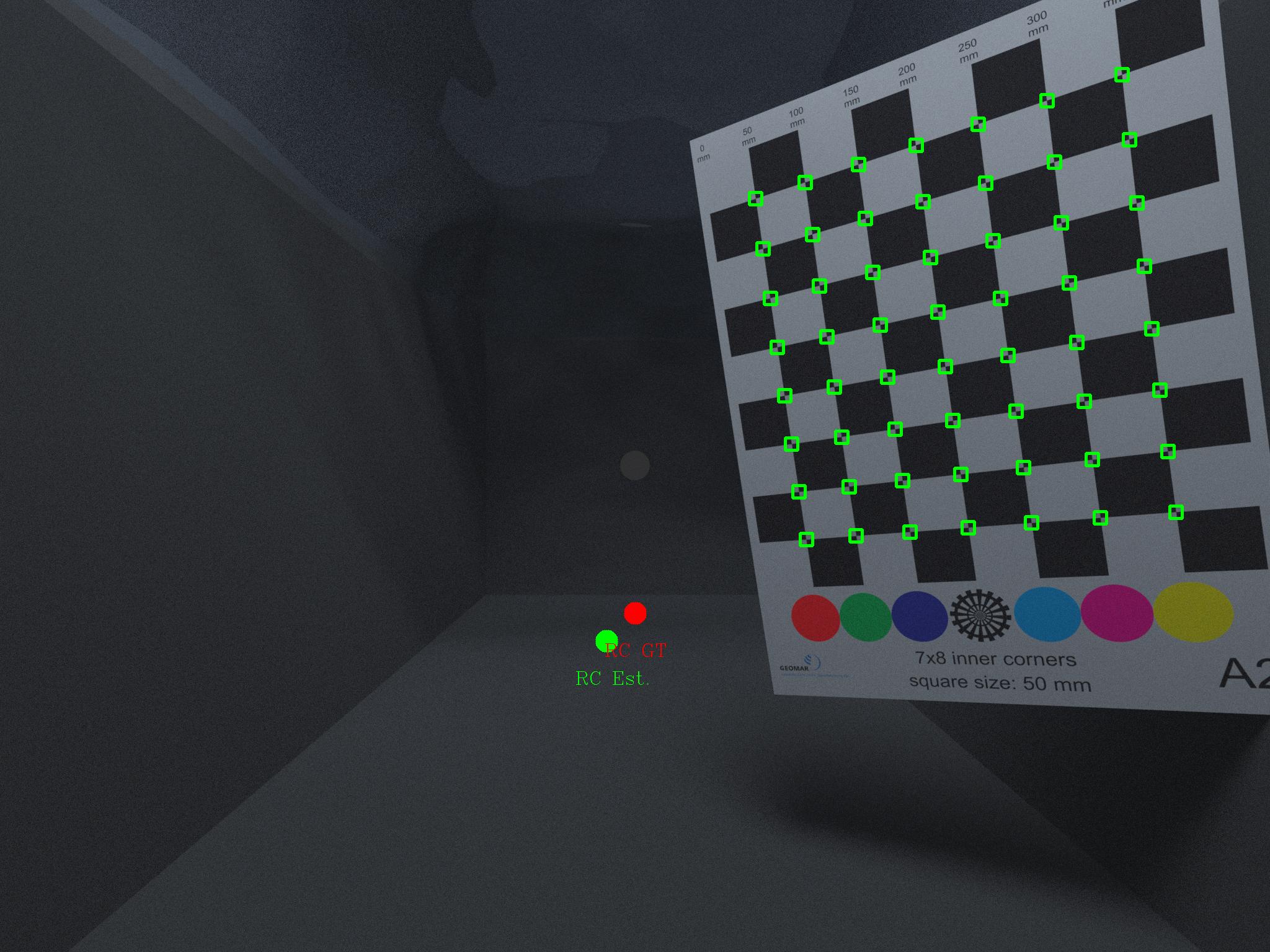}
		\includegraphics[width=0.24\textwidth]{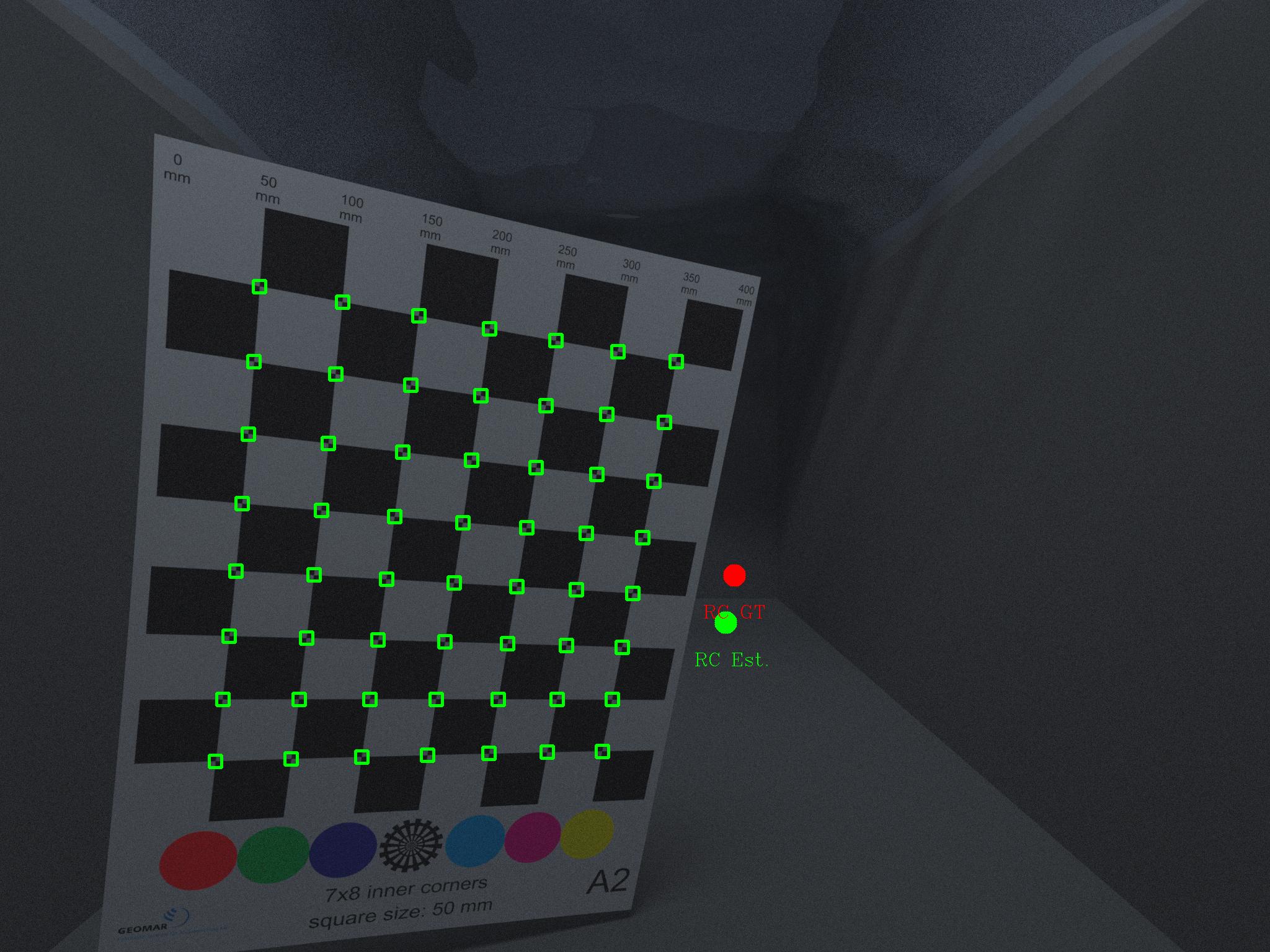}
	\end{center}
	\caption{Exemplary images showing the evaluation results of the experiments on the rendered images. The green squares and the green dots represent the reprojection results and the estimated refraction centers respectively. The ground truth refraction center is depicted as a red dot.}
	\label{fig:blender_offset_calib}
\end{figure*}

Next, we rendered 8 sets of images for different decentering situations and evaluate the refraction center estimation and the decentering vector calibration. 
The chessboard corners were automatically detected and the calibration results are shown in Table \ref{table:blender_calib_result}.
Fig. \ref{fig:blender_offset_calib} provides some of the resulting images, where the estimated refraction centers are marked as green spots, and the reprojections are marked as green crosses.
The ATE (Average Transformation Error, ATE) in the table represents the average pose error over all used images in the translational components between the estimated camera poses and the ground truth poses, which is given as \cite{sturm2012benchmark}:
\begin{equation}
\mathrm{ATE_\mathrm{trans}} = \sqrt{\frac{1}{n}\sum_{i=1}^{n}\Vert \mathrm{trans}(\mq P_{est,i}^{-1} \cdot \mq P_{gt,i}) \Vert^2_2)} 
\end{equation}
where $\mq P_{est,i}$ and $\mq P_{gt,i}$ are the estimated camera pose and the ground truth pose for the $i^{th}$ image respectively. $\mathrm{trans()}$ will extract the translational components from the transformation matrix.
As can be seen, the system can determine the decentering with high accuracy.
Note that in set $\textcircled{4}$, the camera is decentered horizontally, thus the refraction axis is parallel to the image plane and the refraction center is located at infinity.

\subsection{Real-World Experiments}
The proposed geometry insights and the calibration approach have been demonstrated on synthetic data and rendered images, but we finally want to validate them also in a real-world scenario. The experimental setup is shown in Fig. \ref{fig:blender_setup}, top, the dome port with $50mm$ radius and $7mm$ thickness was attached to the sidewall of a test tank, then the tank was filled with water. The camera had a resolution of $1280 \times 1024$ and a horizontal field of view of approximately $73^{\circ}$, then the intrinsic parameters of the camera were calibrated in air, also to obtain potential lens distortion.
The camera was then attached to the dome port afterwards. Next, we have placed a planar checkerboard inside the pool, captured images and performed chessboard detection (OpenCV). The positions of the detected chessboard corners are then corrected for lens distortion, and are given, jointly with the known chessboard corner grid positions (cf. to Fig. \ref{fig:chessrefraction}) to the eight-point algorithm for estimating the $\mq F$ matrices. From each estimated $\mq F$ the refraction center is extracted (for each of the images) without using any information about the index of refraction or thickness of the glass. Then, the sign of the decentering (along the refraction axis) was determined by the convexity test. Finally, non-linear optimization was performed to calibrate the decentering vector. The estimated decentering vector was $\q v_{\mathrm{off}} = (-0.04, -0.02, -20.18)\trans mm$, which is in agreement with our coarse manual measurements, the average refraction center was $(641.56 , 513.62)\trans$, and the RMSE of the reprojection error was 0.32 pixels. Fig. \ref{fig:eval_real_test_set_a} presents some exemplary images of reprojection results. Since measuring the actual offset between the camera center and the dome center is not trivial, we employed two strategies to validate the calibration results.

\begin{figure*}[!h]
	\begin{center}
		\subfloat[Images used for evaluation]
		{
			\includegraphics[width=0.24\textwidth]{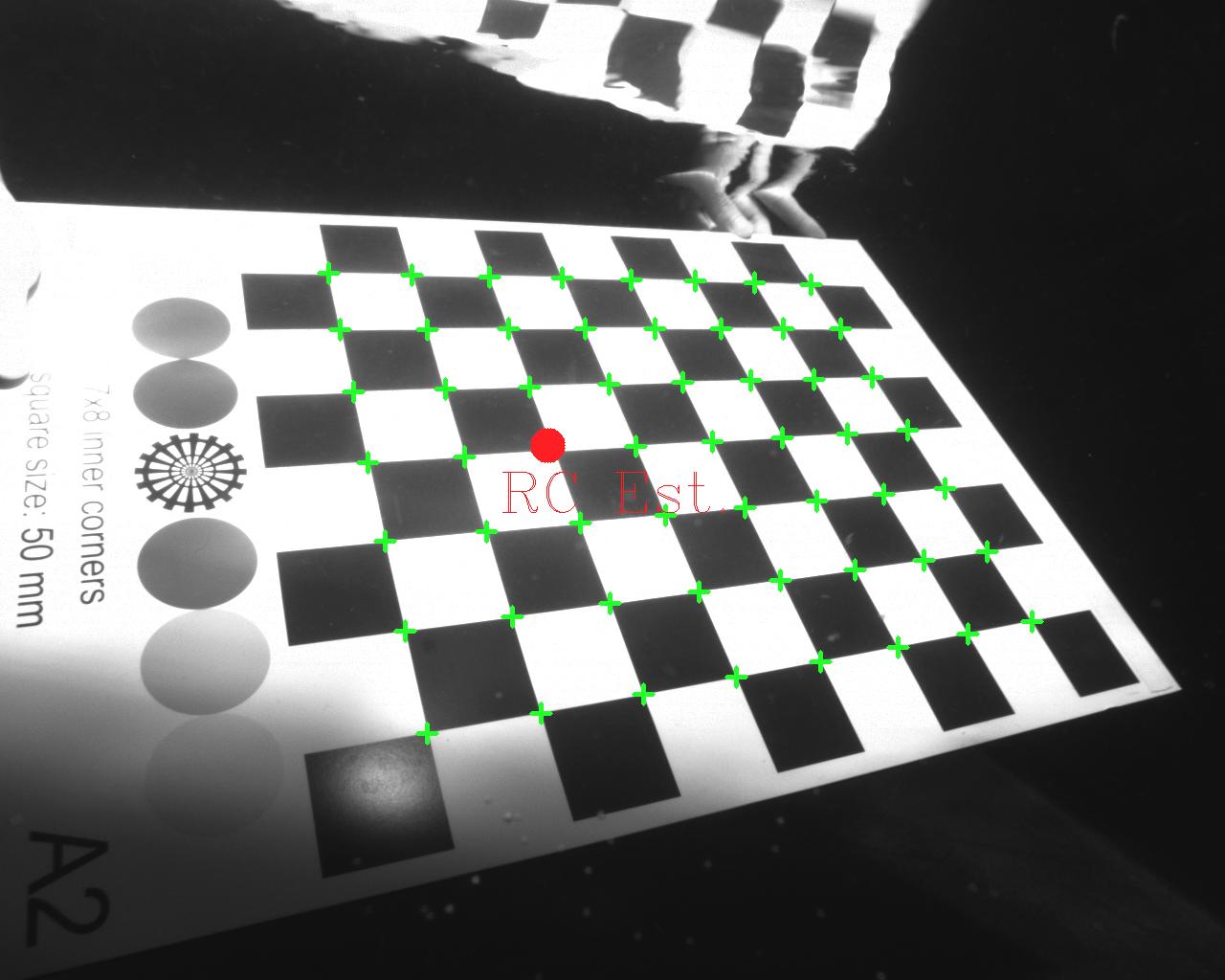}
			\includegraphics[width=0.24\textwidth]{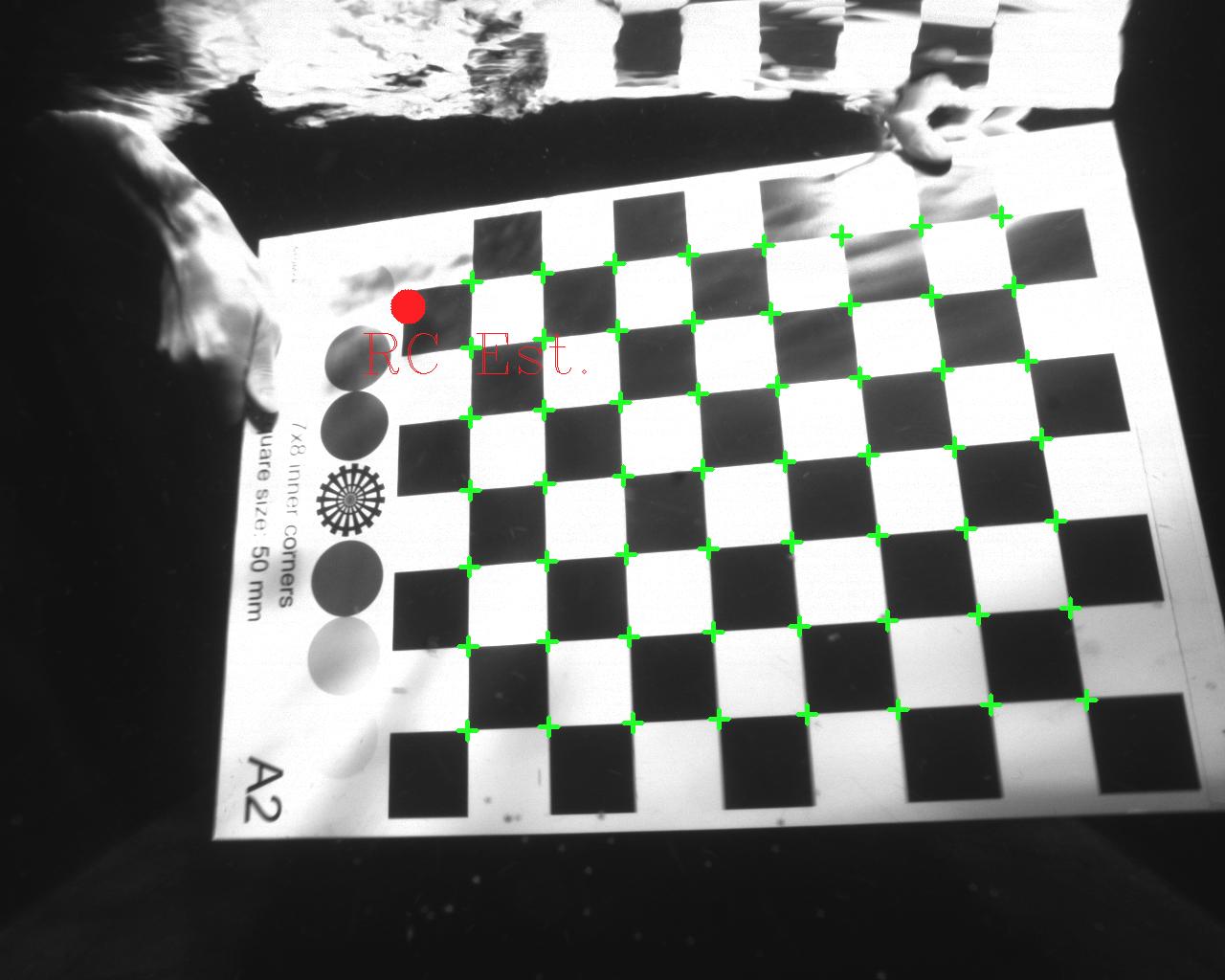}
			\includegraphics[width=0.24\textwidth]{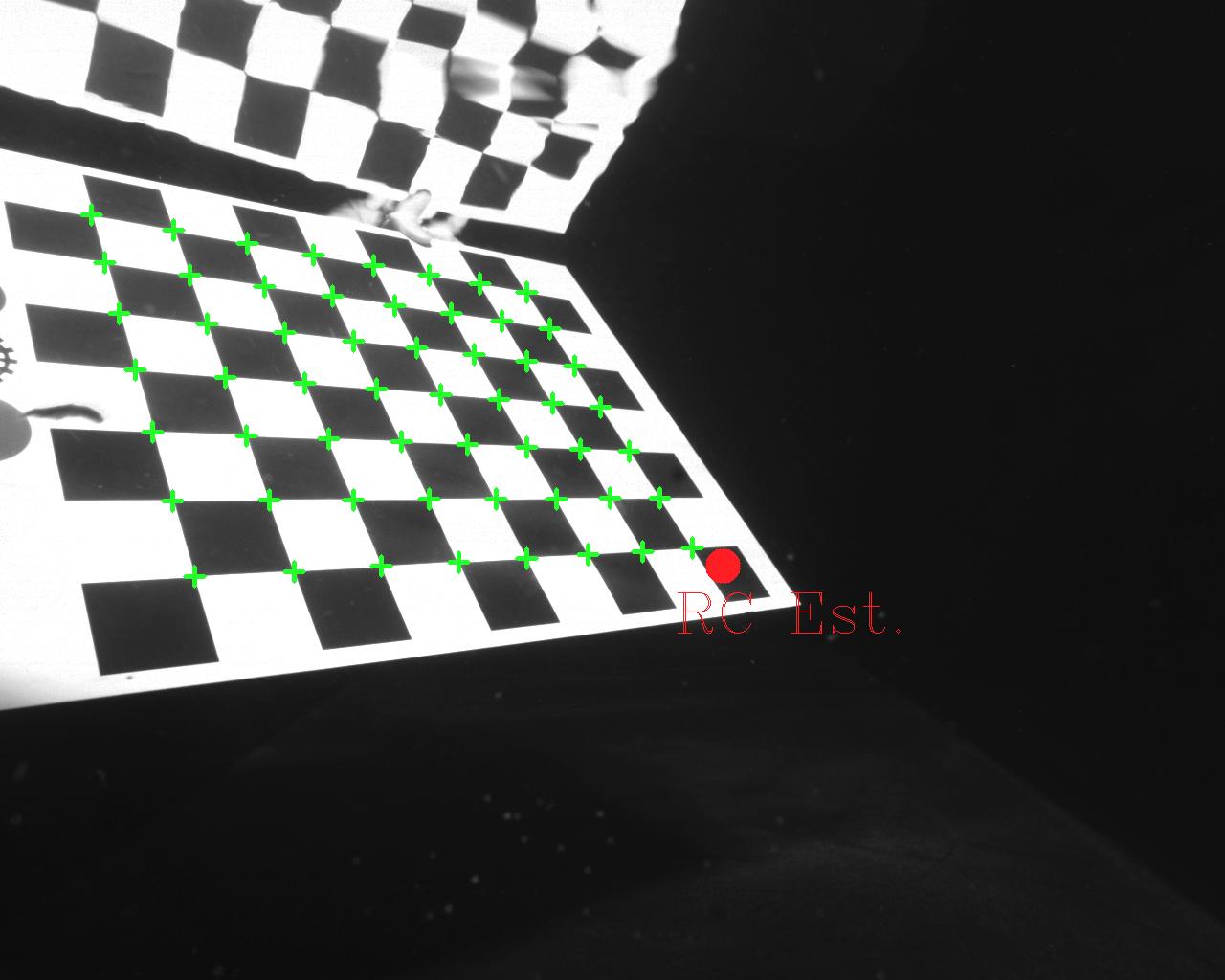}
			\includegraphics[width=0.24\textwidth]{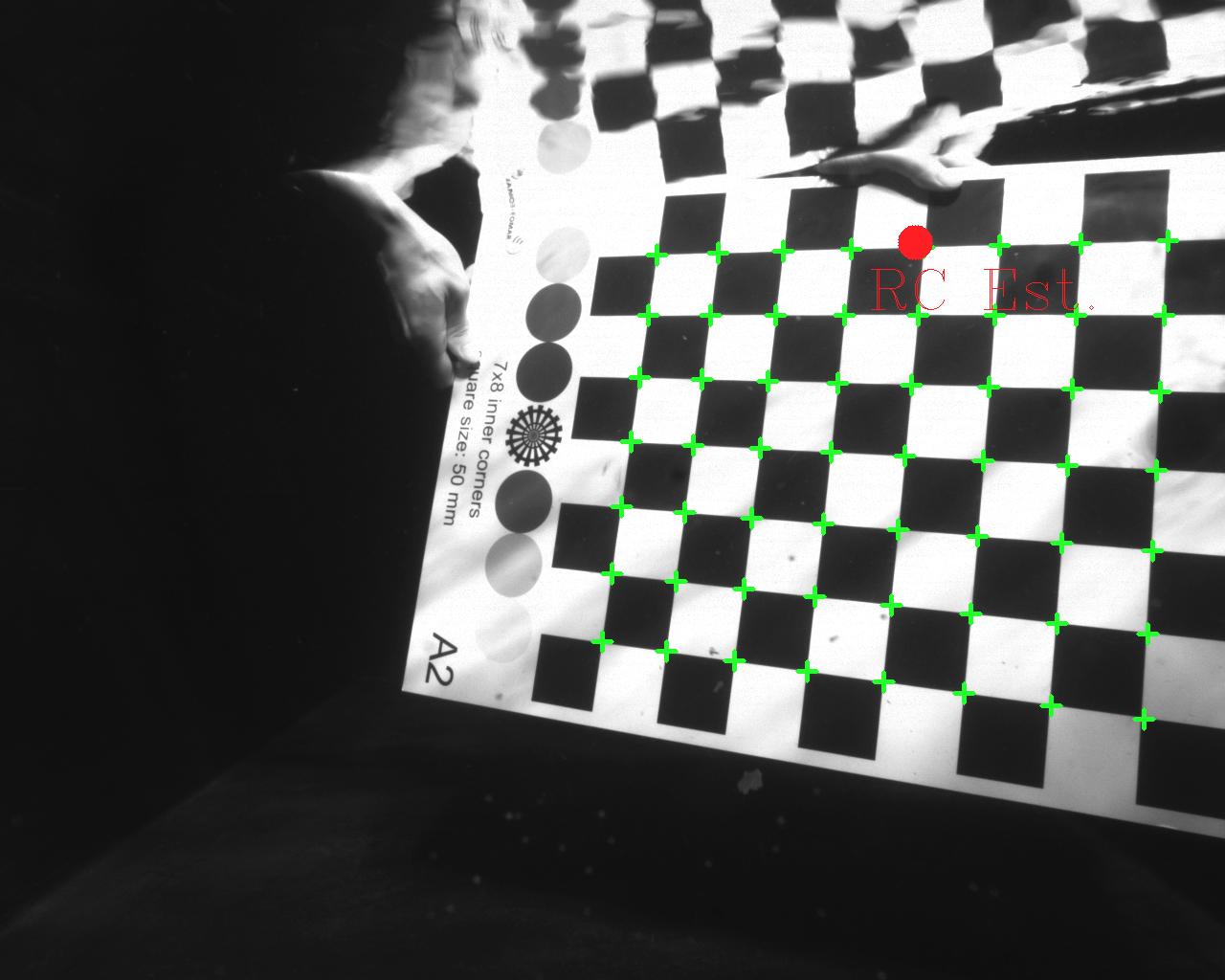}
			\label{fig:eval_real_test_set_a}
		}\newline
		\subfloat[Images used for validation]
		{
			\includegraphics[width=0.24\textwidth]{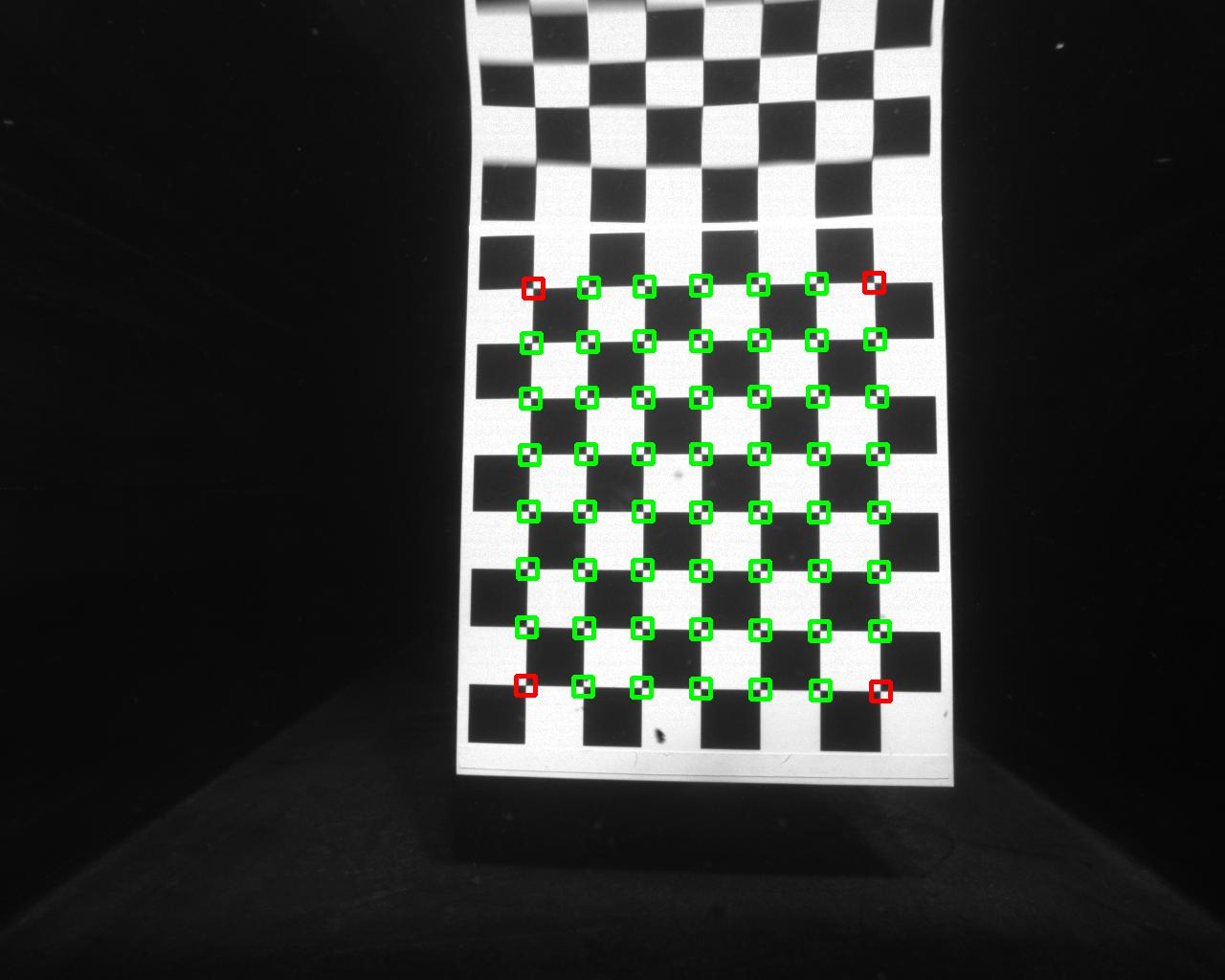}
			\includegraphics[width=0.24\textwidth]{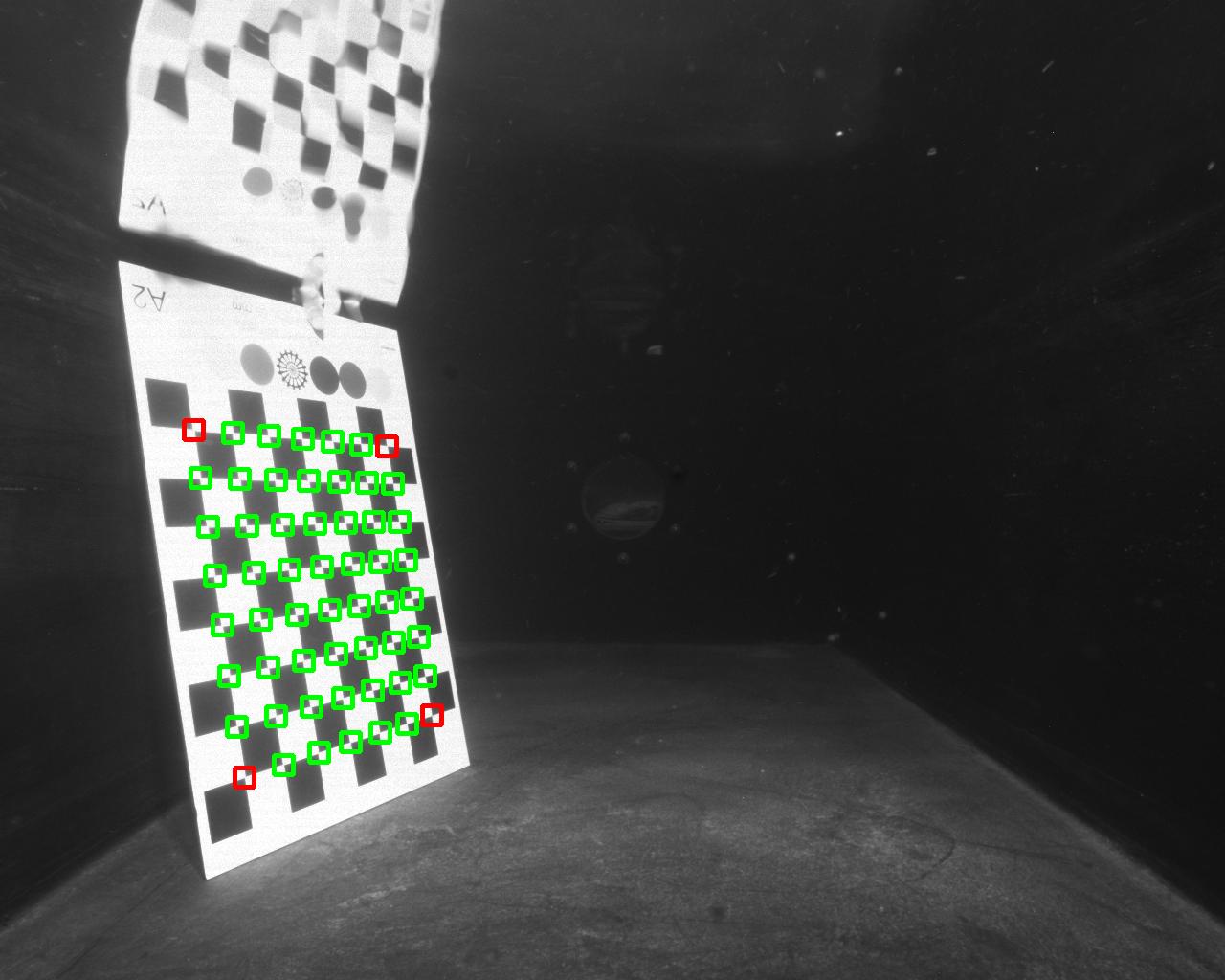}
			\includegraphics[width=0.24\textwidth]{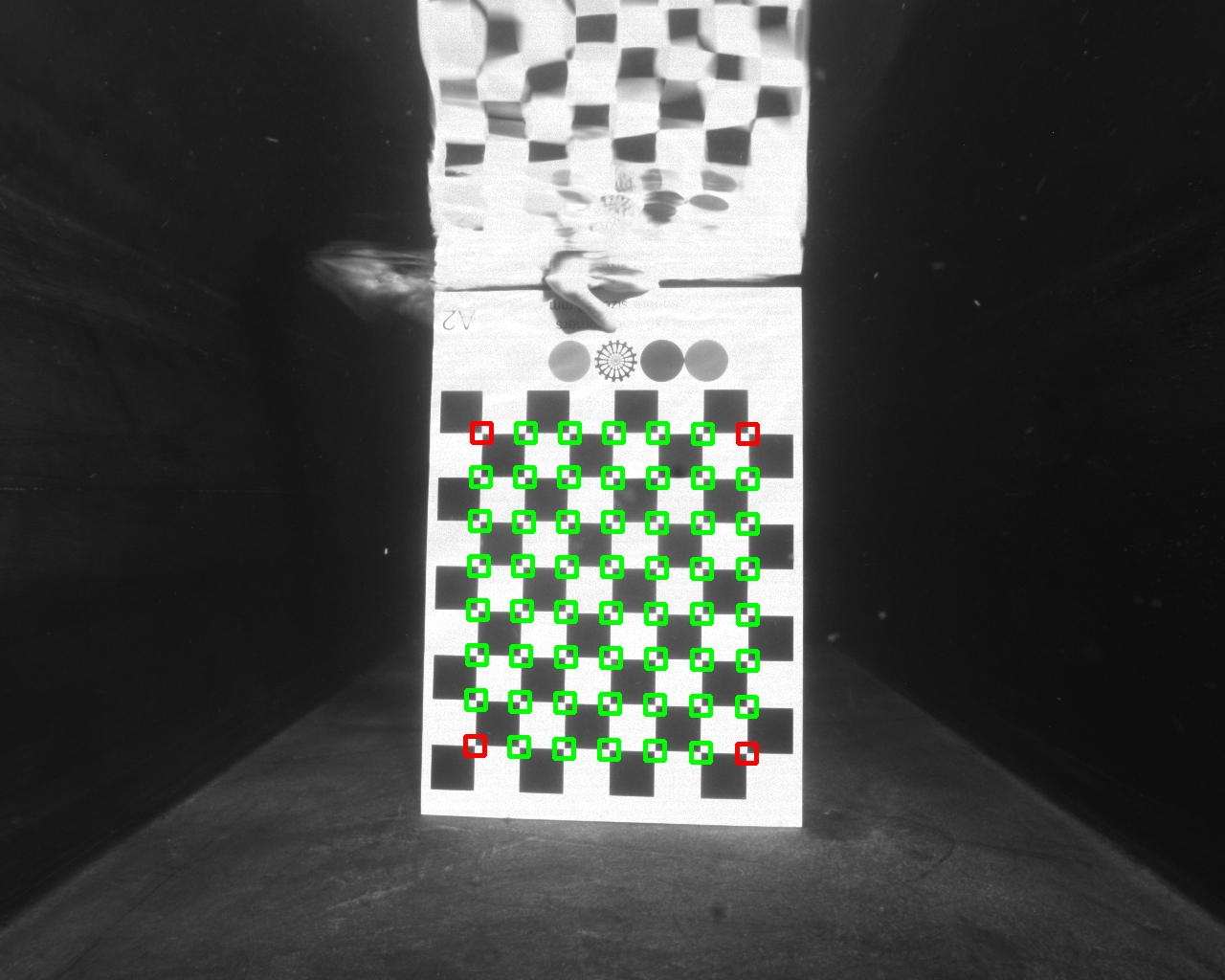}
			\includegraphics[width=0.24\textwidth]{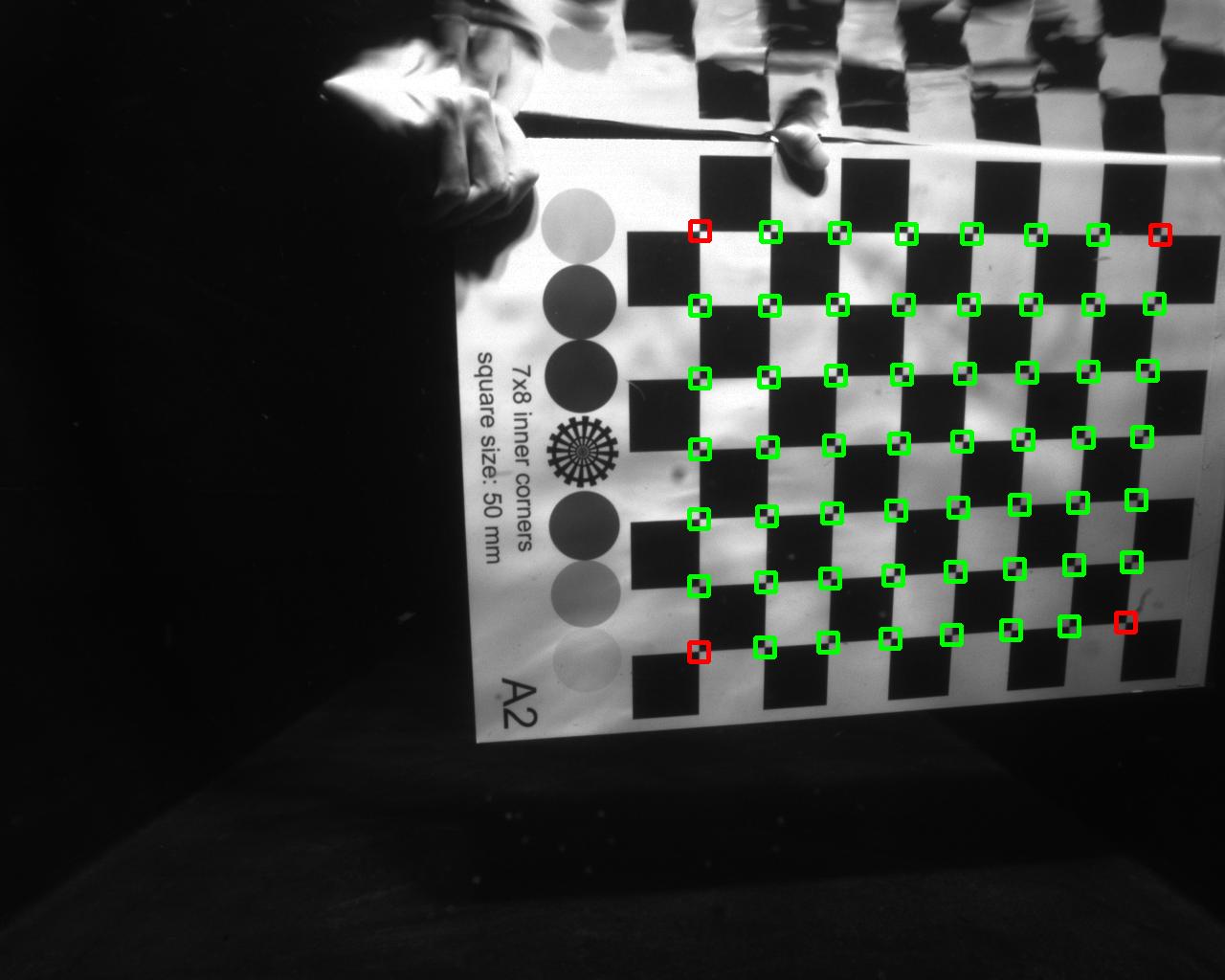}
			\label{fig:eval_real_test_set_b}
		}
	\end{center}
	\caption{Real-world evaluation results. (a), the results of the underwater decentering calibration in a water tank. (b), the validation results on the test set. The outermost 4 corners are marked as red squares, the inner reprojected corners are marked as green squares. }
	\label{fig:eval_real_test_set}
\end{figure*}

The first method is similar to the validation method suggested in \cite{peng2019calibration}, which is faced with a similar challenge. We separated our dataset into a calibration set and a validation set. The calibration set was used to perform the evaluation as described above whereas the latter one was used for validating the estimated decentering vector. For validation, the 4 outermost 3D-2D pairs of chessboard corners were utilized to compute the relative transformation of the camera with respect to the chessboard while keeping other parameters constant, then the remaining chessboard corners were projected and the average reprojection error was computed. The reason we used the outermost 4 corners for pose estimation is that the camera poses should be measured independently. There are 24 images for validation, thus there are $24 \times 52 = 1248$ points in total, and the average reprojection error was 0.611 pixels. Some example images are selected to be visualized in Fig. \ref{fig:eval_real_test_set_b}. Since localization with only 4 pairs of correspondences is less accurate than using all pairs, thus, below 1 pixel validation error can already be considered as satisfactory, from which we conclude that the decentring was estimated accurately.

\begin{figure}
	\begin{center}
		\subfloat[Photos of the setup]
		{
			\begin{minipage}{0.7\textwidth}
				\includegraphics[width=0.49\columnwidth]{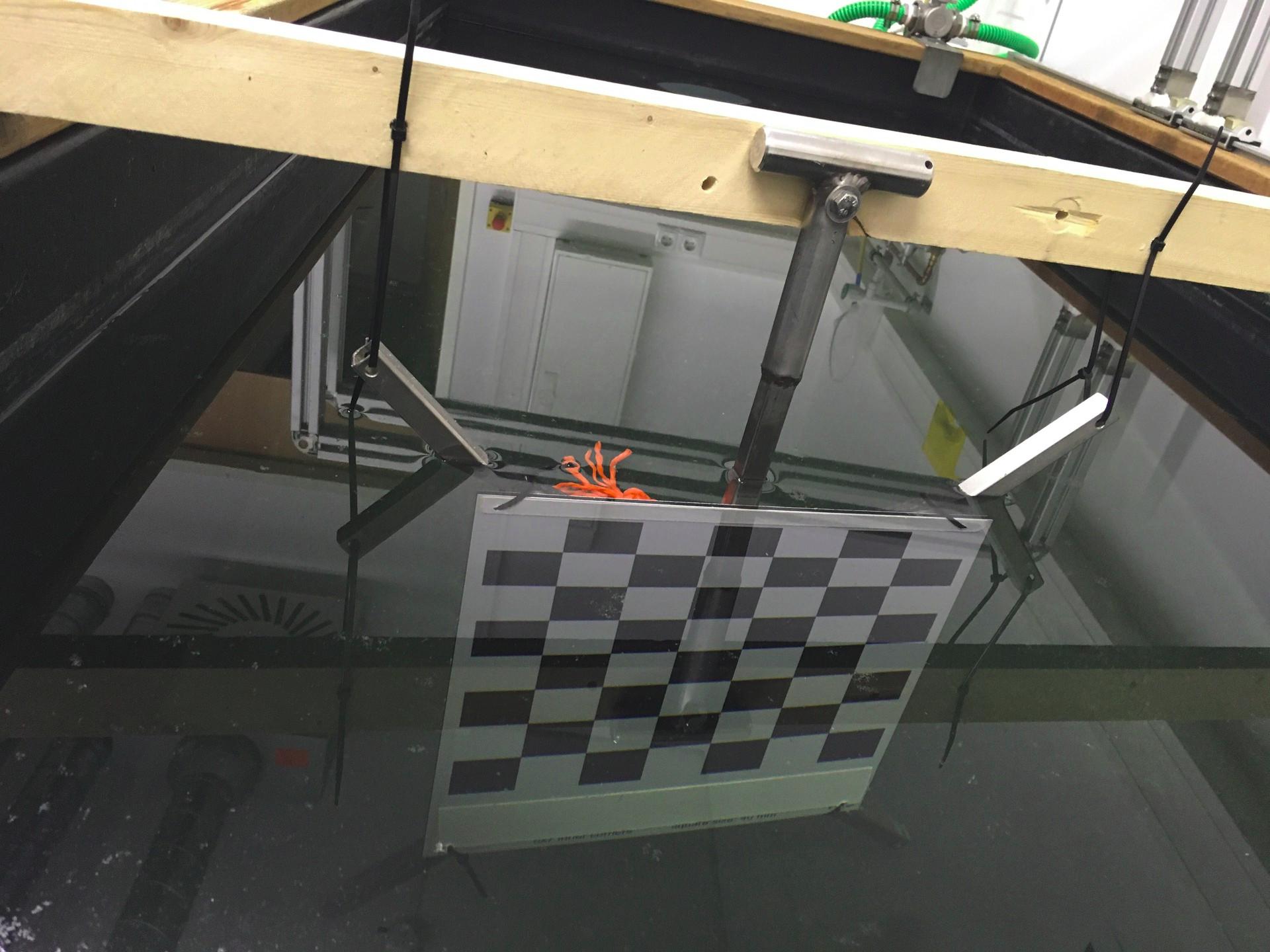}
				\includegraphics[width=0.49\columnwidth]{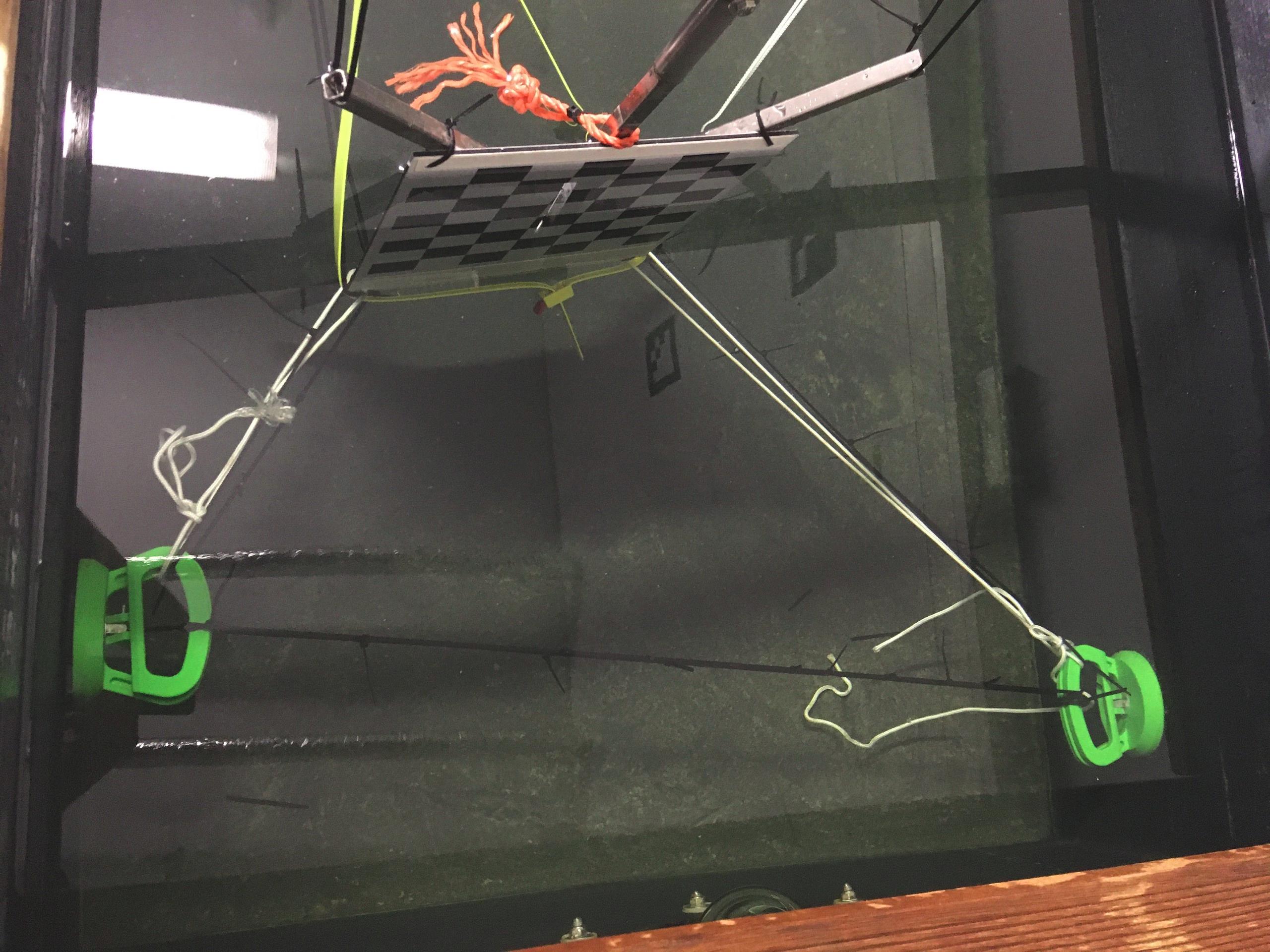}
			\end{minipage}
			\label{fig:airwater_validate_setup_a}
		}
	\end{center}
	\begin{center}
		\subfloat[Chessboard images captured in the air and underwater]
		{	\begin{minipage}{0.7\textwidth}
				\includegraphics[width=0.49\columnwidth]{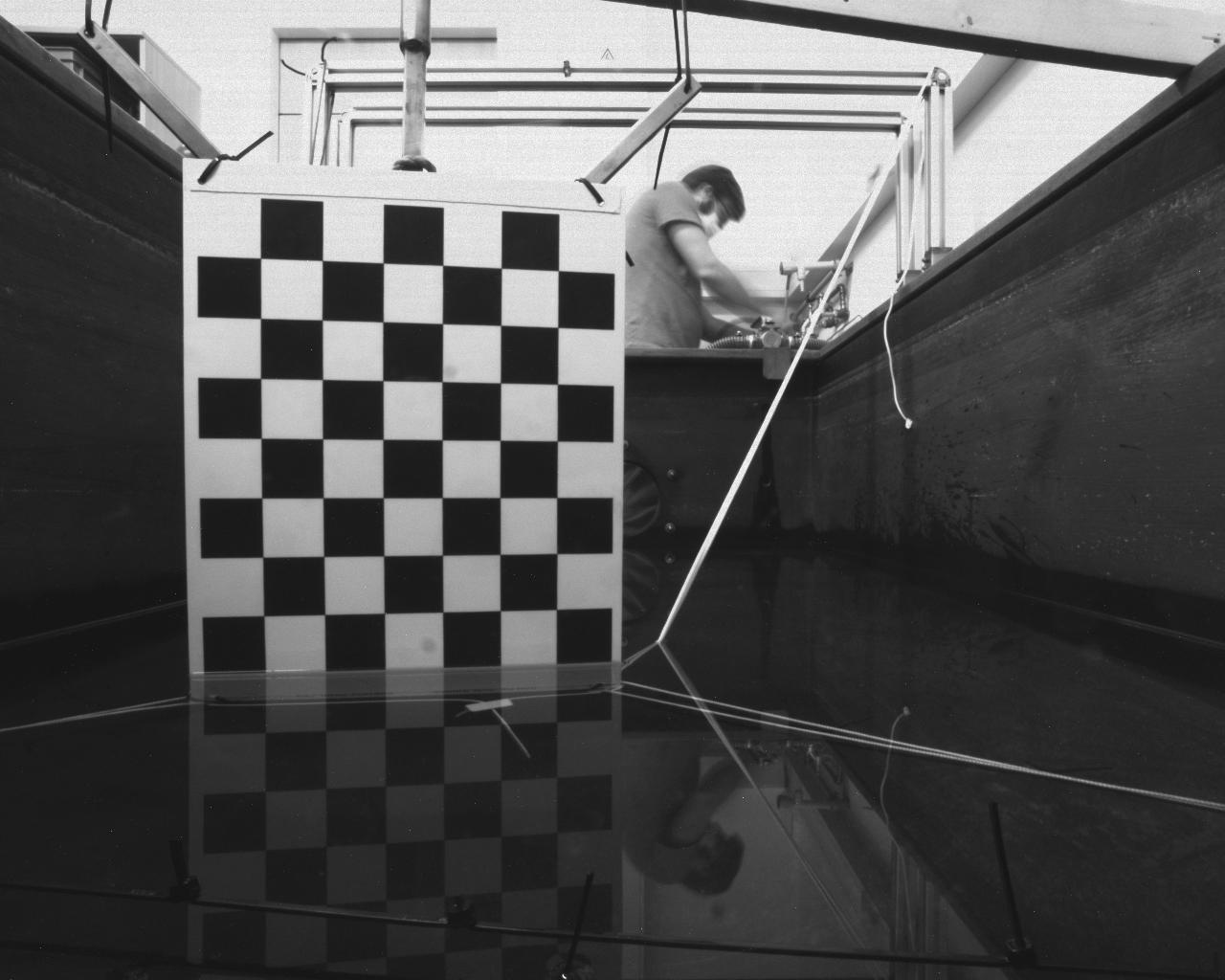}
				\includegraphics[width=0.49\columnwidth]{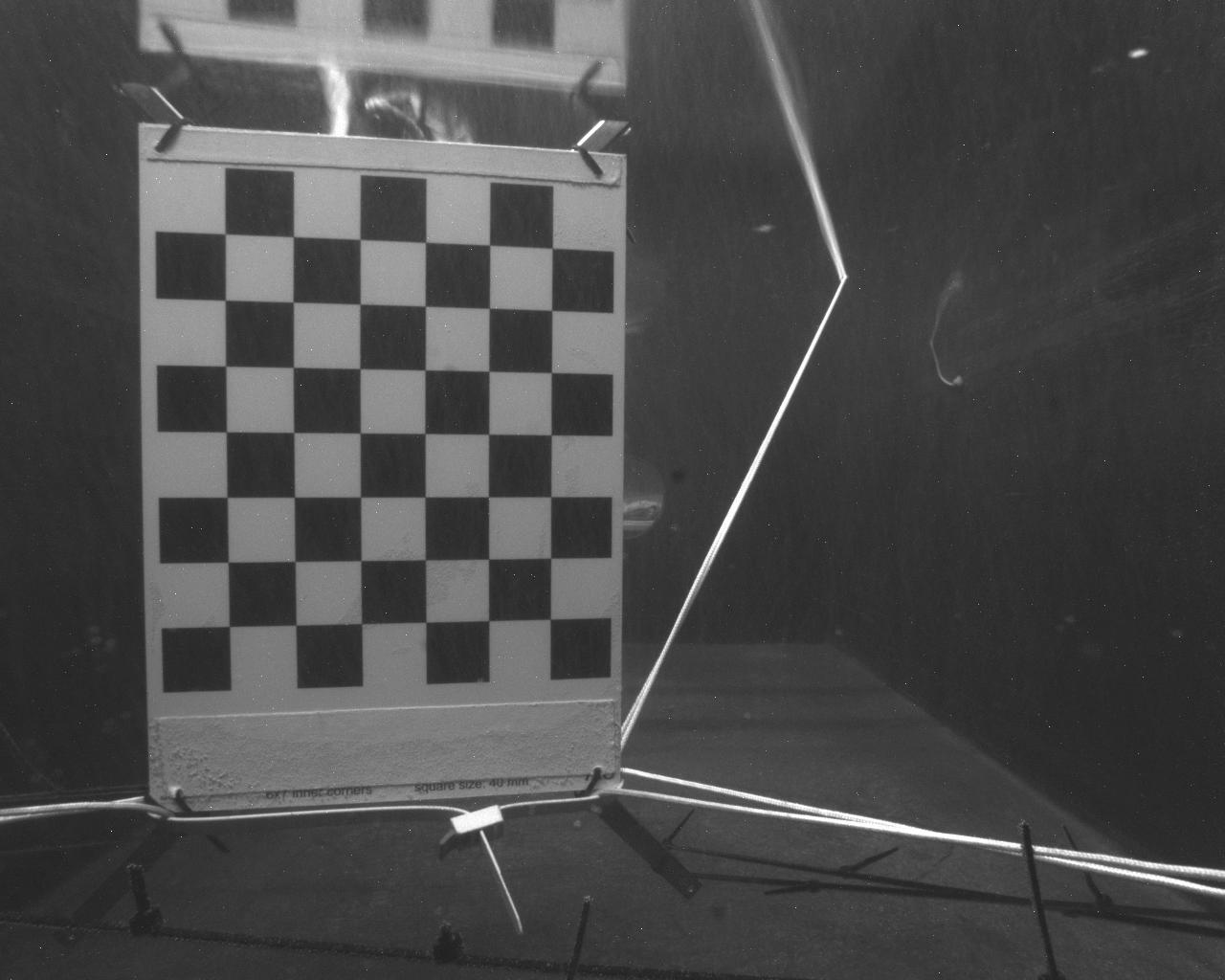}
			\end{minipage}
		}
	\end{center}
	\caption{Experimental setup for in-air/underwater image pair validation.}
	\label{fig:airwater_validate_setup}
\end{figure}

However, the relative transformation estimation could still partially remedy an estimation error in the decentering vector, therefore, we devised a second, complementary, validation experiment. We took a pair of photos of the chessboard underwater and in air, respectively, while keeping the chessboard position and orientation fixed. The experimental setup and the captured images are shown in Fig. \ref{fig:airwater_validate_setup}. Given the calibrated dome port camera system, we first used the in-air photo for estimating the chessboard pose, which can be denoted as $\q P_{\mathrm{air}}$. Next, we used the underwater photo to estimate the chessboard pose again, which we denote as $\q P_{\mathrm{unw}}$. Theoretically, the two estimated poses should be exactly the same as we assumed that the chessboard position and orientation had not changed while taking the image pair.

Consequently, we can measure the relative pose error between the two poses to validate the calibration results. The measured pose error in the translation component was $0.0107m$. It should be mentioned that when computing the in-air chessboard pose, we considered the refraction pattern when light rays travel through the air-glass-air interfaces. Afterwards, we did the reciprocal reprojection validation, where we projected the 3D chessboard corners onto the underwater image using the in-air estimated pose $\mq P_\mathrm{air}$ and then onto the in-air image using the underwater estimated pose $\mq P_\mathrm{unw}$. 

The average corner displacement induced by the refraction effect between the in-air and underwater image is 25.24 pixels. When considering refraction, as obtained from the disjoint set of calibration images, the measured distance is reduced to 0.59 and 0.62 pixels, respectively, which can be explained by unavoidable non-rigidity of the setup when filling the tank with water and some other potential sources of uncertainty as we will outline in the discussion. The results are displayed in Fig. \ref{fig:airwater_validate_results}.  Overall, the reduction of error and the previous validation show that the methods proposed are valid and can be applied in practice.

\begin{figure}
	\begin{center}
		\includegraphics[width=0.49\linewidth]{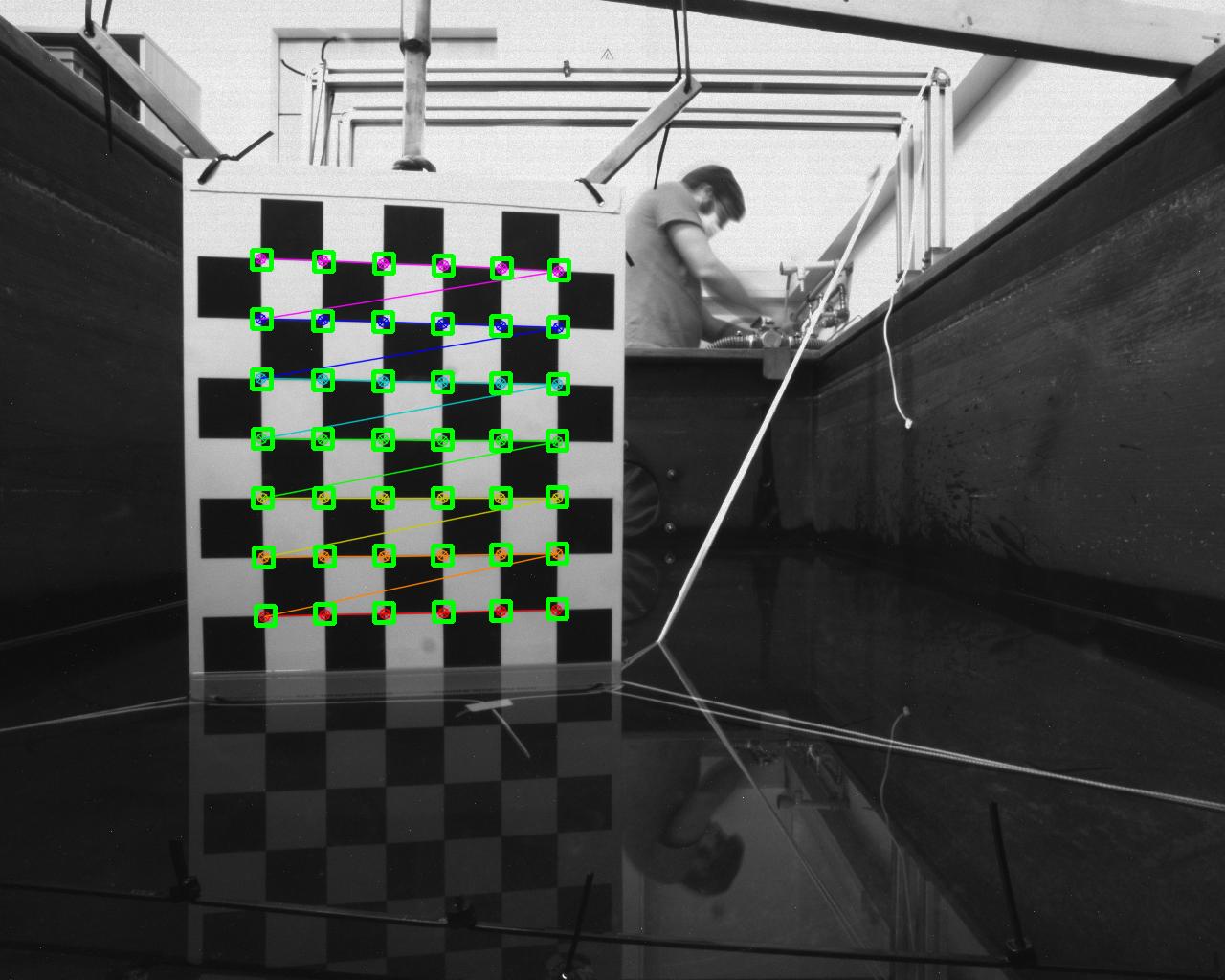}\vspace{0.8ex}
		\includegraphics[width=0.49\linewidth]{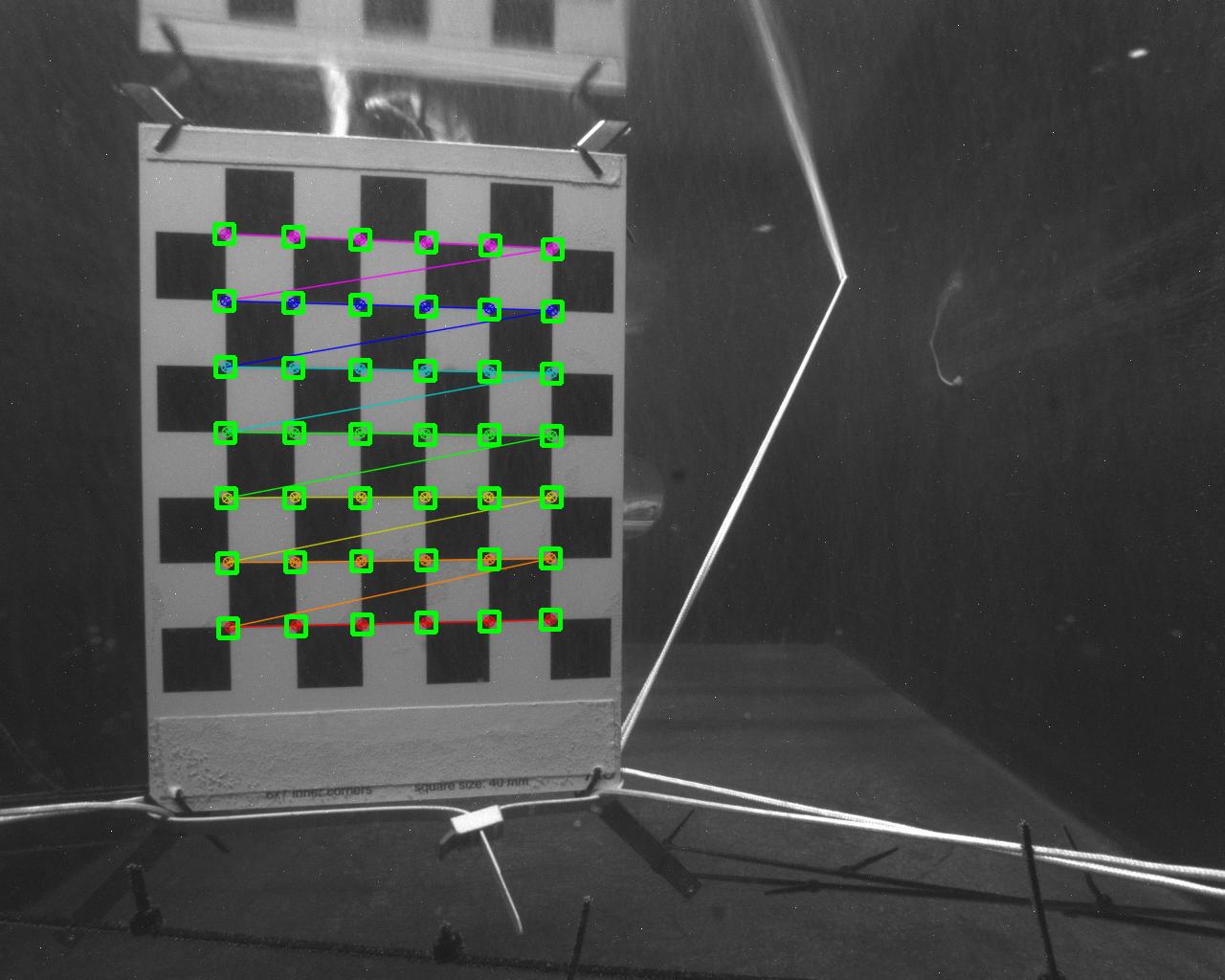}
	\end{center}
	\caption{Reciprocal reprojection validation results on the in-air/underwater image pair. }
	\label{fig:airwater_validate_results}
\end{figure}

\subsubsection*{Practical Calibration of an AUV Camera}
Finally, and this was also the motivation of this work in the first place, the insights and the developed techniques have been applied to a newly developed AUV camera for ocean research. This machine vision camera has a wide angle lens inside of a 50mm dome port and a pressure housing. We have first calibrated the camera in air, before we applied the mechanical adjustment method as proposed in \cite{she2019adjustment} until no refraction effects were observable. Afterwards, the entire system was submerged in a tank and chessboard images were recorded.
The images were undistorted according to the in-air calibration, then we apply the steps as reported in the previous sections. We obtain  a calibration residual of 0.299 pixels for a decentering estimate of $\q v_{\mathrm{off}} = (-0.36, 0.14, 0.27) mm$, which means less than half a millimeter decentering in total.
The camera system, sample calibration images with reprojected corners, as well as the AUV are displayed in Fig. \ref{fig:antoncam}. 

\section{Discussion}

\begin{figure}
	\begin{center}
		\subfloat[Strong refraction]{
			\includegraphics[width=0.49\linewidth]{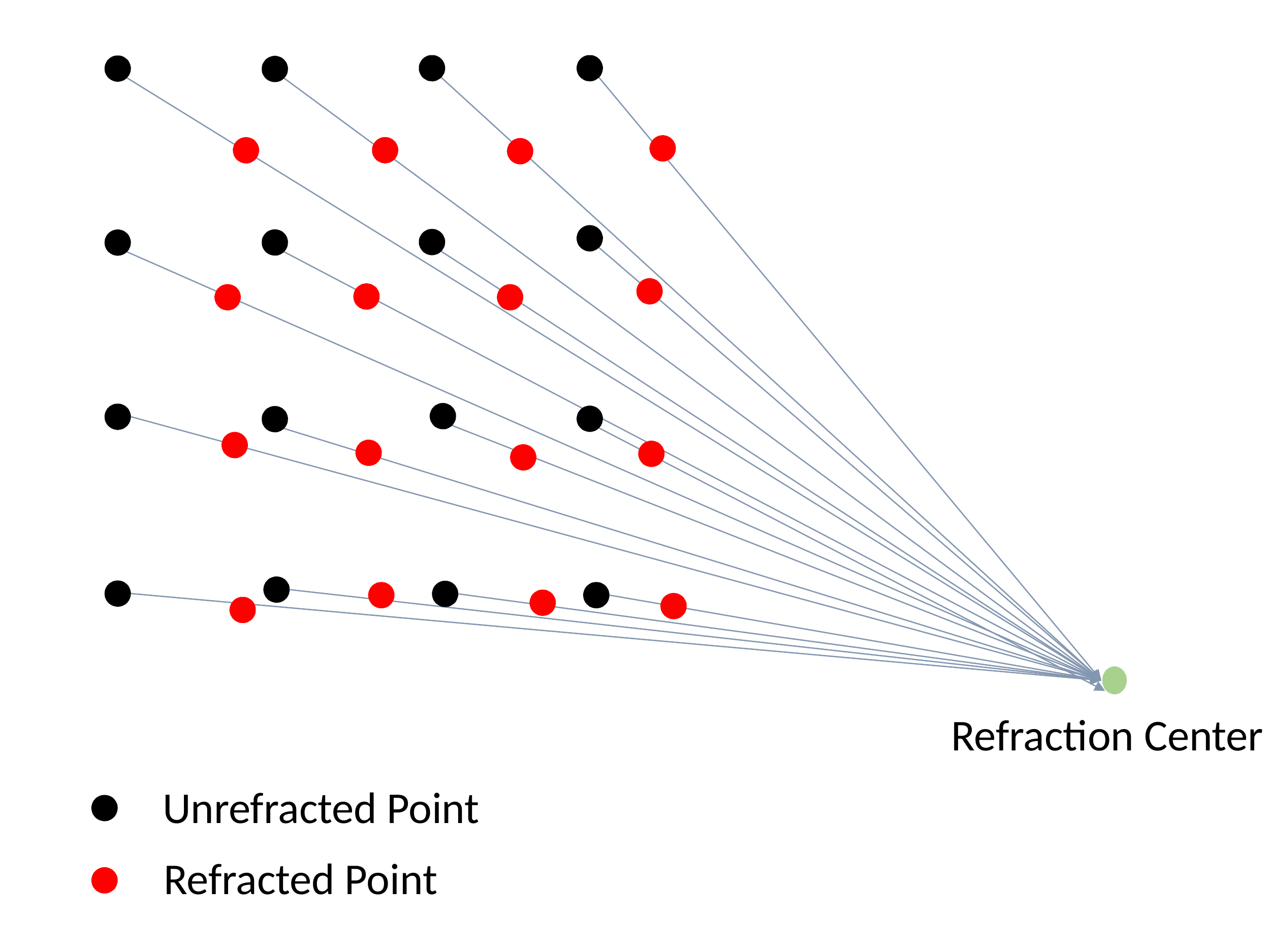}
			\includegraphics[width=0.49\linewidth]{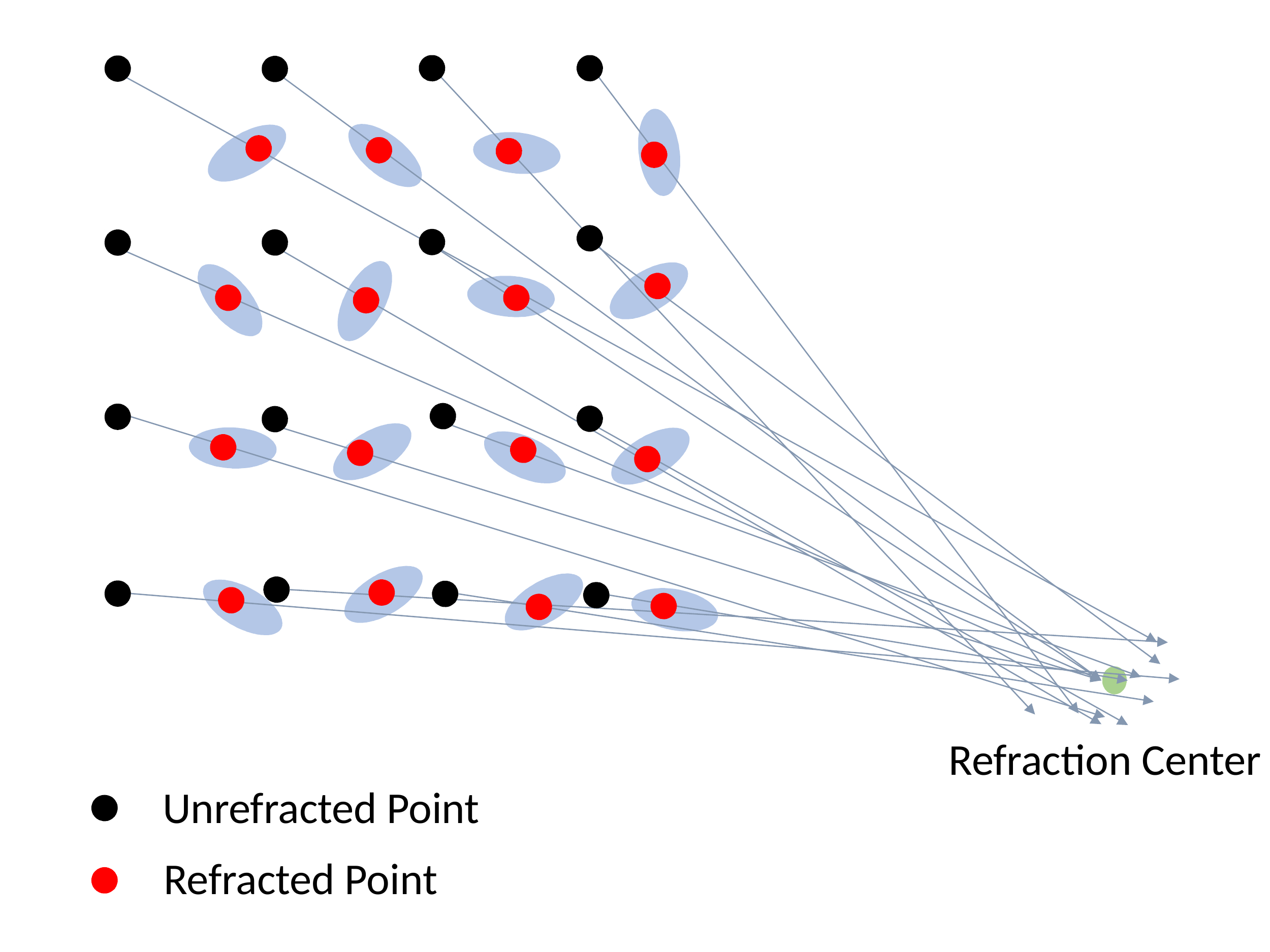}	
		}
	\end{center}
	\begin{center}
		\subfloat[Weak refraction]{
			\includegraphics[width=0.49\linewidth]{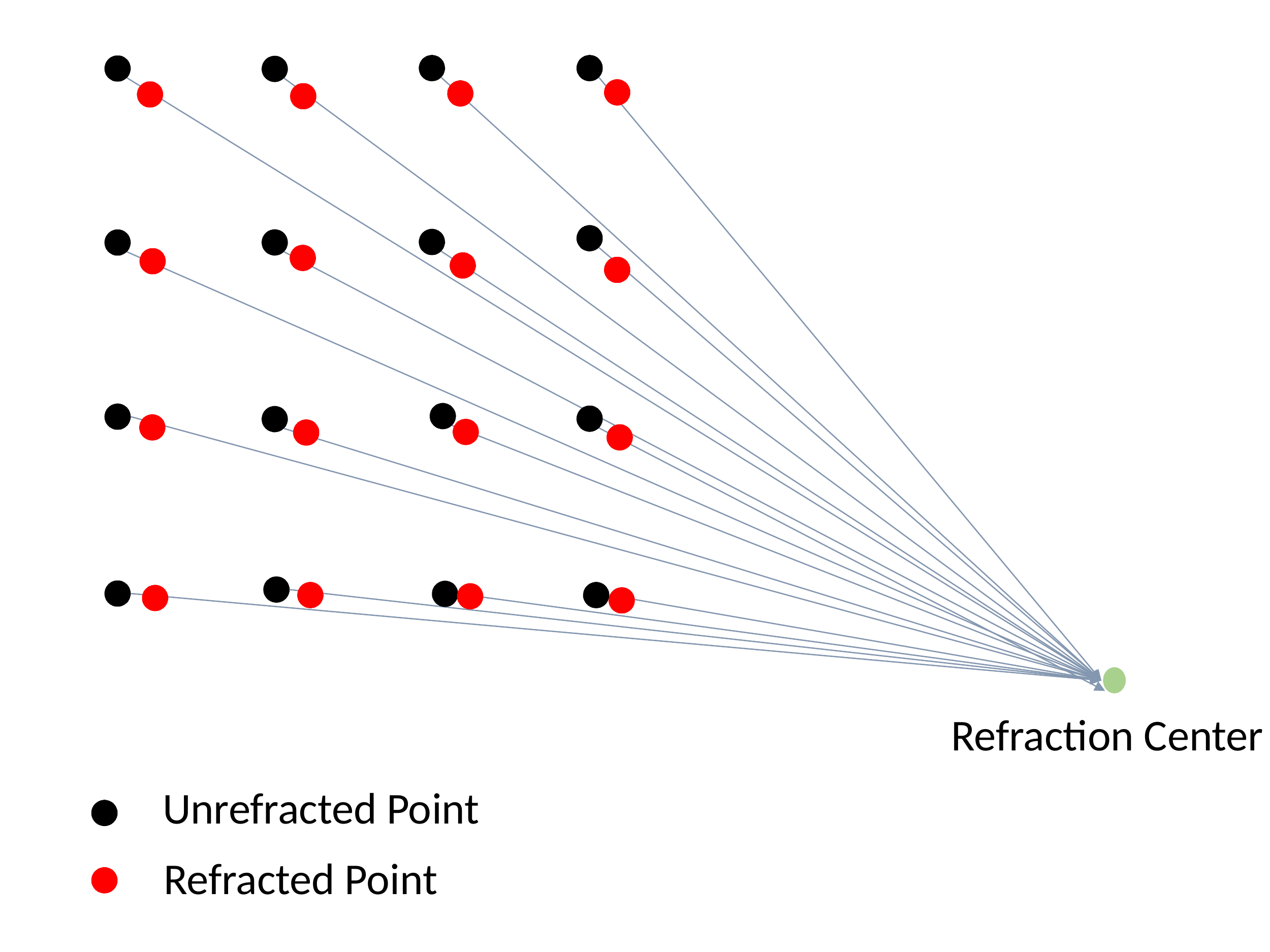}
			\includegraphics[width=0.49\linewidth]{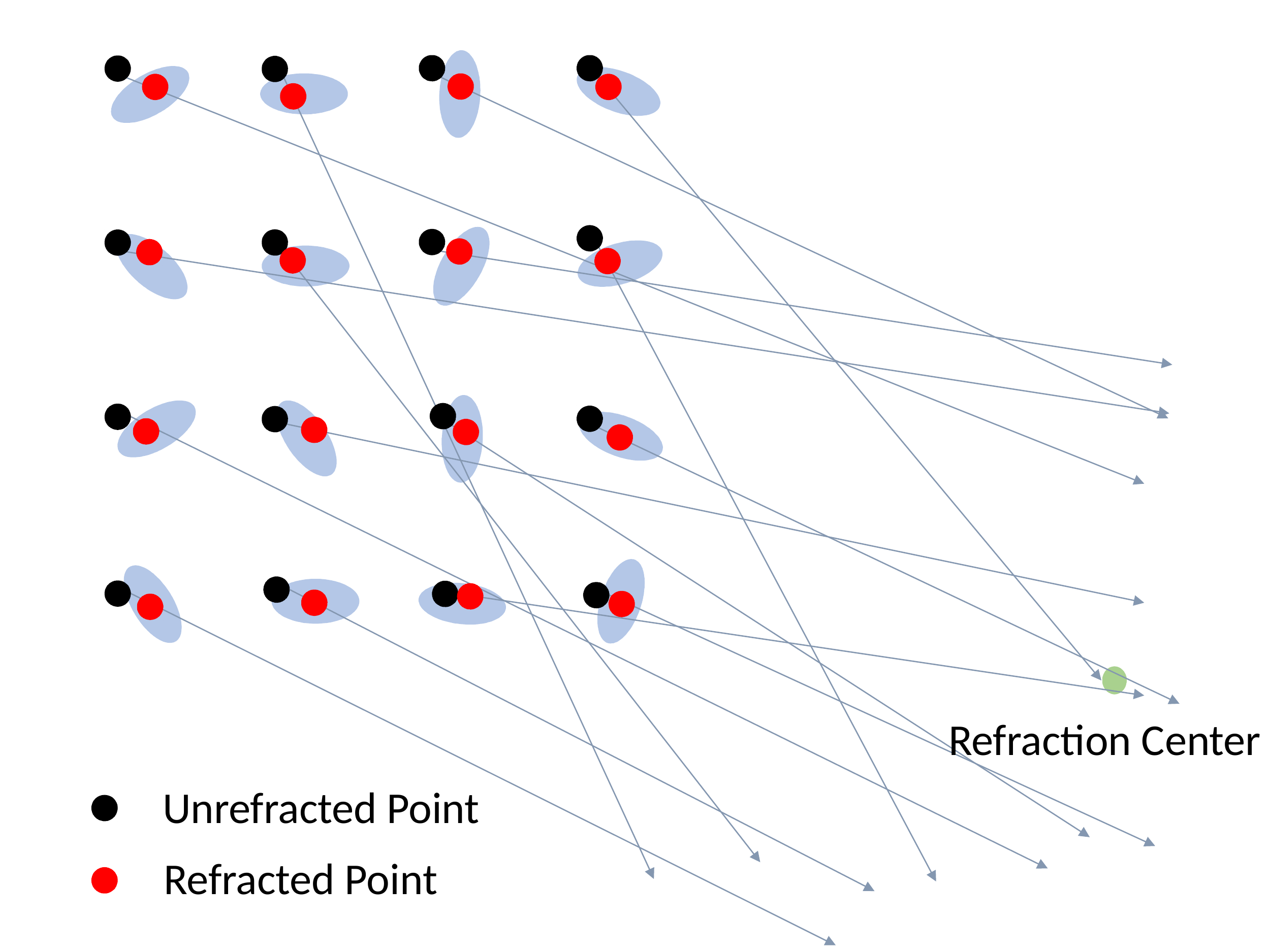}
			\label{fig:displace_noise_b}
		}
	\end{center}
	\label{fig:displace_noise}
	\caption{(a), When there is strong refraction, noise does not break the refractive colinear constraint, and the \textit{Homography Mapping Error} is high. (b), When the refraction is weak, noise can severely impair the refractive colinear constraint, which affects the calibration of the decentering, and the refraction center. In this case, the \textit{Homography Mapping Error} is low. The gray ellipses represent the uncertainties of the chessboard corner detection. For illustration purposes, the uncertainties are exaggerated.}
\end{figure}

%\subsection{Limitation and Failure Cases}
As the proposed geometrical insights and the decentering calibration approach rely on refraction effects caused by a decentered camera photographing through a spherical interface, in this section we would like to report some lessons learned.
\paragraph{Homography Mapping Error}
In the classical approach of using a chessboard for calibration, which we also follow in this manuscript, we have to estimate a chessboard pose for each calibration image, on top of the actually desired refraction parameters.
If we can directly compute a homography between an underwater image and the chessboard pattern, and the residual error for the homography is below the corner detector noise, this means that the center of refraction is effectively not observable in this calibration image, i.e. refraction effects are \emph{drowned} in noise (see Fig. \ref{fig:displace_noise_b}).
Typically, this particular scenario happens when the decentering is very small or the chessboard corners in the image are located close to the refraction center (low refraction effects, see section \ref{sec:geometry}).
Also, in case the refracted image point $\q x_r$ and the un-refracted image point $\q x_a$ are related by a 2D similarity transformation
\begin{equation}
\q x_r \simeq \mq H_s \; \q x_a ,
\end{equation}
then the overall relation between refracted image coordinates and 3D chessboard points becomes
\begin{equation}
\q x_r \simeq \mq H_s \mq H \; \q x_c \simeq \mq H^{\prime} \; \q x_c .
\end{equation}
This means that the overall homography $\mq H^{\prime}$ can absorb all refraction effects and the equation system arising from eq.\ref{eq:festimation} is underconstrained (ambiguous).

To detect and avoid these situations, we define the \textit{Homography Mapping Error} as the residual of a homography mapping from the 3D points to the image points. 
We recommend users to monitor the Homography Mapping Error for each calibration image, and if it is low (compared to the noise level), to place the chessboard away from the refraction axis where we can see stronger effects (larger signal-to-noise-ratio). Additionally, the \textit{Homography Mapping Error} can also be used to pre-select or to weight images for decentering calibration.

\paragraph{Non-Single-View-Point Camera}
The geometry in this paper is analyzed based on the assumption that the camera has a single center of projection. For non-single-view-point lenses such as fisheye lenses, there is no perfect pinhole but a caustic where all light rays pass through. Centering such a lens with the dome port is in principle difficult as we can only bring one point to the dome center. 

\paragraph{Uncertainty of Intrinsic Calibration}
It is well known that the principal point of a camera is difficult to observe in air \citep{pollefeys1999self,hartley2002sensitivity}.
Conversely, this also means of course that it does not have a big impact on 3D mapping in air.
Underwater however, the situation is different: Since the principal point is the intersection point of the optical axis with the image plane, the ray direction and refraction for every pixel will be impacted once the principal point changes, creating a different refraction pattern. Chessboard-based experiments to calibrate both the camera intrinsics and the decentering jointly failed, although we can achieve a lower calibration residual, the estimated parameters were off from the ground truth. Those parameters are correlated and probably more powerful (non-flat) calibration targets are required to better constrain the principal point. As in this contribution we discuss refraction effects, calibration of intrinsics is beyond the scope of this paper, but high-precision principal point calibration using underwater refraction could be an interesting option for future research.

\paragraph{Advantage over In-Air/Underwater Pair Calibration}
In order to obtain independent measurements of the decentering we have performed the method proposed in \cite{she2019adjustment} that is based on a pair of images, one taken in water, the other one in air, but with the same pose.
While this method works in principle, the accuracy is limited by physical constraints.

When trying to bring the air and water pair into as good as possible agreement, we realized that it was extremely difficult to keep the chessboard steady when capturing the in-air/underwater image pair during the experiment. When the water is injected into the pool, waves are generated at the water surface, making the checkerboard unstable. In addition, we also found that the water pool deforms slightly when carrying more water due to the increasing pressure. Consequently, we had to spend substantial effort to keep the chessboard steady. As can be seen from Fig. \ref{fig:airwater_validate_setup_a}, we first firmly attached the chessboard to a metal frame and then attached them together on a wooden bar on top of the pool. Next, nylon ropes were used to connect the chessboard with the sidewall of the pool. Nevertheless, we still believe that there is half a pixel error due to non-rigidity of the setup. We therefore conclude that the in-air/underwater pair method is limited in the achievable accuracy. The pure underwater calibration suggested in this paper is more practical, as it can be cumbersome in some cases to bring the chessboard and camera (e.g. attached on a robot) from water into air without changing relative pose, while in the proposed method an underwater camera just has to be submerged, as shown in Fig. \ref{fig:antoncam} for a camera of an AUV. Other cameras as shown in Fig. \ref{fig:domeport} can also be calibrated using this method even in the ocean.

\begin{figure}
	\begin{center}
		\includegraphics[height=3.78cm]{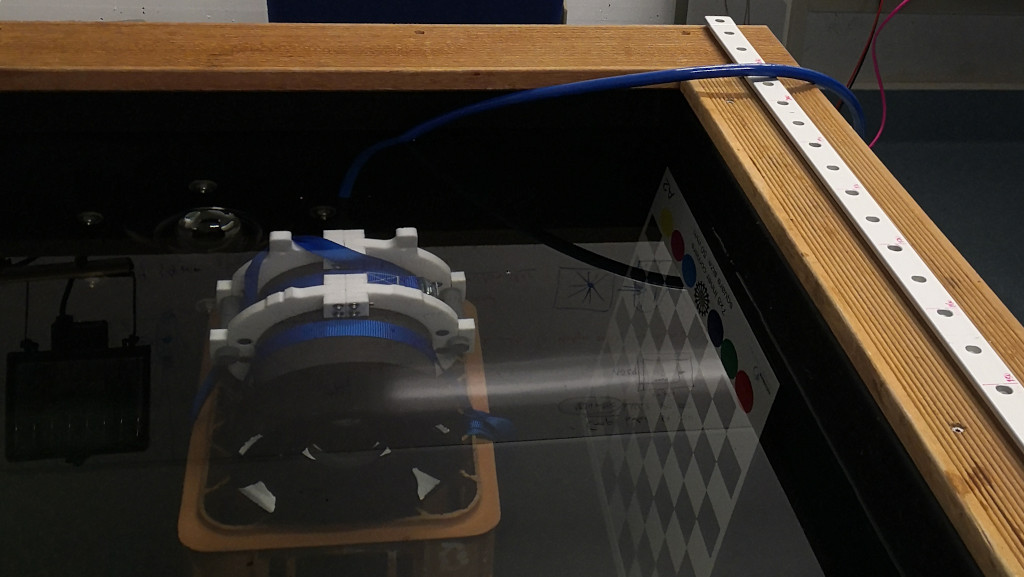}\vspace{0.1ex}
		\includegraphics[height=3.78cm]{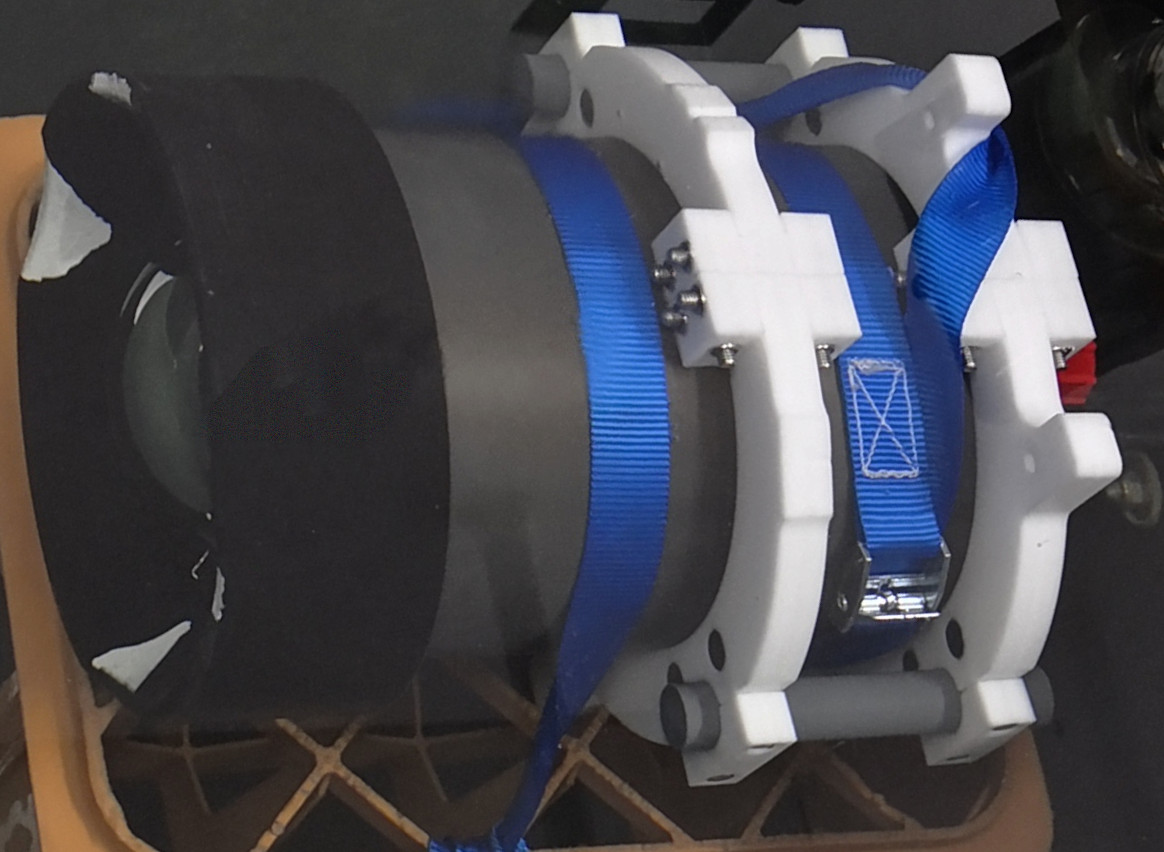}\vspace{0.1ex}
		\includegraphics[width=0.49\linewidth]{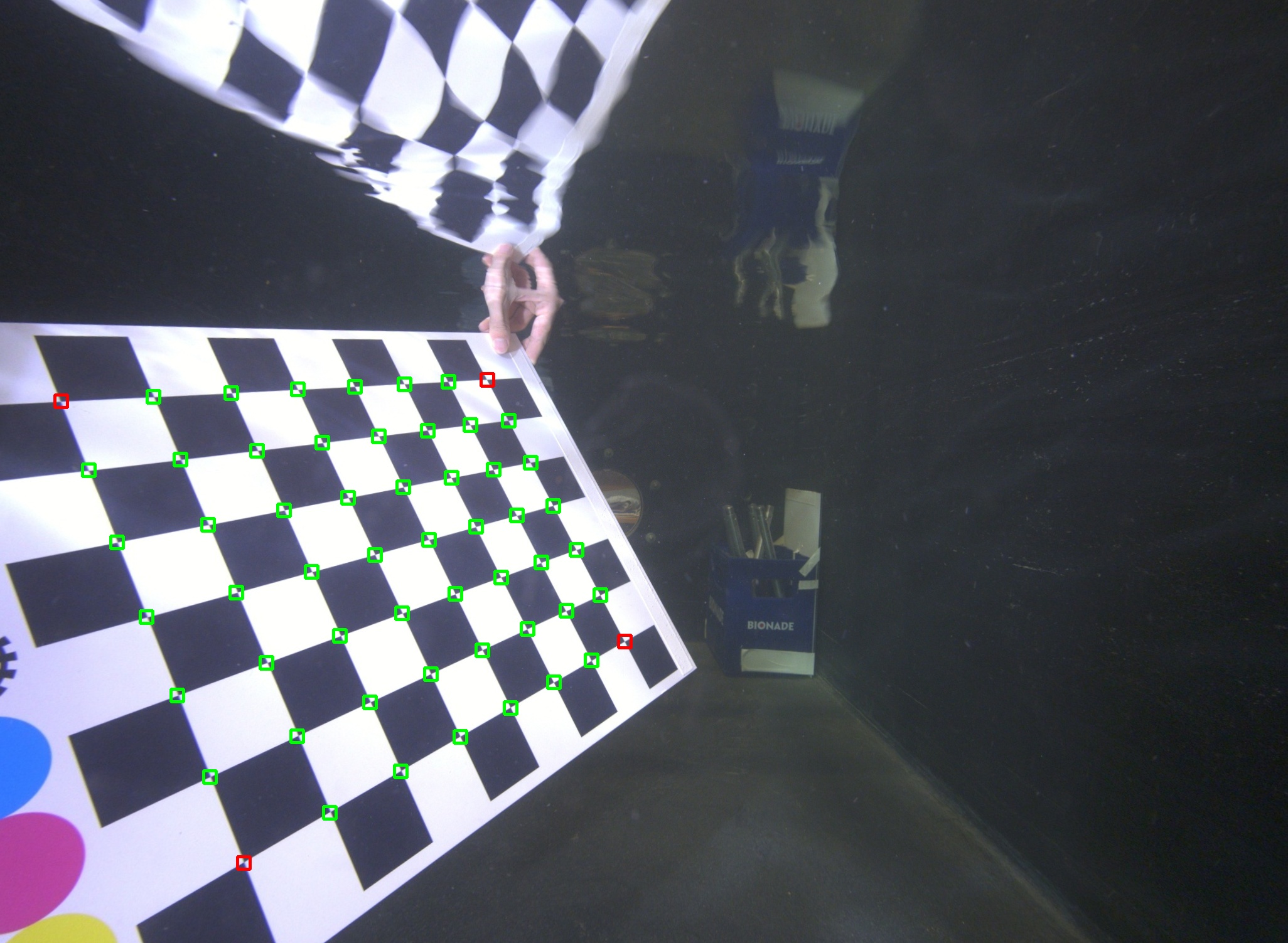}\vspace{0.1ex}
		\includegraphics[width=0.49\linewidth]{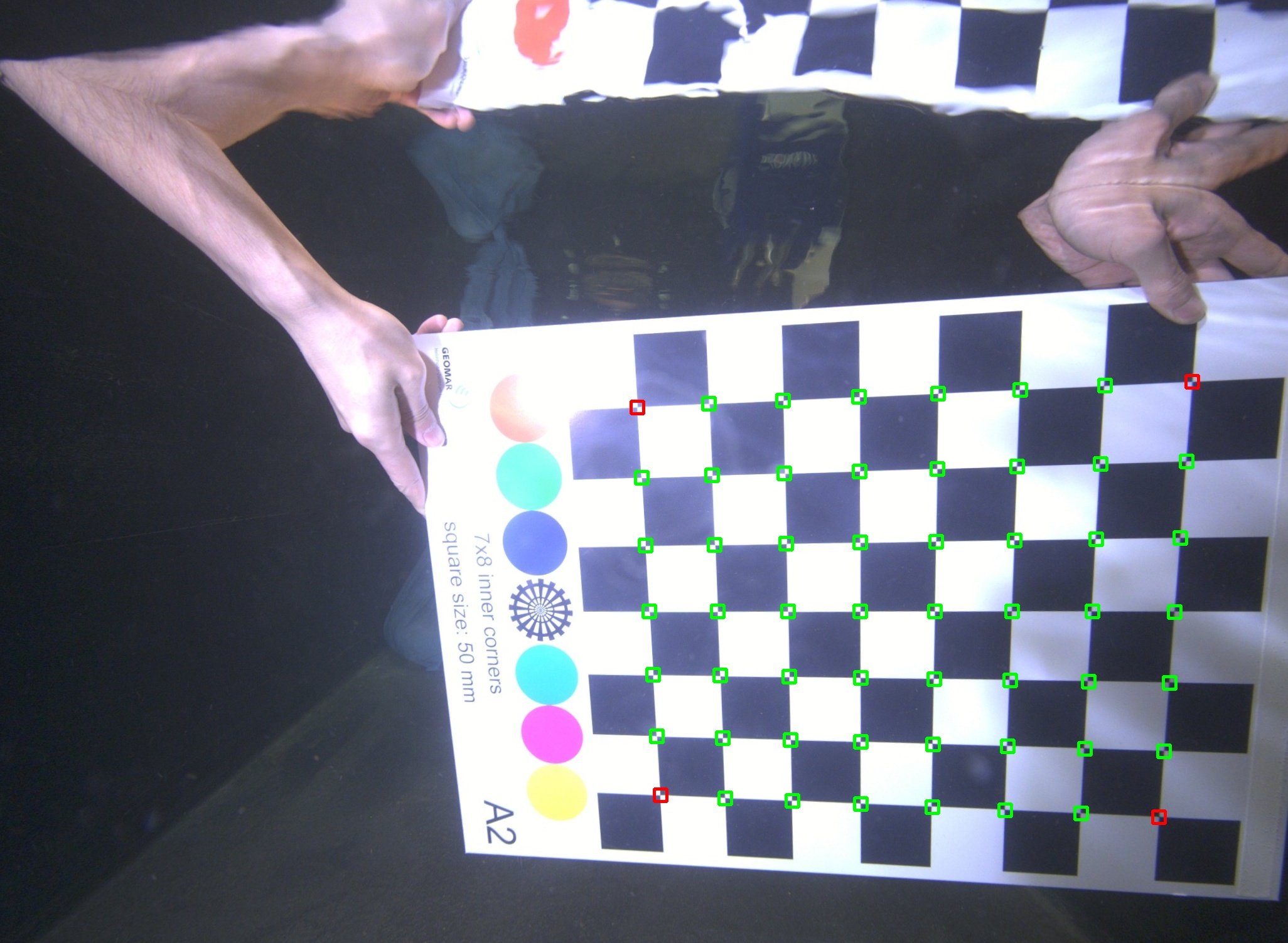}\vspace{0.18ex}
		\includegraphics[width=.99\textwidth]{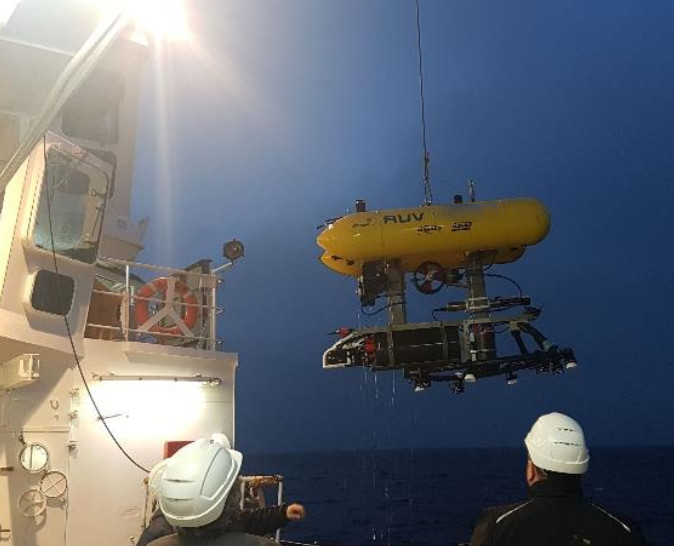}
	\end{center}
	\caption{Top: Camera detached from AUV, calibrated in a tank. Center: Sample images from refraction calibration. Bottom: AUV with camera shown after successfully completing visual mapping mission.\label{fig:antoncam}}
\end{figure}

\section{Conclusion}
In this paper, we have presented new insights into refraction effects caused by a decentered camera behind a spherical window. Somewhat similar to the flat refractive geometry case, the overall system acts as an axial camera, but here the axis intersects the sphere in two different poles that determine barrel- or pincushion-like refraction effects. Refraction happens along a line that connects the refraction center with the unrefracted projection, reducing the 2D search for projection to a 1D search. 

It was then shown how to directly estimate the refraction center from a single underwater chessboard image, and that it is possible to infer the decentering from underwater chessboard images only.
We then presented a novel practical approach to underwater calibration of dome port camera systems.

The approach can be used without a special pool with windows at the water level and does not need an in-air/underwater image pair, facilitating calibration also for bulky systems like submersibles.
The results obtained by our method are not only relevant for adjusting, but the remaining decentering offset can be considered when using the camera in practice, e.g. similar to refractive structure from motion with flat ports as proposed by Jordt et al. \cite{jordt_2016_refractive}.
In future works, we strive to integrate decentered dome port cameras into the state-of-art structure from motion pipeline to enable 3D reconstruction using dome port cameras, to facilitate a more detailed 3D error analysis, and also compare uncertainties and performance of different camera models.

\section*{Acknowledgment}
The authors would like to thank Dr. Amit Agrawal for offering generous help when deriving the analytical forward projection for the thin dome model. 
The authors also would like to thank Mareike Kampmeier for providing the AUV photos.
This publication has been funded by the German Research Foundation (Deutsche Forschungsgemeinschaft, DFG) Projektnummer 396311425, through the Emmy Noether Programme. 
Last but not least, the authors are also grateful for support from the Chinese Scholarship Council (CSC) for Mengkun She and Yifan Song.

\section*{References}

\bibliography{reference.bib}

\end{document}